\newtheorem{theorem}{Theorem}
\newtheorem{corollary}[theorem]{Corollary}
\newtheorem{lemma}[theorem]{Lemma}
\newtheorem{proposition}[theorem]{Proposition}
\theoremstyle{definition}
\newtheorem{definition}[theorem]{Definition}
\def\cX{\mathcal X}
\def\RR{\mathbb R}
\def\bE{\mathbf E}
\def\bx{\mathbf x}
\def\by{\mathbf y}
\def\bc{\mathbf c}
\def\L2p{{L^2_{\rho_{\!_\cX}}}}
\def\H{\mathcal H}
\def\calX{\mathcal X}
\def\calY{\mathcal Y}
\def\calE{\mathcal E}
\def\calH{\mathcal H}
\def\calN{\mathcal N}
\def\smath#1{\text{\scalebox{.8}{$#1$}}}
\def\sfrac#1#2{\smath{\frac{#1}{#2}}}
\begin{document}

\title{Optimal Rates of Distributed Regression with Imperfect Kernels}

\author{
{Hongwei Sun}\\
{\small Department of Mathematics} \\
{\small University of Jinan} \\
{\small Jinan, Shandong, China}\\
{\small Email: ss\_sunhw@ujn.edu.cn} \\
\\
{Qiang Wu} \\
{\small Department of Mathematical Sciences}\\
{\small Middle Tennessee State University}\\
{\small Murfreesboro, TN 37132, USA}\\
{\small Email: qwu@mtsu.edu}\\
}
\maketitle

\begin{abstract}
Distributed machine learning systems have been receiving increasing attentions
for their efficiency to process large scale data. Many distributed
frameworks have been proposed for different machine learning tasks. In this
paper, we study the distributed kernel regression via the divide and conquer
approach. The learning process consists of three stages. Firstly, the data is
partitioned into multiple subsets. Then a base kernel regression algorithm is
applied to each subset to learn a local regression model. Finally the local
models are averaged to generate the final regression model for the purpose of
predictive analytics or statistical inference. This approach has been proved
asymptotically minimax optimal if the kernel is perfectly selected so that the
true regression function lies in the associated reproducing kernel Hilbert
space. However, this is usually, if not always, impractical because kernels
that can only be selected via prior knowledge or a tuning process are hardly
perfect. Instead it is more common that the kernel is good enough but imperfect
in the sense that the true regression can be well approximated by but does not
lie exactly in the kernel space. We show distributed kernel
regression can still achieves capacity independent optimal rate in this case.
To this end, we first establish a general framework that allows to analyze distributed regression with
response weighted base algorithms  by bounding the error of such algorithms on a single data set,
provided that the error bounds has factored the impact of the unexplained variance of the response variable.
Then we perform a leave one out analysis of the kernel ridge regression and bias corrected kernel ridge regression,
which in combination with the aforementioned framework allows us to derive sharp error bounds and capacity independent optimal rates
for the associated distributed kernel regression algorithms.
As a byproduct of the thorough analysis, we also prove the kernel ridge  regression
can achieve rates faster than $O(N^{-1})$ (where $N$ is the sample size) in the noise free setting which, to our best knowledge, are first observed and novel in regression learning.
\end{abstract}

\section{Introduction}
\label{sec:intro}

Distributed machine learning systems have been receiving increasing attentions
for their efficiency to process large scale data. Many distributed frameworks
have been proposed for different machine learning tasks; see for instance
\cite{dean2004mapreduce, dobriban2020wonder, kraska2013mlbase, boyd2011distributed, xu2018optimal, 
zhang2015divide}. Among others, the divide and conquer approach has been proved
easy to implement but efficient for statistical estimation and predictive analytics. This approach is used when the data is too big to be analyzed by
one computer node and usually consists of three stages. First, the data is
randomly partitioned into multiple subsets. In some applications the data may
be naturally stored in different locations as a result of data collection
process and there is no need for further partitioning. Second, a base
algorithm is selected according to the learning task and applied to each subset
to learn a local model. Finally, all local models are averaged to generate the
final model. This approach is computationally efficient because the second stage can be easily parallelized. Also, because the local model training does not
require mutual communication between the computing nodes, it can largely
preserve privacy and confidentiality.

In the context of nonlinear regression analysis distributed kernel  methods
implemented via the divide and conquer approach has been widely studied and
showed asymptotically minimax optimal in many situations. In particular, if the
kernel is perfectly selected so that the true regression function lies in the
associated reproducing kernel space, the minimax optimality was verified
for kernel ridge regression
\cite{zhang2015divide,szabo2016learning,lin2017distributed},
kernel spectral algorithm \cite{GLZ16},
kernel based gradient descent \cite{lin2018distributed},
bias corrected regularization kernel network \cite{guo2017learning},
and minimum error entropy \cite{hu2019distributed}.
However, it is usually, if not always, impractical to select
perfect kernels for real world problems. More commonly one has to select an
imperfect kernel by some prior knowledge and/or a tuning process. Such a kernel
is empirically optimal within a family of candidate kernels, usually good
enough for applications, but there is no guarantee of perfectness. A typical
example is the widely used Gaussian radial basis kernel. It is effective in
most nonlinear data analysis problems because of its universality. But it is
well known that its associated reproducing kernel Hilbert space consists of
only infinitely differentiable functions. Functions that are not infinitely
differentiable can be well approximated by the kernel but cannot lie exactly in
the associated kernel space. In this situation the learning rates obtained in
the literature are suboptimal.

The primary goal of this study  is to verify the capacity independent optimality of distributed kernel regression algorithms when the kernel is imperfect. We focus on the use of  kernel ridge regression (KRR) and  bias corrected kernel  ridge regression (BCKRR) in the divide and conquer approach.
For this purpose, we propose a framework to analyze a broad class of
distributed regression methods and conduct rigorous leave one out analyses.  More specifically, we make the following four contributions.
\begin{itemize}
	\item First, we introduce the concept of response weighted regression algorithm, which covers a broad class of regression algorithms and has  both KRR and BCKRR  as examples. We propose a general framework to analyze the learning performance of response weighted distributed regression algorithms and showed that, for such algorithms, it suffices to study the learning performance of the response weighted algorithm
	on a single data set provided that it characterizes the impact of the unexplained variance of the response variable on the learning performance. This makes the analysis of such distributed learning algorithms much easier. Because both KRR and BCKRR fall into this framework, we will utilize it to analyze the distributed KRR and distributed BCKRR in this study.
	
	\item Second, we conduct a leave one out analysis of the KRR algorithm
	and prove capacity independent error bounds, which are sharp in the sense that they lead to optimal capacity independent rates regardless the kernel is perfect or imperfect.  While the idea of leave one out analysis was originally developed in \cite{bousquet2002stability, zhang2003leave}, our analysis is more rigorous so that the error bounds factor in the impact of the unexplained variance of the response variable and therefore we are able to utilize them in combination with the aforementioned framework to derive sharp error bounds and optimal learning rates for distributed KRR. Furthermore, our analysis  also greatly relaxes the restriction on the number of local machines used in the distributed regression. In particular, our results indicate that fast rates can still be achieved with the number of local machines increasing at the order of $O(\sqrt N)$ (with $N$ being the sample size) if the true regression does lie in the kernel space. This is a significant relaxation compared with those in the literature; see detailed comparison in Section \ref{subsec:comparison}.
	
	\item Third, we conduct a rigorous leave one out analysis of the BCKRR and utilize the results to derive error bounds and learning rates for distributed BCKRR algorithm. Again, the results are optimal from a capacity independent viewpoint when the kernel is imperfect. As the BCKRR was proposed by the idea of bias correction, its original formula involves a two-step procedure and admits an operator representation which, if not impossible, is unsuitable for leave one analysis in a natural way. To overcome this difficulty we prove two alternative formulae for the algorithm, among which the recentering regularization formula defines the target function by a Tikhonov regularization scheme and allows us to conduct leave one analysis naturally. Moreover, these two perspectives also shed light on the design of other bias corrected algorithms whose solutions do not have an explicit representation like KRR. See Section \ref{sec:perspects}
	and the discussions in Section \ref{sec:conclusion} for details.
	
	\item Last, as a byproduct of our leave one out analysis, we derive super fast learning rates for both KRR and BCKRR when the unexplained variance of the response variable becomes zero, that is, the response value is determined and noise free for any fixed input. If the kernel is perfect, the rate can be faster than $O(N^{-1})$ and even as fast as  $O(N^{-2})$ in the best situation. To our best knowledge, such super fast rates for kernel regression are first observed and novel  in learning theory research.
\end{itemize}

The rest of the paper is organized as follows. In Section \ref{sec:results}
we describe the problem setting, algorithms, and the main results.
Discussions and detailed comparisons between our results and those in the
literature will be given. Empirical studies will be used to illustrate
the effectiveness of imperfect kernels in distributed regression. In Section \ref{sec:framework} we propose a general framework for the analysis of response weighted distributed regression algorithms. Some preliminary lemmas were proved in Section \ref{sec:pre}. Then in Section \ref{sec:krr} and Section \ref{sec:bckrr}
 we conduct leave one out analyses of KRR and BCKRR, respectively. The results are then used to prove our main results regarding the error bounds and learning rates of distributed KRR and distributed BCKRR. We close with conclusions and discussions in Section \ref{sec:conclusion}.

\section{Problem setting and main results}
\label{sec:results}

Let $\calX$ be the sample space of
input variable $x$ and $\calY$ the sample space of the response variable $y.$
They are linked by joint a probability measure $\rho$ on the product space
$\calX\times \calY.$
The goal is to learn the mean regression function that minimizes the mean  squared prediction error, i.e.,
$$ f^* =  \arg\min \calE(f) \qquad \hbox{ where } \calE(f) = \bE[(y-f(x))^2].$$
 This is usually implemented by
minimizing the empirical mean squared error or its regularized version
when we have in hand a sampled data set $D=\{(x_i, y_i), i=1,\ldots, N\}$. If $f^*$ is linear,
multiple linear regression or the regularized methods such as ridge regression or LASSO
performs well. When $f^*$ is nonlinear,
kernel ridge regression can be used to search
a good approximating function in a suitable
reproducing kernel Hilbert space.

 Let  $K$ be a Mercer kernel, namely,
 a continuous, symmetric, and positive-semidefinite function $K: \calX \times \cal X\to \RR$.
  The inner product defined by $\langle K(x, \cdot), K(t, \cdot)\rangle_K =K(x, t)$
 induces a reproducing kernel Hilbert space (RKHS) $\H_K$ associated to the kernel $K$.
 The space is the closure of the function class spanned by
 $\{K_x=K(x, \cdot): x\in\calX\}.$
 The reproducing property $f(x) = \langle f, K_x\rangle_K$ leads to
 $|f(x)|\le \sqrt{K(x,x)} \|f\|_K$. Thus if $\kappa = \sup_{x\in\calX}\sqrt{K(x,x)}<\infty$, then
  $\calH_K$ can be embedded into $C(\calX)$ and $\|f\|_\infty \le \kappa\|f\|_K.$
We refer to  \cite{Aron} for more other properties of RKHS.
 The kernel ridge regression (KRR)
 estimates the true regression function $f^*$ by the function $f_{D, \lambda} \in \H_K$ minimizing the regularized sample mean squared error,
 \begin{equation}
 \label{eq:krr}
 f_{D, \lambda} = \arg\min_{f\in\calH_K} \left\{\frac 1 N \sum_{i=1}^N (y_i-f(x_i))^2 +\lambda \|f\|_K^2\right\},
 \end{equation}
 where $\lambda >0$ is a regularization parameter.
It is a popular kernel method for nonlinear regression analysis.
Its predictive consistency has been extensively studied in the literature;
see e.g. \cite{EPP, bousquet2002stability, zhang2003leave, devito2005model, wu2006learning, bauer2007regularization,
caponnetto2007optimal, smale2007learning,  sun2009note, steinwart2009optimal}
and many references therein.
Its applications were also extensively explored and shown successful in  many problem domains.

 By the famous representer theorem, $f_{D, \lambda}$ admits  a representation $f_{D, \lambda}(x) = \sum_{i=1}^N c_iK(x_i, x)$
 with $\bc =(c_1,\ldots, c_N)^\top\in \RR^N$ solved from the linear system $(\lambda N \mathbb I+\mathbb K)\bc = \by$
where $\mathbb I$ is the identity matrix,  $\mathbb K=[K(x_i, x_j)]_{i,j=1}^N$ is the kernel matrix on the input data and $\by=(y_1, \ldots, y_N)^\top\in \RR^N.$
Let $S:\H_K\to \RR^N$ be be the sampling operator
defined by $Sf=(f(x_1), \ldots, f(x_N))^\top$ for $f\in\H_K.$ Its dual operator $S^*$ is given by
$S^*\bc = \sum_{i=1}^N c_iK(x_i, \cdot) \in \H_K $ for $\bc\in\RR^N.$
In \cite{smale2007learning} it is proved that $f_{D, \lambda}$ has an operator representation
\begin{equation}
\label{eq:krrop}
f_{D, \lambda} = \frac 1 N \left(\lambda I + \frac 1 N S^*S\right)^{-1}S^* \by .
\end{equation}
One objective of this paper is to study the performance of the distributed version of this method.

We need several assumptions that are used throughout the paper. Recall the true regression function is $f^*(x)=\bE[y|x]$ and define
$\sigma^2 = \bE[{\rm var}(y|x)] = \bE[(y-f^*(x))^2].$  Notice that
$${\rm var}(y) = \bE\Big[{\rm var}(y|x)\Big] + {\rm var}\Big(\bE[y|x]\Big) = \sigma^2  + {\rm var}\Big(f^*(x)\Big).$$
The second term on the right is the part of variance of $y$ that is explained by the regression function while the first term is unexplained. Our first assumption is on the finiteness  of unexplained variance.

\medskip

\noindent {\bf Assumption 1.} The unexplained variance  of the response variable is finite, i.e.,
$\sigma^2<\infty.$

\medskip

If in particular $\sigma^2=0,$ then $y=f^*(x)$ is determined for each given $x\in\calX$ and we call the regression problem is noise free. Note also for any function $f$ independent of $(x,y)$ there holds
\begin{equation}
\label{eq:fdec}
\bE \left[ (y- f(x))^2 \right] =\bE\left[(f^\ast (x)-f(x))^2 \right]
+\sigma^2.
\end{equation}
 This identity will be repeatedly used in the proof of our main results.

Next we need the so called source condition. To state it, let $\rho_{\!_\calX}$ denote the marginal distribution on $\calX.$ Define
$$L_Kf(x) = \int_{\calX} K(x, t) f(t) \hbox{d}\rho_{\!_{\calX}}(t).$$
Then $L_K$ defines a compact operator both on $\L2p$
(the space of square integrable functions with respect to the probability measure $\rho_{\!_{\calX}}$)
and $\calH_K.$
Let $\tau_i$ and $\phi_i$ be the eigenvalues and eigenfunctions of $L_K$
as an operator on $\L2p$.
Then $\{\phi_i\}_{i=1}^\infty$ form an orthogonal basis of $\L2p$ and
$$L_Kf = \sum_{i=1}^\infty \tau_i\langle f, \phi_i\rangle_{\!_{\L2p}} \phi_i, \qquad \forall\ f\in \L2p.$$
Also, $\{\psi_i= \sqrt{\tau_i}\phi_i: \tau_i\not=0\}$ form an orthonormal basis of $\calH_K$
and, as an operator on $\calH_K$,
$$L_Kf = \sum_{i:\tau_i\not=0} \tau_i \langle f, \psi_i\rangle_{\!_K} \psi_i, \qquad \forall \ f\in\calH_K.$$ It is easy to verify that $L_K$ is a population version of  the operator $\frac 1 N S^*S$ (as operators on $\H_K$).

Because $L_K$ is compact and admits an eigen-decomposition form, $L_K^r$ is well defined for all $r>0.$ In particular,
let $\overline {\H_K}$ be the closure of $\H_K$ in $\L2p.$ Then for each $f\in\overline{\H_K}$ we have $L_K^{\frac 12} f\in\H_K$ and
\begin{equation}
\label{eq:iso}
\|f\|_\L2p = \|L^{\frac 12} f\|_K.
\end{equation}  If the kernel $K$ is universal in the sense that $\H_K$ is dense in $C(\calX)$, then $\overline{\H_K}=\L2p$ and \eqref{eq:iso} holds for all $f\in\L2p.$
The source condition is stated as follows.

\medskip
\noindent{\bf Assumption 2.}
There exist some $u^*\in \L2p$ and $r>0$ such that $f^*=L_K^r u^*.$

\medskip

If the source condition holds with $r\ge \frac 12$ then $f^*\in \calH_K$ and the kernel is perfect for learning the regression function. If $r<\frac 12$, then $f^*$ does not lie in $\calH_K$ and the kernel is imperfect. We will focus our study on the imperfect case in this paper.

Before moving on to the distributed KRR, let us first recall that if the source condition holds with $0<r\le 1$,  KRR can reach the optimal capacity independent  rate $O(N^{-\frac {2r} {1+2r}})$ \cite{zhang2003leave}. If $\frac 12 \le r \le 1$ and in addition the capacity of the reproducing kernel Hilbert space as measured by the effective dimension satisfies
\begin{equation}
\label{eq:effdim}
\calN(\lambda) = \hbox{Tr}((\lambda I + L_K)^{-1} L_K) \le C_0 \lambda^{-\beta}
\end{equation}
for some $C_0>0$ and $0<\beta<1,$
KRR reaches the minimax optimal capacity dependent  rate  $O(N^{-\frac {2r}{\beta+2r}})$
\cite{caponnetto2007optimal}. However, to our best knowledge, if $r<\frac 12$ meaning that the kernel is imperfect, such minimax optimality has never been verified in the literature. Note all our results in this paper will not assume any capacity conditions.

\subsection{Distributed kernel ridge regression}

In the context of distributed kernel regression we divide the whole data $D$ into $m$ disjoint subset $D= \bigcup_{\ell=1}^m D_\ell.$ Without loss of generality we assume all data sets are of equal size $n=N/m$ and denote $D_\ell = \{(x_{\ell,1}, y_{\ell, 1}), \ldots, (x_{\ell, n}, y_{\ell, n})\}.$
Let $f_{D_\ell, \lambda}$ be the local estimator learned from $D_\ell$ by using KRR method \eqref{eq:krr}. The distributed KRR defines the final global estimator by
\begin{equation}
\label{weightaverage}
\overline{f_{D,\lambda}}= \sum_{\ell=1}^m \frac nN f_{D_\ell, \lambda}
= \frac1{m} \sum_{\ell=1}^{m} f_{D_\ell, \lambda }.
\end{equation}
This approach has been studied in \cite{zhang2015divide, lin2017distributed} and the minimax optimality was verified for $\frac 12 \le r\le 1$.  In this paper we prove the following capacity independent bounds for all $r>0.$

\begin{theorem}\label{DKRRtheorem}
	Assume  $\sigma^2<\infty$ and  $f^* = L_K^{r} u^*$ for some $u^*\in L^2(P_\calX)$ and $0<r \le 1.$
	\begin{enumerate}
		\item [{\rm (i)}] If $0<r\le \frac 1 2$, then there exists
	a constant $C_1>0$ independent of $N,$ $n,$ $m$, or $\lambda$
	such that
	\begin{equation*}
	\bE\left[ \left \|\overline{f_{D,\lambda}}-f^*\right \|^2_{\L2p} \right]  \leq
	C_1 \left\{ \frac{m\sigma^2}{N^2\lambda^2}
	+ \frac{\sigma^2}{N\lambda}+
	\lambda^{2r} \bigg(1+\frac{m^2}{N^2\lambda^2}
	+ \frac{m}{N\lambda} \bigg)\right\}.
	\end{equation*}
Consequently,  if
	$m\leq N^{\frac{2r}{1+2r}},$ then with the choice  $\lambda=N^{-\frac{1}{1+2r}}$, we have
	\begin{equation*} 
	\bE\left[\left \|\overline{f_{D,\lambda}}-f^*\right \|^2_{\L2p} \right]  = O\Big(N^{-\frac{2r}{1+2r}}\Big).
	\end{equation*}
	\item [{\rm (ii)}] If $\frac 12<r\le 1$ and $\lambda n\ge 1$, then there exists a constant $C_2>0$ such that
		\begin{equation*}
	\bE\left[\left \|\overline{f_{D,\lambda}}-f^*\right \|^2_{\L2p} \right]  \leq
	C_2 \left\{ \frac{m\sigma^2}{N^2\lambda^2}
	+ \frac{\sigma^2}{N\lambda}+ \lambda^{2r}  +
	\frac{\lambda ^r  m^{\frac r 2 +\frac 1 4}} {N^{\frac r 2 +\frac 14} } \right\}.
	\end{equation*}
	Consequently,  if
	$m\leq N^{\frac{4r^2+1}{(1+2r)^2}},$ then with the choice  $\lambda=N^{-\frac{1}{1+2r}}$, we have
	\begin{equation*} 
	\bE\left[\left \| \overline{f_{D,\lambda}}-f^*\right \|^2_{\L2p} \right]  = O\Big(N^{-\frac{2r}{1+2r}}\Big).
	\end{equation*}
\end{enumerate}
\end{theorem}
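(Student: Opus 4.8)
The plan is to invoke the general framework of Section~\ref{sec:framework} to reduce the analysis of the averaged estimator to a single-machine analysis, and then feed in the leave-one-out bounds of Section~\ref{sec:krr}. Write $g_{D_\ell,\lambda}$ for the noise-free local estimator, i.e. the KRR solution on the inputs of $D_\ell$ with each response $y_{\ell,i}$ replaced by its conditional mean $f^*(x_{\ell,i})$. Since KRR is a response weighted algorithm it is linear in the responses, so $g_{D_\ell,\lambda}=\bE[f_{D_\ell,\lambda}\,|\,\bx_\ell]$. Using \eqref{weightaverage}, the starting point is the decomposition
\begin{equation*}
\overline{f_{D,\lambda}}-f^*
=\frac1m\sum_{\ell=1}^m\bigl(f_{D_\ell,\lambda}-g_{D_\ell,\lambda}\bigr)
+\frac1m\sum_{\ell=1}^m\bigl(g_{D_\ell,\lambda}-f^*\bigr),
\end{equation*}
where the first sum collects the noise-driven fluctuations and the second the noise-free (sampling and approximation) error. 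Conditioning on all inputs $\{\bx_\ell\}$, the first sum has mean zero while the second is deterministic, so the cross term vanishes in expectation and the two contributions may be bounded separately.

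First I would handle the fluctuation term. Conditioned on the inputs the summands $f_{D_\ell,\lambda}-g_{D_\ell,\lambda}$ are mean zero and independent across $\ell$, so the average has variance exactly $1/m$ times that of one summand:
\begin{equation*}
\bE\Bigl\|\tfrac1m\textstyle\sum_{\ell=1}^m (f_{D_\ell,\lambda}-g_{D_\ell,\lambda})\Bigr\|_{\L2p}^2
=\frac1m\,\bE\bigl\|f_{D_1,\lambda}-g_{D_1,\lambda}\bigr\|_{\L2p}^2 .
\end{equation*}
This is precisely where it matters that the single-machine estimate has the unexplained variance factored out: the leave-one-out analysis must produce $\bE\|f_{D_1,\lambda}-g_{D_1,\lambda}\|_{\L2p}^2\lesssim \sigma^2\bigl(\tfrac1{n\lambda}+\tfrac1{n^2\lambda^2}\bigr)$ with an \emph{explicit} $\sigma^2$ prefactor. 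Dividing by $m$ and substituting $n=N/m$ yields exactly the two noise terms $\frac{\sigma^2}{N\lambda}$ and $\frac{m\sigma^2}{N^2\lambda^2}$ common to (i) and (ii).

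Next I would bound the noise-free term. Here the framework deliberately does \emph{not} exploit independence; it simply uses convexity of the squared norm (Jensen),
\begin{equation*}
\bE\Bigl\|\tfrac1m\textstyle\sum_{\ell=1}^m (g_{D_\ell,\lambda}-f^*)\Bigr\|_{\L2p}^2
\le \bE\bigl\|g_{D_1,\lambda}-f^*\bigr\|_{\L2p}^2,
\end{equation*}
thereby avoiding a separate bias--variance split of the noise-free estimator and reducing the problem to the single-machine noise-free error on $n=N/m$ points. The leave-one-out analysis of Section~\ref{sec:krr} supplies this estimate in the two regimes: for $0<r\le\frac12$ it gives order $\lambda^{2r}\bigl(1+\tfrac1{n\lambda}+\tfrac1{n^2\lambda^2}\bigr)$, and for $\frac12<r\le1$ (using $f^*\in\calH_K$ and $\lambda n\ge1$) the sharper $\lambda^{2r}+\lambda^{r}n^{-(r/2+1/4)}$. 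Substituting $n=N/m$ turns these into $\lambda^{2r}\bigl(1+\tfrac{m}{N\lambda}+\tfrac{m^2}{N^2\lambda^2}\bigr)$ and $\lambda^{2r}+\lambda^{r}m^{r/2+1/4}N^{-(r/2+1/4)}$, exactly the remaining terms of (i) and (ii). Collecting the three contributions gives both bounds, with $C_1,C_2$ depending only on $r$, $\kappa$ and $\|u^*\|_{\L2p}$.

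Finally, for the rate statements I would set $\lambda=N^{-1/(1+2r)}$ and test each summand against the target $N^{-2r/(1+2r)}$. The approximation term $\lambda^{2r}$ and the leading noise term $\sigma^2/(N\lambda)$ both equal the target, while every remaining term is $\le N^{-2r/(1+2r)}$ exactly under the stated restriction on $m$: each of $\frac{m\sigma^2}{N^2\lambda^2}$, $\lambda^{2r}\tfrac{m}{N\lambda}$, $\lambda^{2r}\tfrac{m^2}{N^2\lambda^2}$ forces $m\le N^{2r/(1+2r)}$ in case~(i), and in case~(ii) the binding constraint $m\le N^{(4r^2+1)/(1+2r)^2}$ comes solely from $\lambda^{r}m^{r/2+1/4}N^{-(r/2+1/4)}$ (one also checks $\lambda n\ge1$ and that this constraint implies $m\le N^{2r/(1+2r)}$ when $r>\frac12$). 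I expect the genuine difficulty to lie entirely in the single-machine leave-one-out estimates invoked above — in particular in establishing the noise-free sample-error bound in the imperfect regime $r<\frac12$ where $f^*\notin\calH_K$, and in isolating the exact $\sigma^2$ prefactor on the fluctuation term so that the $1/m$ averaging gain is legitimately captured — whereas the distributed step itself is then a short, essentially bookkeeping argument.
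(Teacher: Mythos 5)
Your proposal is correct and follows essentially the same route as the paper: Theorem \ref{DKRRtheorem} is proved there by combining the response-weighted framework of Theorem \ref{thm:framework} (a bias--variance decomposition giving $\frac1m\varepsilon(n,\lambda,\sigma^2)+\varepsilon(n,\lambda,0)$) with the single-machine bounds of Theorem \ref{thm:krr}, then substituting $n=N/m$ and $\lambda=N^{-1/(1+2r)}$ exactly as you do. The only cosmetic difference is that you center the fluctuation term at the conditional mean $\bE[f_{D_\ell,\lambda}\mid\bx_\ell]$ rather than the unconditional mean, and you state a slightly sharper pure-$\sigma^2$ fluctuation bound than Theorem \ref{thm:krr} literally provides; the weaker bound $\bE\|f_{D_1,\lambda}-g_{D_1,\lambda}\|^2\le\varepsilon(n,\lambda,\sigma^2)$ already suffices since the extra $\lambda^{2r}/m$ terms are absorbed by the noise-free contribution.
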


Note if $r>1$ the error bounds and convergence rates are the same as $r=1$ due to the saturation effect of KRR and have been omitted. The error bounds in Theorem \ref{DKRRtheorem} are sharp and the rates are capacity independent optimal. A detailed comparison with the results in the literature are given in Section \ref{subsec:comparison}.

\subsection{Bias correction}

The bias corrected kernel ridge regression (BCKRR) was proposed in \cite{wu2017bias} to efficiently handle block wise data,
for which the distributed learning is an example.
Notice the operator representation \eqref{eq:krrop} implies
 $f_{D, \lambda}$ has asymptotic bias  $-\lambda (\lambda I + L_K)^{-1} f^*$.
The BCKRR is formulated by
subtracting an empirical estimate of the asymptotic bias:
 \begin{equation} \label{eq:fa}
f^\sharp_{D,\lambda} = f_{D, \lambda}  + \lambda \left(\lambda I + \frac 1 n S^*S\right)^{-1}  f_{D, \lambda}.
\end{equation}
Similar to distributed KRR, a distributed BCKRR can be designed by applying BCKRR  on each subset and averaging the local estimators to obtain the global estimator as
$$\overline {f^\sharp _{D,\lambda}}
	= \frac 1 m \sum_{\ell=1}^ m f^\sharp_{D_\ell, \lambda} . $$
The bias and variance of BCKRR has been characterized in \cite{wu2017bias} and
the distributed BCKRR was studied in \cite{guo2017learning}. Those studies have shown that the BCKRR benefits the block wise data analysis both theoretically and empirically. Similar to the distributed KRR case, the distributed BCKRR has been shown asymptotically minimax optimal when the kernel is perfect. But when the kernel is imperfect, Guo et al
\cite{guo2017learning} derived the rate of $O(N^{-\frac {r}{\beta+2r}})$
under the capacity condition \eqref{eq:effdim}.
It implies a capacity independent rate of $O(N^{-\frac {r}{1+2r}}),$ which is far from optimal.
Even with the capacity condition, if it is weak, say $\beta>\frac 12 - r,$
the rate in \cite{guo2017learning}  is still worse than the capacity independent rate
$O(N^{-\frac{2r}{1+2r}})$. One of our main contributions  is to address this question.

\medskip

\begin{theorem}
	\label{thm:2}
	Assume  $\sigma^2<\infty$ and  $f^* = L_K^{r} u^*$ for some $u^*\in \L2p$ with some $0< r \le 2.$ If $\lambda n \ge 1$, then there exists
	a constant $\widetilde C>0$ independent of $N,$ $n,$ $m$, or $\lambda$
	such that
	\begin{equation*}
	\bE\left[\left \|\overline {f^\sharp _{D,\lambda}} -f^*\right \|^2_{\L2p} \right]  \leq
	\widetilde C  \begin{cases}
	 \dfrac{\sigma^2}{N\lambda}+ \lambda^{2r},
	 & \hbox{ if }  0<r\le \frac 12; \\[1em]
	  \dfrac{\sigma^2}{N\lambda}+ \lambda^{2r}  +
	 \dfrac{\lambda ^r  m^{\frac r 2 +\frac 1 4}} {N^{\frac r 2 +\frac 14} },
	 & \hbox{ if } \frac 12 <r\le 1; \\[1em]
	  \lambda^{2r}
	+\dfrac{\lambda^{r} m^{\frac r2 +\frac 14}}{N^{\frac r2 +\frac 14}}  + \dfrac{\lambda^{r+\frac 12} m^{\frac 12}}{N^{\frac 12}}
	+ \dfrac {\sigma^2} {\lambda N} ,
	& \hbox{ if } 1 <r \le \frac 32; \\[1em]
	 \lambda^{2r}
	+ \dfrac{\lambda m}{N}+\dfrac{\lambda^{r+\frac 12} m^{\frac 12}}{N^{\frac 12}}
	+ \dfrac {\sigma^2} {\lambda N} ,
	& \hbox{ if } \frac 32 <r \le 2.
	\end{cases}
	\end{equation*}
	Consequently, if choosing $\lambda = N^{-\frac 1 {2r+1}}$ and the number of local machines satisfies  $m\le N^{\theta}$ with
	\begin{equation*}
	\theta = \begin{cases}
	\dfrac{2r}{1+2r}, & \hbox{ if } 0<r\le \frac 12; \\[1em]
	\dfrac{4r^2+1}{(1+2r)^2}, & \hbox{ if } \frac 12<r\le \frac{1+\sqrt{2}}{2} ; \\[1em]
	\dfrac{2}{1+2r}, & \hbox{ if } \frac{1+\sqrt{2}}{2}<r\le 2,
	\end{cases}
	\end{equation*}
 then we have
 \begin{equation*}
 \bE\left[\left \|\overline {f^\sharp _{D,\lambda}} -f^*\right \|^2_{\L2p} \right]  = O\Big(N^{-\frac{2r}{1+2r}}\Big).
 \end{equation*}
\end{theorem}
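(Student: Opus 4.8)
\medskip
\noindent\textbf{Proof idea.}
The plan is to reduce the analysis of the averaged estimator to the behavior of BCKRR on a single subset and then to balance the resulting terms. The estimator $f^\sharp_{D,\lambda}$ in \eqref{eq:fa} is obtained from the response vector $\by$ by applying the linear operator $\bigl(I+\lambda(\lambda I+\tfrac1n S^*S)^{-1}\bigr)\tfrac1n(\lambda I+\tfrac1n S^*S)^{-1}S^*$, so it is a response weighted algorithm and the general framework of Section~\ref{sec:framework} applies. Writing $y=f^*(x)+\varepsilon$ with $\bE[\varepsilon\mid x]=0$ and $\bE[\varepsilon^2]=\sigma^2$, linearity splits each local estimator into a noise free image of $(f^*(x_{\ell,i}))_i$ and a mean zero noise term; since the $m$ subsets are i.i.d.\ and the noise terms are independent across subsets, averaging produces the exact decomposition
\[
\bE\bigl[\,\|\overline{f^\sharp_{D,\lambda}}-f^*\|^2_{\L2p}\,\bigr]
=\bigl\|\bE[f^\sharp_{D_n,\lambda}]-f^*\bigr\|^2_{\L2p}
+\frac1m\,\bE\bigl[\,\|f^\sharp_{D_n,\lambda}-\bE[f^\sharp_{D_n,\lambda}]\|^2_{\L2p}\,\bigr],
\]
where $D_n$ is a generic subset of size $n=N/m$. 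The first term is a systematic bias untouched by averaging, while the variance term carries the factor $1/m$ and, because the bias is noise free, absorbs the \emph{entire} $\sigma^2$ contribution. Hence it suffices to control the single machine bias $\|\bE[f^\sharp_{D_n,\lambda}]-f^*\|_{\L2p}$ and the single machine error $\bE[\|f^\sharp_{D_n,\lambda}-f^*\|^2_{\L2p}]$, with the $\sigma^2$ dependence made explicit.

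Supplying these two single machine bounds is the technical heart and the main obstacle. The two step operator form \eqref{eq:fa} is awkward for a stability/leave one out argument, so I would first recast $f^\sharp_{D_n,\lambda}$ through its recentering regularization formula, that is, as the solution of a single Tikhonov problem with a shifted target; this makes the leave one out perturbation (deleting one sample point) tractable and is where the hypothesis $\lambda n\ge 1$ enters, to keep $(\lambda I+\tfrac1n S^*S)^{-1}$ and its fluctuations under control. On the population side a direct eigenvalue computation shows the bias operator of BCKRR acts as $\lambda^2/(\lambda+\tau)^2$ on the $\tau$ eigenspace, so $\sup_\tau \lambda^4\tau^{2r}/(\lambda+\tau)^4\le\lambda^{2r}$ for every $r\le 2$; this is precisely why bias correction pushes the saturation from $r\le 1$ (as for KRR in Theorem~\ref{DKRRtheorem}) up to $r\le 2$ and yields the leading bias $\lambda^{2r}$. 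The leave one out analysis then bounds the empirical-versus-population discrepancy capacity independently (without invoking the effective dimension \eqref{eq:effdim}), producing the noise variance $\sigma^2/(n\lambda)$ together with sample corrections such as $\lambda^r/n^{r/2+1/4}$, $\lambda^{r+1/2}/n^{1/2}$ and $\lambda/n$; which of these dominates changes as $r$ crosses $\tfrac12,\,1,\,\tfrac32$, giving the four regimes in the statement.

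With these ingredients I would substitute $n=N/m$ and collect terms. The noise part contributes $\tfrac1m\cdot\tfrac{\sigma^2}{n\lambda}=\tfrac{\sigma^2}{N\lambda}$. Each sample correction appears both undivided inside the bias and divided by $m$ inside the variance; since the two share the same $n$ dependence, the $1/m$ copy is dominated by the undivided one, e.g.\ $\tfrac1m\cdot\tfrac{\lambda^r}{n^{r/2+1/4}}$ is dominated by $\tfrac{\lambda^r}{n^{r/2+1/4}}=\tfrac{\lambda^r m^{r/2+1/4}}{N^{r/2+1/4}}$. Collecting the surviving terms reproduces exactly the four cases of the claimed bound.

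Finally I would optimize the free parameters. Choosing $\lambda=N^{-1/(2r+1)}$ makes both the bias $\lambda^{2r}$ and the noise term $\sigma^2/(N\lambda)$ equal to the target rate $N^{-2r/(2r+1)}$, independently of $m$. Requiring each sample correction to stay below this target determines the admissible range of $m$: the term $\tfrac{\lambda^r m^{r/2+1/4}}{N^{r/2+1/4}}$ forces $m\le N^{(4r^2+1)/(2r+1)^2}$, while $\tfrac{\lambda^{r+1/2}m^{1/2}}{N^{1/2}}$ and $\tfrac{\lambda m}{N}$ each force $m\le N^{2/(2r+1)}$. The two exponents coincide when $4r^2+1=2(2r+1)$, i.e.\ at $r=\tfrac{1+\sqrt2}{2}$; the first constraint is binding for $r\le\tfrac{1+\sqrt2}{2}$ and the second for $r>\tfrac{1+\sqrt2}{2}$, which is exactly the threshold in the definition of $\theta$. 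This yields the stated rate $O\bigl(N^{-2r/(2r+1)}\bigr)$ and completes the plan. The heaviest lifting throughout is the capacity independent leave one out bound for BCKRR obtained via the recentering reformulation; everything after it is bookkeeping and optimization.
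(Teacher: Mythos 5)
Your proposal is correct and follows essentially the same route as the paper: your exact bias--variance decomposition combined with the observation that the bias term is noise free (by the response weighted property) is precisely the content of the paper's Theorem \ref{thm:framework}, and your single machine analysis via the recentering regularization reformulation plus a leave one out argument is exactly how the paper proves Theorem \ref{thm:loo} (Propositions \ref{prop:recenter}, \ref{fhatsharpsample}, \ref{prop:fhatsharp2} and Lemmas \ref{lem:Qbound}, \ref{lem:Qsbound}), after which the substitution $n=N/m$ and the optimization of $\lambda$ and $m$ match the paper's omitted bookkeeping. The only point worth tightening is that for $0<r\le\frac12$ the constraint $m\le N^{2r/(1+2r)}$ comes not from any $m$-dependent error term but from the standing hypothesis $\lambda n\ge 1$, which is also what absorbs the $m$-dependent factors in that regime.
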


\subsection{Comparison with literature}
\label{subsec:comparison}

The minimax rate analysis of regularized least square algorithm have been extensively studied in statistics and learning theory literature. If the eigenvalues of the operator $L_K$ decays as
$\tau_i \le i^{-\frac {1}{\beta}}$, which implies the capacity condition \eqref{eq:effdim},
then the minimax learning rate of kernel ridge regression is $O(N^{-\frac{2r}{2r+\beta}})$
for $\frac 12 \le r \le 1$; see e.g. \cite{caponnetto2007optimal, lin2017distributed, blanchard2018optimal}. To our best knowledge, this rate has never been proved for $r<\frac 12$ without imposing additional conditions.

Note that the capacity condition \eqref{eq:effdim} roughly measures the smoothness
of kernel $K$. The smoother the kernel is, the smaller the $\beta.$ For example,
it is proved in  \cite{mendelson2010regularization} that,
if $K\in C^\alpha(\mathcal{X}^2\times\mathcal{X}^2)$
with some integer $\alpha\geq 1$ and $\mathcal{X}^2$
is locally the graph of a Lipschitz function, then \eqref{eq:effdim}
is satisfied with $\beta =\left( \frac{\alpha}{2\mathrm{dim}(\mathcal{X})}
+\frac12 \right)^{-1}$.  Moreover,  the Mercer theorem guarantees all kernels satisfies \eqref{eq:effdim} with $\beta=1$. Therefore, $\beta=1$ corresponds to the worst situation and imposing a capacity condition \eqref{eq:effdim} with $\beta=1$ is equivalent to no capacity condition. We see the rate $O(N^{-\frac {2r}{2r+1}})$ is minimax optimal in a capacity independent sense. It is proved in \cite{zhang2003leave} that kernel ridge regression can achieve this capacity independent optimal rate when $r\le \frac 12.$

Concerning the distributed kernel ridge regression via the divide and conquer technique,
under the assumptions that $\bE[|\phi_i(x)|^{2k}] \le A^{2k}$
for some $k>2$ and constant $A<\infty$, $\lambda_i \le a i^{-\frac 1 \beta}$, and $f^* \in \H_K$ (i.e. $r=\frac 12$), it is proved
in \cite{zhang2015divide} that the optimal learning rate of $ O(N^{- \frac{1}{1+\beta}})$
can be achieved by  restricting the number of local processors
$$m \le c_\alpha \left(\frac{N^{\frac{k-4-k\beta }{1+\beta}}}{A^{4k}\log^k N}\right)^{\frac 1 {k-2}}.$$ This unfortunately does not apply to the case $\beta=1.$
Later in \cite{lin2017distributed}  the minimax optimal rate was verified for
all $r\in[\frac 12, 1]$ if
$
m \le N ^{\min(\frac{3(2r-1)+\beta }{5(2 r +\beta )}, \frac{2r-1}{2 r +\beta })}
$
and in \cite{GLZ16} the restriction was relaxed to
$m \le N^{\min(\frac{2}{2r+\beta}, \frac{2r-1}{2r+\beta} )} .$
Although their results apply to $\beta=1$, the restriction on $m$ is strict.
If $r=\frac 12$, the data is essentially not able to be distributed according to their results.
The error bounds in \cite{lin2017distributed, GLZ16} also  apply to $r\le \frac 12$ but only lead to  suboptimal rate $O(N^{-\frac{2r}{1+\beta}}) $ with the restriction $m=O(1)$. If $\beta=1$, the capacity independent rate is $O(N^{-r}).$

In this paper our primary interest is to understand the performance of distributed kernel regression when the kernel is imperfect, i.e., when $r<  \frac 1 2$ and thus
the true regression function $f^*$ is not in $\H_k.$ For this case,
Theorem \ref{DKRRtheorem} (i) tells that, with the restriction
$m\le N^{\frac {2r}{1+2r}},$  the capacity independent optimal rate
$O(N^{-\frac {2r}{1+2r}})$ for $r\le \frac 12$ can be achieved.
Note this rate  is even faster than
the existing capacity dependent rate $O(N^{-\frac{2r}{1+\beta}})$
when $r<\frac \beta 2$,
let alone the capacity independent rate  $O(N^{-r}).$ In other words,
we proved faster rates under weaker restrictions on $m.$ More importantly,
unlike previous results that requires $m=O(1)$ and thus disclaims the effectiveness of  distributed kernel regression algorithms in this situation,
our results instead verified their feasibility.

If $f^*$ lies in $\H_K$, i.e. $r\ge \frac 12$, it is as expected that the rate $O(N^{-\frac {2r}{1+2r}})$ in Theorem \ref{DKRRtheorem} is worse than the minimax capacity dependent rate in the literature. But comparing the restrictions on the number of local machines, we see our restriction is much relaxed. In particular, when $r$ is close to $\frac 12$, the restrictions in the literature approaches to $O(1)$. But our restriction still allows about $O(\sqrt N)$ local machines while preserving fast rates.
It is a sacrifice of convergence rates for more local machines and may
be useful for applications where analysis of super big data is necessary.

Concerning the distributed BCKRR, similar conclusions can be made. When $r<\frac 12$,
the rate in Theorem \ref{thm:2} is faster and the restriction is more relaxed than
the results in \cite{guo2017learning} while  when $r\ge \frac 1 2$ the rate is worse than those in \cite{guo2017learning} but the restriction on $m$ is greatly relaxed.
As noted in \cite{wu2017bias, guo2017learning} a main theoretical advantage
of BCKRR is to relax the saturation effect of KRR. Comparing Theorem \ref{thm:2}
with Theorem \ref{DKRRtheorem}, we see the the rate of distributed BCKRR continues improving beyond $r>1$ and ceases to improve until $r=2.$ The restriction on $m$ also continue to be relaxed to $N^{\frac{\sqrt{2}}{1+\sqrt{2}}}$ until $r=\frac{1+\sqrt{2}}{2}$.

\subsection{Empirical effectiveness of   imperfect kernels} \label{subsec:experiment}

In this subsection we illustrate the empirical effectiveness of using imperfect kernels in distributed regression. To this end, we adopt the example used in \cite{zhang2015divide, guo2017learning}.
The true regression function is given by
$f^*(x) = \min(x, 1-x)$ with $x\,{\sim}\,\hbox{Uniform}[0, 1]$
and the observations are generated by the additive noise model
$y_i = f^*(x_i) + \epsilon_i$ where
$\epsilon_i\,{\sim}\,N(0, \sigma^2)$  and  $\sigma^2 =\frac 15.$
We consider two kernels: the Sobolev space kernel
$K_S(x, t) = 1+\min(x, t)$ and the
Guassian kernel $K_G(x, t) = \exp(-(x-t)^2/0.3).$
Recall  that $f^*$ belongs to $\H_{K_S}$ with  $\|f^*\|_{K_S}=1$.
So $K_S$ is a perfect kernel for this problem.
Notice that the reproducing kernel Hilbert space associated to a Gaussian kernel
consists only  infinitely differentiable functions and even polynomials may not lie in the space \cite{minh2010some}. We conclude $f^*$ does not lie in $\H_{K_G}$ and
$K_G$ is an imperfect kernel.
We generate $N = 4098$ sample points and use number of partitions
$m\in\{2, 4, 8, 16, 32, 64, 128, 256, 512, 1024\}.$ Mean squared errors between the estimated function and the true regression function is used to measure the performance.

According to \cite{zhang2015divide}, if the Soboleve space kernel $K_S$ is used,  the theoretically optimal choice of the regularization parameter is $\lambda = N^{-\frac 23}$.
When Guassian kernel is used, by Theorem \ref{DKRRtheorem} and Theorem \ref{thm:2}, the optimal choice of $\lambda$ should depend on the index $r$ in the source condition, which,  unfortunately, is unknown. By $0<r<\frac 12 $ we know the optimal choice should be $N^{-\alpha}$ with $\frac 12 < \alpha < 1$ and $\alpha =\frac 23$ seems an acceptable choice. So we will also use $\lambda = N^{-\frac 23}$ for the Guassian kernel
to make the first comparison between the four distributed kernel regression algorithms, namely, distributed KRR with Sobolev space kernel (DKRR-S), distributed BCKRR with Sobolev space kernel (DBCKRR-S),
distributed KRR with Guassian kernel (DKRR-G),  and
distributed BCKRR with Gaussian kernel (DBCKRR-G).
To do this,  for each  aforementioned $m$ value and each algorithm,
we repeat the experiment 50 times and report the mean squared errors in  Figure~\ref{fig:SvsG}(a).
The results indicate that, even if $K_G$ is imperfect for the problem, it performs comparable with the perfect kernel $K_S$ when $m$ is small and may even outperforms $K_S$ when $m$ becomes large.

To our best knowledge, all rate analysis literature of distributed kernel regression, including \cite{zhang2015divide, lin2017distributed, lin2018distributed, guo2017learning, GLZ16} and this study,  suggest
the optimal regularization parameter be selected as  $\lambda = N^{-\alpha}$
with $\alpha$ an index depending on the regularity of the true regression function $f^*.$ While this is very helpful for researchers to understand the optimality of the algorithms, it is less informative for their practical use because such an index is unknown for real problems. Furthermore, distributed kernel regression becomes necessary only if the data is too big to be processed by a single machine. In this situation, it is imaginable that
globally tuning the optimal parameter is either impossible or too time consuming.
At the same time, note that distributed kernel regression requires underregularization, meaning that the regularization parameter must be chosen according to the total sample size $N$, not on the local sample size $n$.
To resolve all these problems, \cite{guo2017learning} proposed a practical strategy to tune the parameter for distributed kernel regression. It first
cross-validates the regularization parameter locally to get optimal choice $\lambda_{\ell, n}= n^{-\alpha_\ell}$  for each subset for  $D_\ell$ and then use an underregularized  parameter
\begin{equation}
\label{eq:underlambda}
\lambda_\ell = \lambda_{\ell,n} ^{\frac{\log N}{\log n}}=N^{-\alpha_\ell}
\end{equation}
to train the local model on $D_\ell.$
To mimic a real problem, now let us assume we have no information about the true regression function and the regularization parameters have to be selected by using the above tuning strategy.
We run the four algorithms again for each $m$ value and report the mean squared errors in Figure~\ref{fig:SvsG}(b).
We see that, when a theoretically optimal regularization parameter is not available and $\lambda$ has to be tuned from data, the performance of all four methods deteriorates faster as $m$ increases. But Guassian kernel, though imperfect, seems less sensitive to the number of local machines and so the performance deteriorates slower than Sobolev space kernel.

Finally, the results in both plots indicate that, regardless the choices of kernels and regularization parameters, bias correction always helps to improve the learning performance and relax the restriction on the number of local machines.

\begin{figure}[ht]
	\caption{Mean squared error of distributed kernel regression with Sobolev space kernel and Guassian kernel when (a) the regularization parameters is fixed as $\lambda = N^{-\frac 23}$ and (b) the regularization parameter is locally tuned and underregularized according to equation \eqref{eq:underlambda}.
	\label{fig:SvsG}}
	\includegraphics[width=0.48\textwidth]{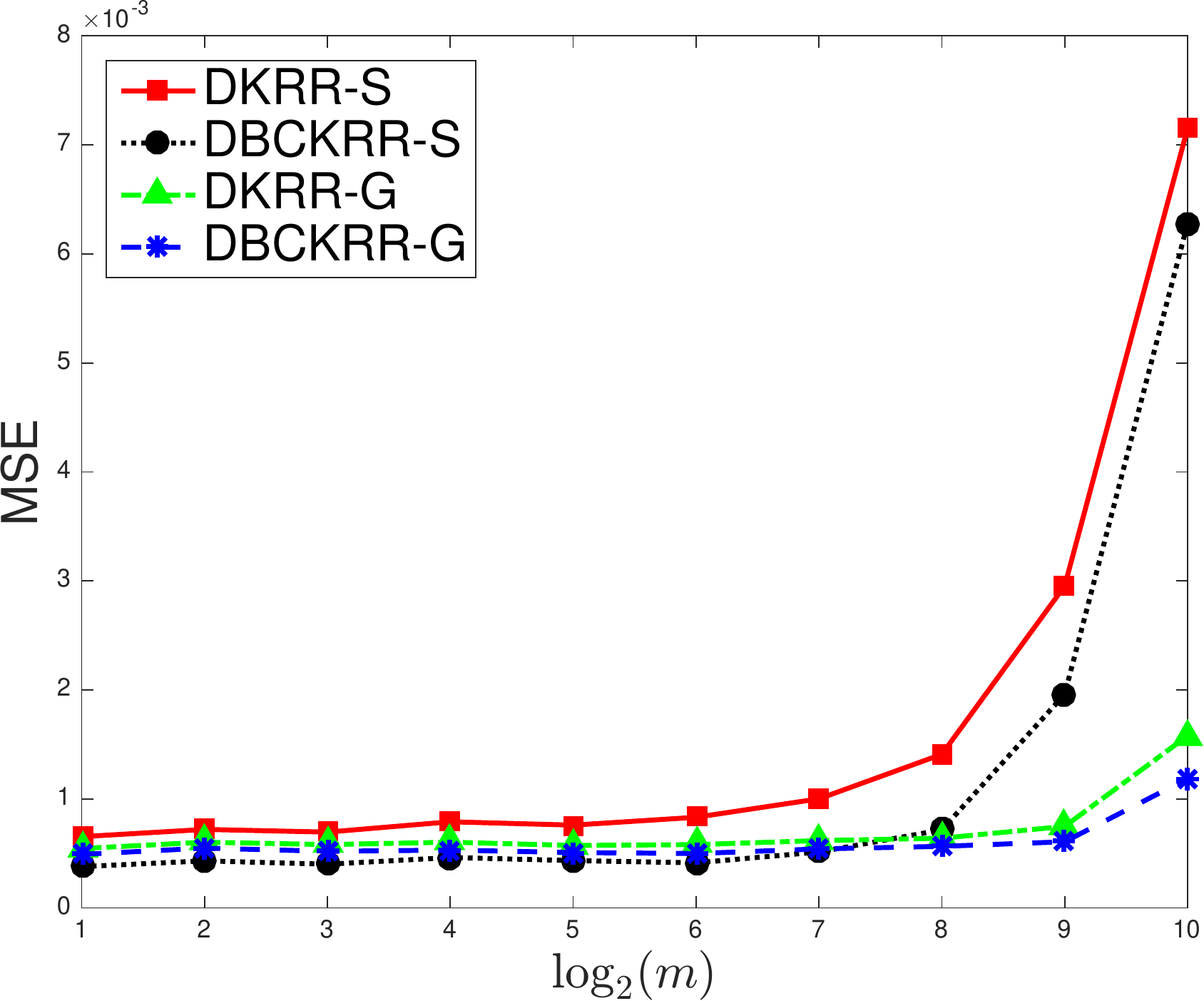}
		\includegraphics[width=0.51\textwidth]{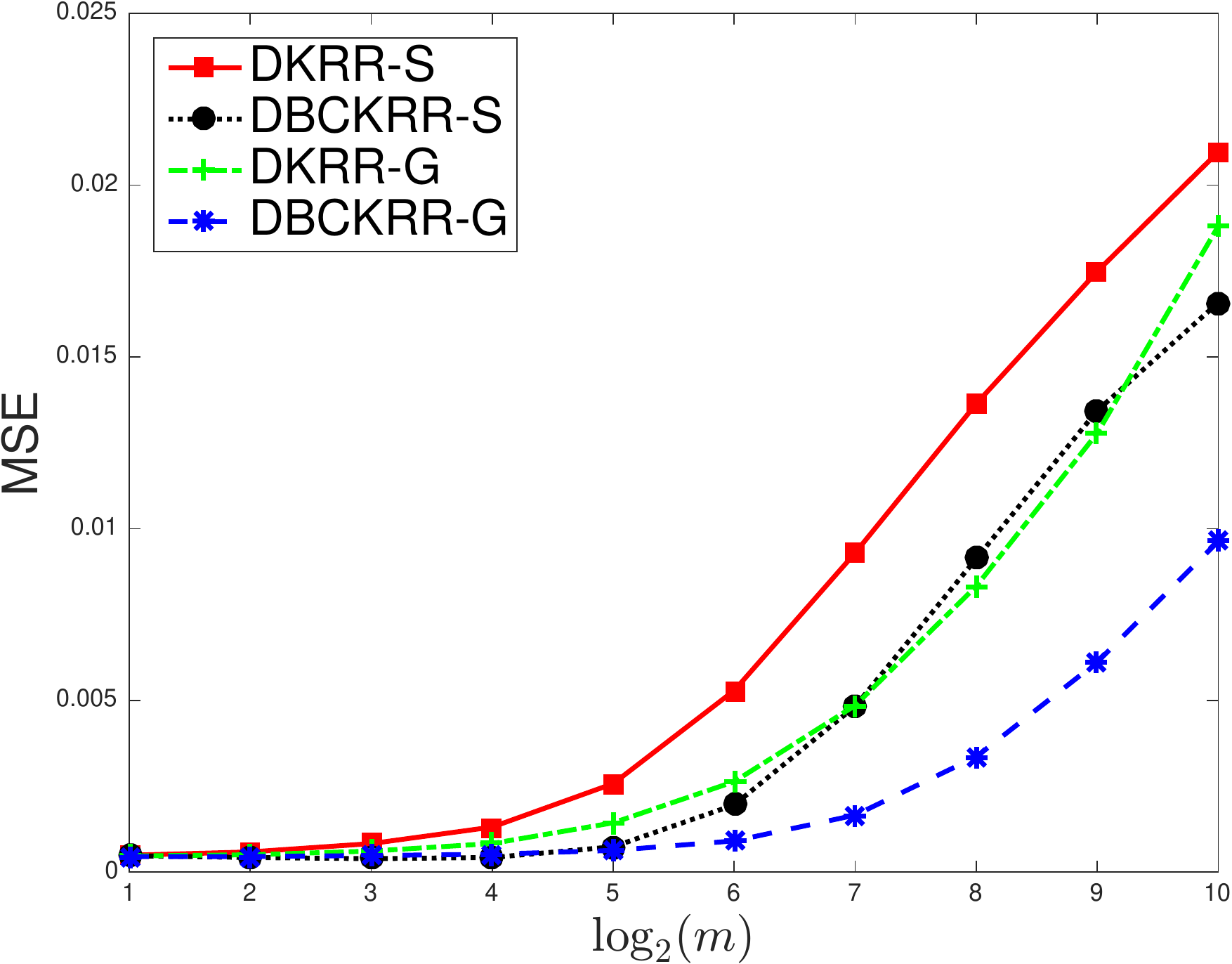}
		
	{ }	\hfill (a) \hfill \hfill  (b) \hfill { }
\end{figure}

\section{A framework for fast rate analysis of distributed regression}
\label{sec:framework}

In this section we first establish a general procedure to prove fast rate for
a class of regression algorithms that possess certain special features
and apply to the two algorithms we study in this paper.

\begin{definition}
 Let $\Lambda$ be a parameter space and $\calH$ a set of hypothesis functions.  A regression algorithm  $\mathcal A: (\calX\times \calY)^N \times  \Lambda \to \calH$ that tunes parameters in $\Lambda$ is \emph{response weighted} if for any data $D\in (\calX\times \calY)^N$ and any $\lambda \in \Lambda$ there exists a vector of  functions $g_1(D_x, \lambda), \ldots, g_N(D_x, \lambda)$ which depend only on the input data $D_x = \{x_1, \ldots, x_N\}$ and the parameter $\lambda$ but not on the output data $D_y= \{y_i\}_{i=1}^N$
 such that
 $$\mathcal A(D, \lambda ) = \sum_{i=1}^N y_i g_i(D_x, \lambda).$$
\end{definition}

There are many regression algorithms belonging to the response weighted family. By the representation of $f_{D,\lambda}$ and $f^\sharp_{D,\lambda}$, it is easy to verify that both KRR  and BCKRR belong to this family. There are more examples in the literature. For instance, traditional multiple linear regression, the kernel smooth estimators, the stochastic gradient descent algorithms associated with the least square loss are all response weighted regression algorithms.

Denote by $\overline {\mathcal A}$ the distributed algorithm that applies the divide and conquer approach and  uses $\mathcal A$ as the base algorithm on local subsets. It is easy to derive the following lemma.

\begin{lemma}
 If $\mathcal A$ is a response weighted regression algorithm, then $\overline {\mathcal A}$ is also a response weighted regression algorithm.
\end{lemma}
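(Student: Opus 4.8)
The plan is to unwind the definition of $\overline{\mathcal A}$ and reorganize the averaged local models into a single response weighted sum over the entire sample. First I would fix the data $D$ together with its partition $D = \bigcup_{\ell=1}^m D_\ell$, where $D_\ell = \{(x_{\ell,j}, y_{\ell,j})\}_{j=1}^n$ with $n = N/m$, and recall that the distributed estimator is the equal-weight average $\overline{\mathcal A}(D,\lambda) = \frac1m \sum_{\ell=1}^m \mathcal A(D_\ell, \lambda)$.

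Since $\mathcal A$ is response weighted, applying its representation on each subset $D_\ell$ produces coefficient functions $g_1((D_\ell)_x,\lambda), \ldots, g_n((D_\ell)_x,\lambda)$ that depend only on the local inputs $(D_\ell)_x = \{x_{\ell,1},\ldots,x_{\ell,n}\}$ and the parameter $\lambda$, and satisfy $\mathcal A(D_\ell, \lambda) = \sum_{j=1}^n y_{\ell,j}\, g_j((D_\ell)_x, \lambda)$. Substituting this into the average yields the double sum $\overline{\mathcal A}(D,\lambda) = \frac1m \sum_{\ell=1}^m \sum_{j=1}^n y_{\ell,j}\, g_j((D_\ell)_x, \lambda)$.

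The remaining step is purely a change of indexing. Each global index $i \in \{1,\ldots,N\}$ corresponds, under the fixed partition, to a unique pair $(\ell,j)$; I would then define $\tilde g_i(D_x, \lambda) = \frac1m\, g_j((D_\ell)_x, \lambda)$ for the pair $(\ell,j)$ matched to $i$ and rewrite the double sum as $\overline{\mathcal A}(D,\lambda) = \sum_{i=1}^N y_i\, \tilde g_i(D_x, \lambda)$, which is exactly the response weighted form.

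The only point requiring care---and the closest thing to an obstacle in an otherwise bookkeeping argument---is verifying that every $\tilde g_i$ depends on the responses through nothing at all. This holds because the partition of $D$ into subsets is determined by the input data (or by a fixed rule independent of the $y_i$), and each $g_j((D_\ell)_x,\lambda)$ already depends only on the subset inputs, which form part of $D_x$. Hence $\tilde g_i$ is a function of $D_x$ and $\lambda$ alone, and no analytic estimates are needed to complete the verification.
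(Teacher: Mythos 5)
Your argument is correct and is exactly the routine reindexing the paper has in mind (the paper states this lemma without proof, calling it "easy to derive"): substitute the response weighted representation of $\mathcal A$ on each block, absorb the factor $\frac1m$ into the coefficient functions, and observe that the resulting $\tilde g_i$ depend only on $D_x$ and $\lambda$ since the partition is independent of the responses. Nothing further is needed.
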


This lemma simply says that the response weighted feature can be inherited by the distributed algorithm. This property allows to derive fast rates for such distributed algorithms by studying error bounds of the base algorithm, as shown in the following theorem.

\begin{theorem}
	\label{thm:framework}
	Let  $\mathcal A$ be a response weighted regression algorithm and, when applying to a data $D$ of $N$ observations with parameter $\lambda,$ have the error bound as
	$$ \bE \left [ \| \mathcal A(D, \lambda ) - f^*\|_\L2p^2\right] \le \varepsilon(N, \lambda, \sigma^2).$$
	Then the corresponding distributed algorithm $\overline{\mathcal A}$ will have the error bound
	$$\bE \left [ \|\overline {\mathcal A}(D, \lambda ) - f^*\|_\L2p^2\right] \le \frac{\varepsilon(n, \lambda, \sigma^2)} {m} + \varepsilon(n, \lambda, 0) $$
	 assuming the whole data $D$ is equally split into $m$ subsets with each subset containing $n$ observations.
\end{theorem}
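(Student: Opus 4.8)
The plan is to exploit the defining linearity of a response weighted algorithm to split each local estimator into a ``noise free'' piece and a ``pure noise'' piece, and then to argue that averaging over the $m$ machines shrinks only the noise piece by a factor $1/m$ while leaving the noise free piece controlled by $\varepsilon(n,\lambda,0)$. Write the local estimators as $f_\ell = \mathcal A(D_\ell,\lambda) = \sum_{i=1}^n y_{\ell,i}\,g_{\ell,i}$, where $g_{\ell,i}=g_i((D_\ell)_x,\lambda)$ depends only on the inputs of $D_\ell$. Setting $\eta_{\ell,i} = y_{\ell,i}-f^*(x_{\ell,i})$ together with
$$f_\ell^0 = \sum_{i=1}^n f^*(x_{\ell,i})\,g_{\ell,i}, \qquad \xi_\ell = \sum_{i=1}^n \eta_{\ell,i}\,g_{\ell,i},$$
we have $f_\ell = f_\ell^0 + \xi_\ell$. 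The key observation is that $f_\ell^0$ is exactly the output of $\mathcal A$ on the \emph{noise free} data $\{(x_{\ell,i}, f^*(x_{\ell,i}))\}$, i.e.\ on a regression problem with $\sigma^2=0$; moreover, since the $g_{\ell,i}$ are functions of the inputs only and $\bE[\eta_{\ell,i}\mid x_{\ell,i}]=0$, conditioning on all the input data gives $\bE[\xi_\ell \mid \text{inputs}]=0$, so $f_\ell^0$ is the conditional mean of $f_\ell$.

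Next I would decompose $\overline{\mathcal A}(D,\lambda)-f^* = A + B$ with $A = \frac1m\sum_\ell (f_\ell^0 - f^*)$ and $B = \frac1m\sum_\ell \xi_\ell$, and expand $\bE[\|A+B\|_\L2p^2]$. First, the cross term vanishes: conditioning on all inputs makes $A$ deterministic while $\bE[B\mid\text{inputs}]=0$, so $\bE[\langle A,B\rangle_\L2p]=0$. For $A$, Jensen's inequality gives $\|A\|_\L2p^2 \le \frac1m\sum_\ell \|f_\ell^0-f^*\|_\L2p^2$, and since each subset is an i.i.d.\ sample of size $n$ the noise free error bound yields $\bE[\|A\|_\L2p^2]\le \bE[\|f_1^0-f^*\|_\L2p^2]\le \varepsilon(n,\lambda,0)$. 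For $B$, the $\xi_\ell$ are independent across machines and each has mean zero, so the off-diagonal terms drop and $\bE[\|B\|_\L2p^2] = \frac1m\,\bE[\|\xi_1\|_\L2p^2]$.

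It then remains to bound $\bE[\|\xi_1\|_\L2p^2]$ by $\varepsilon(n,\lambda,\sigma^2)$. This follows from the same conditional mean zero property applied to a single subset: writing $f_1-f^* = \xi_1 + (f_1^0-f^*)$ and using that the cross term vanishes under conditioning on inputs gives the Pythagorean identity $\bE[\|f_1-f^*\|_\L2p^2] = \bE[\|\xi_1\|_\L2p^2] + \bE[\|f_1^0-f^*\|_\L2p^2]$, whence $\bE[\|\xi_1\|_\L2p^2]\le \bE[\|f_1-f^*\|_\L2p^2]\le \varepsilon(n,\lambda,\sigma^2)$ by the hypothesized single data set bound. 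Combining the three estimates yields $\bE[\|\overline{\mathcal A}(D,\lambda)-f^*\|_\L2p^2] \le \varepsilon(n,\lambda,0) + \varepsilon(n,\lambda,\sigma^2)/m$, as claimed.

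The step I expect to be the crux is the systematic use of ``condition on all inputs, then $\xi_\ell$ averages to zero'', which is precisely where the response weighted structure is indispensable. For a general nonlinear algorithm the conditional mean of $f_\ell$ would not equal the noise free estimator $f_\ell^0$, the cross terms would not vanish, and there would be no clean way to isolate the $1/m$ variance reduction. The hypothesis that the single data set bound $\varepsilon$ factors the impact of the unexplained variance is what guarantees that $\varepsilon(n,\lambda,0)$ is a legitimate bound for the noise free estimator; the entire argument hinges on being able to invoke the same bound at both $\sigma^2$ and $0$.
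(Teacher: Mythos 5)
Your proof is correct and follows essentially the same route as the paper: both arguments rest on the key observation that the response-weighted structure makes the conditional mean of the local estimator given the inputs equal to the estimator $\mathcal A(D_\ell^*,\lambda)$ on the noise-free data, and both then combine an orthogonal bias--variance split, independence across machines for the $1/m$ factor, Jensen's inequality, and the hypothesized bound invoked at both $\sigma^2$ and $0$. The only cosmetic difference is that you center the decomposition at the conditional mean given the inputs and use a per-machine Pythagorean identity to bound the noise term, whereas the paper centers at the unconditional mean $\bE\left[\mathcal A(D_\ell,\lambda)\right]$ and converts its variance term via $\bE\left[\|X-\bE X\|_\L2p^2\right]=\bE\left[\|X-f^*\|_\L2p^2\right]-\|\bE X-f^*\|_\L2p^2$; the two bookkeeping schemes are equivalent and yield the identical final bound.
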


\begin{proof}
Let $y^*_{\ell, i} = f^*(x_{\ell,i})$ for $1\le \ell \le m,$ $1\le i\le n$ and
$$D_\ell^*= \{(x_{\ell, 1}, y_{\ell, 1}^*), \ldots, (x_{\ell, n}, y_{\ell, n}^*) \}$$
be the noise free observations associated to the subset $D_\ell$.
We have $$(\sigma^*)^2 = \bE\Big[\hbox{var}\left (y_{\ell,i}^*|x_{\ell, i}\right)\Big] = \bE\Big[\left (y_{\ell, i}^* -f^*(x_{\ell,i})\right )^2\Big] =0$$
and therefore \begin{equation}
\bE\left[ \Big \| \mathcal A(D_\ell^*,  \lambda) -f ^*\Big\|_\L2p^2\right] \le \varepsilon(n, \lambda, 0).
\label{eq:bdnonoise}
\end{equation}
Because of the response weighted feature of $\mathcal A$ we have
$$\bE\Big[ \mathcal A(D_\ell, \lambda ) | D_{\ell, x} \Big]
=\sum_{i=1}^n \bE[y_{\ell,i}|x_{\ell, i}] g_{i}(D_{\ell, x}, \lambda)
= \sum_{i=1}^n f^*(x_{\ell, i}) g_{i}(D_{\ell, x}, \lambda)
= \mathcal A(D_\ell^*,  \lambda).$$
Therefore $ \bE\big[ \mathcal A(D_\ell, \lambda )\big] = \bE\big[\mathcal A(D_\ell^*,  \lambda) \big]$ and we have
\begin{align*}
& \bE \left [ \|\overline {\mathcal A}(D, \lambda ) - f^*\|_\L2p^2\right]  \\[1em]
=\ & \bE \left [  \left\| \frac 1 m \sum_{\ell =1}^ m \Big ( \mathcal A (D_\ell, \lambda ) - \bE\left[\mathcal A (D_\ell, \lambda ) \right] \Big) \right    \|_\L2p^2 \right]  +  \left\|  \frac 1m \sum_{\ell =1}^m
 \Big( \bE\left[\mathcal A (D_\ell, \lambda ) \right] -f^*\Big) \right\|_\L2p^2 \\[1em]
\leq \ &  \frac 1 {m^2}  \sum_{\ell =1 }^m  \bE \left [  \Big\|  \mathcal A (D_\ell, \lambda ) - \bE\left[\mathcal A (D_\ell, \lambda ) \right]  \Big    \|_\L2p^2 \right] + \frac 1m \sum_{\ell =1}^m   \Big\|
 \ \bE\left[\mathcal A (D_\ell, \lambda ) \right] -f^*\Big\|_\L2p^2  \\[1em]
=\ &  \frac 1 {m^2}  \sum_{\ell =1 }^m  \bE \left [  \Big\|  \mathcal A (D_\ell, \lambda ) - f^* \Big\|_\L2p^2 \right ] + \left(\frac 1 m-\frac 1 {m^2} \right) \sum_{\ell=1}^m \Big\|  \bE\left[\mathcal A (D_\ell, \lambda ) \right]  - f^* \Big    \|_\L2p^2    \\[1em]
\le\ & \frac{1}{m^2} \sum_{\ell=1}^m \varepsilon(n, \lambda, \sigma^2)
+ \frac 1{m}  \sum_{\ell=1}^m \bE \left[ \Big\|  \mathcal A( D_{\ell}^*, \lambda )  - f^* \Big    \|_\L2p^2\right]     \\[1em]
\le\ & \frac 1{m} \varepsilon(n, \lambda, \sigma^2) +
\varepsilon(n, \lambda, 0),
\end{align*}
where we used \eqref{eq:bdnonoise} in the last step.
\end{proof}

Theorem \ref{thm:framework} states that, if the base algorithm $\mathcal A$ admits sharper error bounds for the noise free data (i.e. $\sigma^2=0$), then the distributed algorithm will be able to converges fast in the sense that the distributed computing is playing a rule and produces an estimator better than the algorithm on a single subset.

In order to use this general framework to derive error bounds and fast rates for distributed algorithm, however, we must study the performance of the base algorithm on a single data with full exploration of the impact on the unexplained variance $\sigma^2$. Although both KRR and BCKRR have already been studied in the literature, the error bounds that clearly involve $\sigma^2$ and allow us to derive error bounds of their distributed version with imperfect kernels are not available. They will be our main objectives in Sections \ref{sec:krr} and \ref{sec:bckrr}.

\section{Preliminaries}
\label{sec:pre}

In this section we provide some notations and preliminary lemmas that will be used in the proofs. For any regularization parameter $\lambda>0$, define \begin{equation}\label{flambda}
f_\lambda = \arg\min_{f\in \H_K} \left \{ \|f-f^\ast\|^2_{L_{\rho_X}^2}+\lambda\|f\|_K ^2  \right \}.
\end{equation}
It is a sample limit version of the KRR estimation $f_{D, \lambda}$
and admits an operator representation
$$ f_\lambda = (\lambda I + L_K)^{-1} L_K f^*.$$
It plays an essential role to characterize the approximation error of
the two algorithms under study. The following lemma is well known.
\begin{lemma}
	\label{lem:approx}
	Under the source condition $f^*=L_K^r u^*$ for some $r> 0$ and
	$u^*\in \L2p$, there hold
	$$\|f_\lambda - f^*\|_\L2p^2 \le \lambda^{\min(2r, 2)} \|u^*\|_\L2p^2.$$
	and
	$$ \lambda \|f_\lambda \|_K^2   \le \lambda ^{\min(2r, 1)} \|u^*\|_\L2p^2.$$
\end{lemma}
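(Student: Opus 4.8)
The plan is to diagonalize every operator against the orthonormal eigensystem of $L_K$ and thereby reduce both inequalities to elementary scalar estimates on the eigenvalues $\tau_i$. First I would exploit the operator representation $f_\lambda = (\lambda I + L_K)^{-1}L_K f^*$. A one-line manipulation gives the bias identity $f_\lambda - f^* = -\lambda(\lambda I + L_K)^{-1} f^*$, and substituting the source condition $f^* = L_K^r u^*$ yields $f_\lambda - f^* = -\lambda(\lambda I + L_K)^{-1}L_K^r u^*$ and $f_\lambda = (\lambda I + L_K)^{-1}L_K^{r+1}u^*$. Expanding $u^* = \sum_i \langle u^*, \phi_i\rangle_\L2p\, \phi_i$ and using $L_K\phi_i = \tau_i\phi_i$, the first quantity becomes
$$\|f_\lambda - f^*\|_\L2p^2 = \sum_i \left(\frac{\lambda \tau_i^r}{\lambda + \tau_i}\right)^2 \langle u^*, \phi_i\rangle_\L2p^2.$$
For the $\calH_K$-norm I would use the eigenbasis form of the isometry \eqref{eq:iso}, namely $\|f\|_K^2 = \sum_i \tau_i^{-1}\langle f, \phi_i\rangle_\L2p^2$ for $f\in\calH_K$, which applied to $f_\lambda$ gives
$$\lambda\|f_\lambda\|_K^2 = \sum_i \frac{\lambda\, \tau_i^{2r+1}}{(\lambda + \tau_i)^2}\,\langle u^*, \phi_i\rangle_\L2p^2.$$
Since $\sum_i \langle u^*, \phi_i\rangle_\L2p^2 = \|u^*\|_\L2p^2$, both claims reduce to the pointwise filter bounds $\big(\tfrac{\lambda t^r}{\lambda+t}\big)^2 \le \lambda^{\min(2r,2)}$ and $\tfrac{\lambda t^{2r+1}}{(\lambda+t)^2} \le \lambda^{\min(2r,1)}$ evaluated at $t=\tau_i$.

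The only analytic tool needed is the interpolation inequality $\lambda^{1-s}t^s \le (1-s)\lambda + s t \le \lambda + t$ for $0\le s\le 1$ (weighted AM--GM), which rearranges to $\tfrac{\lambda t^s}{\lambda + t} \le \lambda^s$. Taking $s=r$ settles the first bound when $r\le 1$; taking $s=2r$ and then peeling off one factor $\tfrac{t}{\lambda+t}\le 1$ settles the second bound when $r\le\tfrac12$. In these two unsaturated regimes the source exponent and the target exponent agree and no normalization is required.

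The main obstacle is the saturated regime, $r>1$ for the first bound and $r>\tfrac12$ for the second, where $\min(2r,2)=2$ and $\min(2r,1)=1$ no longer match the source exponent. Here I would factor $t^r = t\cdot t^{r-1}$ (respectively $t^{2r+1}=t^2\cdot t^{2r-1}$), bound $\tfrac{\lambda t}{\lambda+t}\le\lambda$ and $\big(\tfrac{t}{\lambda+t}\big)^2\le 1$, and then control the leftover power $t^{r-1}$ (respectively $t^{2r-1}$) by the operator norm of $L_K$. This is exactly where the normalization $\kappa=\sup_{x\in\calX}\sqrt{K(x,x)}\le 1$ is used: since $\tau_i\le\|L_K\|\le\mathrm{Tr}(L_K)=\int_\calX K(x,x)\,\mathrm{d}\rho_{\!_\calX}\le\kappa^2\le 1$, every leftover power is at most one and the stated constant $\|u^*\|_\L2p^2$ is preserved. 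I would also remark that convergence of all series is guaranteed solely by $u^*\in\L2p$, and that the $\calH_K$-norm identity applies legitimately because $f_\lambda$ lies in the range of $(\lambda I + L_K)^{-1}L_K$ and hence in $\calH_K$.
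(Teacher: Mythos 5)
Your proof is correct, and there is nothing in the paper to compare it against: the authors state Lemma \ref{lem:approx} without proof, citing it as ``well known.'' Your route---diagonalizing against the eigensystem of $L_K$, using the bias identity $f_\lambda-f^*=-\lambda(\lambda I+L_K)^{-1}f^*$ and the isometry \eqref{eq:iso} to reduce both claims to the scalar filter bounds $\bigl(\tfrac{\lambda t^{r}}{\lambda+t}\bigr)^2\le\lambda^{\min(2r,2)}$ and $\tfrac{\lambda t^{2r+1}}{(\lambda+t)^2}\le\lambda^{\min(2r,1)}$, settled by weighted AM--GM---is exactly the standard argument for such statements, and all of your algebraic steps check out. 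One substantive point you raise deserves emphasis: in the unsaturated regimes ($r\le 1$ for the first bound, $r\le\frac12$ for the second) the stated constant $\|u^*\|_\L2p^2$ comes out with no side condition, but in the saturated regimes the leftover eigenvalue powers $\tau_i^{r-1}$ and $\tau_i^{2r-1}$ genuinely require $\|L_K\|\le 1$. The paper never imposes the normalization $\kappa\le 1$ (indeed $\kappa$ appears as an explicit constant throughout its other bounds), so as literally stated the lemma holds with constant exactly $\|u^*\|_\L2p^2$ only under that normalization; otherwise the saturated cases pick up a harmless but nonzero factor such as $\max(1,\kappa^{4(r-1)})$, resp.\ $\max(1,\kappa^{2(2r-1)})$. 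Since the second inequality saturates already at $r=\frac12$, this touches the range $\frac12<r\le1$ used in the main theorems, so your decision to make the normalization explicit is a genuine improvement in rigor over the source rather than a cosmetic remark.
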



In the analysis of BCKRR, we will need the sample limit of $f_{D, \lambda}^\sharp$ defined by
\begin{equation}\label{eq:flsharp}
f_\lambda^\sharp =
f_\lambda + \lambda(\lambda I +L_K)^{-1} f_\lambda
= (\lambda I + L_K)^{-2} (2\lambda L_K + L_K^2) f^*.
\end{equation}
For this function, we have the following approximation property \cite{guo2017learning}.
\begin{lemma}
	\label{lem:approx2}
	Under the source condition $f^*=L_K^r u^*$ for some $r>0$ and
	$u^*\in \L2p$, there holds
	$$\|f_\lambda^\sharp - f^*\|_\L2p^2 \le \lambda^{\min(2r,4)} \|u^*\|_\L2p^2.$$
\end{lemma}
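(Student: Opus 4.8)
The plan is to reduce everything to a scalar estimate via the spectral calculus of $L_K$, after first simplifying the operator defining $f_\lambda^\sharp - f^*$ into a single clean expression. First I would exploit the closed form in \eqref{eq:flsharp}, namely $f_\lambda^\sharp = (\lambda I + L_K)^{-2}(2\lambda L_K + L_K^2) f^*$, and subtract $f^*$. Putting everything over the common factor $(\lambda I + L_K)^{-2}$ and using $(\lambda I + L_K)^2 = \lambda^2 I + 2\lambda L_K + L_K^2$, the numerator collapses,
$$2\lambda L_K + L_K^2 - (\lambda I + L_K)^2 = -\lambda^2 I,$$
so that
$$f_\lambda^\sharp - f^* = -\lambda^2 (\lambda I + L_K)^{-2} f^*.$$
This identity is the heart of the argument: it exhibits the BCKRR approximation error as a second-order analogue of the KRR error $f_\lambda - f^* = -\lambda(\lambda I + L_K)^{-1} f^*$ underlying Lemma \ref{lem:approx}, and it is what ultimately produces the doubled saturation exponent (2 rather than 1).

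Next I would insert the source condition $f^* = L_K^r u^*$ to get $f_\lambda^\sharp - f^* = -\lambda^2 (\lambda I + L_K)^{-2} L_K^r u^*$, and then apply the eigen-decomposition of $L_K$ on $\L2p$ with eigenvalues $\tau_i \ge 0$ and orthonormal eigenfunctions $\phi_i$. Since $\lambda^2(\lambda I + L_K)^{-2} L_K^r$ is a function of $L_K$, functional calculus yields
$$\|f_\lambda^\sharp - f^*\|_\L2p^2 = \sum_i \left(\frac{\lambda^2 \tau_i^r}{(\lambda + \tau_i)^2}\right)^2 |\langle u^*, \phi_i\rangle_\L2p|^2 \le \left(\sup_{\tau \ge 0} g(\tau)\right)^2 \|u^*\|_\L2p^2,$$
where $g(\tau) = \lambda^2 \tau^r / (\lambda + \tau)^2$. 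Thus the lemma reduces to the single scalar inequality $\sup_{\tau} g(\tau) \le \lambda^{\min(r,2)}$.

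The remaining and only delicate step is this scalar bound, which I would handle by the elementary factor split $\lambda \le \lambda + \tau$ and $\tau \le \lambda + \tau$. For $0 < r \le 2$ one has $2 - r \ge 0$, hence
$$\lambda^{2-r}\tau^r \le (\lambda+\tau)^{2-r}(\lambda+\tau)^r = (\lambda+\tau)^2,$$
which rearranges exactly into $g(\tau) \le \lambda^r$ and gives $\|f_\lambda^\sharp - f^*\|_\L2p^2 \le \lambda^{2r}\|u^*\|_\L2p^2$, the claimed bound since $\min(2r,4) = 2r$ here. For $r > 2$ the same split no longer controls $g$ for large $\tau$, so I would instead use the boundedness of $L_K$: writing $f^* = L_K^2\,(L_K^{r-2} u^*)$ reduces the estimate to the saturated case $r = 2$ (giving $\lambda^4$), at the cost of a factor $\|L_K\|_{\rm op}^{\,r-2}$ that is harmless under the standing normalization $\kappa \le 1$. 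I expect this scalar supremum estimate, and in particular recognizing that the exponent saturates at $2$, to be the main (though still routine) obstacle; everything upstream of it is exact operator algebra.
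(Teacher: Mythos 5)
Your argument is correct, and in fact the paper does not prove this lemma at all --- it simply cites \cite{guo2017learning} --- so your proposal supplies the missing proof, and it does so by what is essentially the standard (and surely the intended) route: the exact algebraic identity $f_\lambda^\sharp - f^* = -\lambda^2(\lambda I + L_K)^{-2}f^*$, followed by spectral calculus and the elementary scalar bound $\lambda^{2-r}\tau^r \le (\lambda+\tau)^2$ for $0<r\le 2$. That part is complete and clean, and it covers every range of $r$ actually used downstream (Lemma \ref{lem:Qsbound} only invokes the bound with exponents $\min(2r,4)$ and $\min(2r+2,4)$ for $r\le \frac32$). The one caveat is your $r>2$ case: factoring $f^*=L_K^2(L_K^{r-2}u^*)$ costs a factor $\|L_K\|^{2(r-2)}\le \kappa^{4(r-2)}$, and the paper imposes no normalization $\kappa\le 1$, so the constant $1$ in the stated inequality is not actually attainable for $r>2$ and large $\|L_K\|$ (e.g.\ $\tau=10$, $r=3$, $\lambda=0.1$ gives $\lambda^2\tau^r/(\lambda+\tau)^2\approx 0.098 > \lambda^2$). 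This is a (harmless, standard) imprecision in the lemma as stated --- the same issue afflicts Lemma \ref{lem:approx} for $r>1$ --- rather than a flaw in your argument, and you are right to flag it; the honest fix is either to assume $\kappa\le1$ or to absorb a power of $\max(1,\kappa)$ into the constant.
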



The following lemma can be derived from operator monotone property; see e.g. \cite{sun2009note}.
\begin{lemma}
	\label{lem:op}
	If $T_1$ and $T_2$  are two bounded self-adjoint positive operators,
	then for any $\alpha \in [0, 1]$ there holds
	$$\|T_1^\alpha -T_2^\alpha\| \le \|T_1 - T_2\|^\alpha.$$
\end{lemma}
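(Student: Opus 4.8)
The plan is to deduce the H\"older-type bound from one-sided operator inequalities, exploiting that for a bounded self-adjoint operator $A$ the norm $\|A\|$ equals the smallest $c\ge 0$ with $-cI\le A\le cI$. Writing $\delta=\|T_1-T_2\|$, so that $-\delta I\le T_1-T_2\le \delta I$, it therefore suffices to prove the two one-sided bounds $T_1^\alpha-T_2^\alpha\le \delta^\alpha I$ and $T_2^\alpha-T_1^\alpha\le \delta^\alpha I$; the conclusion $\|T_1^\alpha-T_2^\alpha\|\le \delta^\alpha=\|T_1-T_2\|^\alpha$ then follows at once. Since the two bounds are interchanged by swapping $T_1$ and $T_2$, I only need to establish the first.

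For the first bound I would argue as follows. From $T_1-T_2\le \delta I$ we get $0\le T_1\le T_2+\delta I$, where both sides are bounded self-adjoint positive operators. The central ingredient is the L\"owner--Heinz operator monotonicity of the power function: for $\alpha\in[0,1]$ the map $t\mapsto t^\alpha$ is operator monotone on $[0,\infty)$, so $0\le A\le B$ forces $A^\alpha\le B^\alpha$. Applying this with $A=T_1$ and $B=T_2+\delta I$ gives $T_1^\alpha\le (T_2+\delta I)^\alpha$. It then remains to bound $(T_2+\delta I)^\alpha$ by $T_2^\alpha+\delta^\alpha I$, and here the key simplification is that $\delta I$ commutes with $T_2$, so $(T_2+\delta I)^\alpha$ is a function of the single operator $T_2$; by the spectral theorem the operator inequality reduces to the elementary scalar estimate $(t+\delta)^\alpha\le t^\alpha+\delta^\alpha$ for $t,\delta\ge 0$, which is just the subadditivity of $s\mapsto s^\alpha$ (a consequence of its concavity together with $0^\alpha=0$). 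Chaining these gives $T_1^\alpha\le T_2^\alpha+\delta^\alpha I$, the first one-sided bound.

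The only genuinely nontrivial step is the operator monotonicity of $t\mapsto t^\alpha$, i.e.\ the L\"owner--Heinz inequality, which is where I expect all the real work to sit; everything else is spectral-calculus bookkeeping plus a scalar inequality. I would either cite it directly (it is the standard tool referenced in \cite{sun2009note}) or, for a self-contained argument, derive it from the integral representation $t^\alpha=\frac{\sin(\alpha\pi)}{\pi}\int_0^\infty \frac{t}{t+s}\,s^{\alpha-1}\,ds$, valid for $0<\alpha<1$, since each integrand $t\mapsto t/(t+s)=1-s/(t+s)$ is operator monotone (the resolvent $-(t+s)^{-1}$ being operator monotone, and a nonnegative combination of operator monotone functions, passing to the integral limit, remains operator monotone); the endpoint cases $\alpha=0,1$ are trivial. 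Throughout, positivity of $T_1$ and $T_2$ is essential, both to apply the power function and to ensure $T_2+\delta I\ge 0$ so that L\"owner--Heinz is available.
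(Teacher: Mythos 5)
Your proof is correct and follows exactly the route the paper indicates: the paper gives no written proof but states that the lemma ``can be derived from operator monotone property,'' and your argument is the standard derivation via the L\"owner--Heinz inequality, the reduction to the two one-sided bounds $\pm(T_1^\alpha - T_2^\alpha)\le \|T_1-T_2\|^\alpha I$, and the scalar subadditivity $(t+\delta)^\alpha\le t^\alpha+\delta^\alpha$. Nothing further is needed.
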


The Lemma \ref{lem:var} below follows from simple calculation.
\begin{lemma}
	\label{lem:var}
	Let $\xi$ be a random variable with values in a Hilbert space $\mathcal H$  and $\{\xi_1,
	\ldots, \xi_N \}$ be a set of i.i.d. observations for $\xi$.
	Then
	$$\bE\left[ \left \|\frac 1 N \sum_{i=1}^N \xi_i - \bE \xi\right\|^2 \right]
	\le \frac {\bE[\|\xi\|^2]}{N}.$$	
\end{lemma}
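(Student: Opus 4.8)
The plan is to reduce everything to the bias--variance decomposition of the sample mean, carried out directly in the Hilbert space $\H$ via the inner product. First I would center the observations: set $\eta_i = \xi_i - \bE\xi$, so that the $\eta_i$ are i.i.d., mean zero, and
\begin{equation*}
\frac 1 N \sum_{i=1}^N \xi_i - \bE\xi = \frac 1 N \sum_{i=1}^N \eta_i.
\end{equation*}
The quantity to be bounded is then $\bE\big[\|\frac 1N\sum_{i=1}^N \eta_i\|^2\big]$.

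Next I would expand the squared norm through the inner product of $\H$, writing
\begin{equation*}
\left\| \frac 1 N \sum_{i=1}^N \eta_i \right\|^2
= \frac 1 {N^2} \sum_{i=1}^N \sum_{j=1}^N \langle \eta_i, \eta_j\rangle,
\end{equation*}
and take expectation term by term. The key step is to observe that the off-diagonal terms vanish: for $i\neq j$, independence of $\eta_i$ and $\eta_j$ gives $\bE[\langle \eta_i, \eta_j\rangle] = \langle \bE\eta_i, \bE\eta_j\rangle = 0$, since both factors are centered. Only the $N$ diagonal terms survive, each contributing $\bE[\|\eta\|^2]$ where $\eta = \xi - \bE\xi$, so that
\begin{equation*}
\bE\left[ \left\| \frac 1 N \sum_{i=1}^N \eta_i \right\|^2 \right]
= \frac 1 {N^2} \cdot N \cdot \bE[\|\eta\|^2]
= \frac {\bE[\|\xi - \bE\xi\|^2]} N.
\end{equation*}

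Finally I would discard the bias term: expanding $\bE[\|\xi-\bE\xi\|^2] = \bE[\|\xi\|^2] - \|\bE\xi\|^2 \le \bE[\|\xi\|^2]$ yields the claimed bound. The only point requiring a word of care -- the mild ``obstacle'' in an otherwise routine computation -- is justifying the interchange of expectation with the inner product and the factorization $\bE[\langle \eta_i, \eta_j\rangle] = \langle \bE\eta_i, \bE\eta_j\rangle$ for independent $\H$-valued variables; this is legitimate because $\bE[\|\xi\|^2]<\infty$ ensures the relevant Bochner integrals exist and the inner product is continuous, after which independence lets the expectation pass through each argument. Everything else is bookkeeping.
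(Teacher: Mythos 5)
Your proof is correct and is precisely the ``simple calculation'' the paper alludes to without writing out: center the variables, expand the squared norm via the inner product so the off-diagonal terms vanish by independence, and drop the nonnegative term $\|\bE\xi\|^2$ at the end. Nothing further is needed.
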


Let $\hbox{HS}(\mathcal H_K)$ be the class of Hilbert-Schmidt operators on $\H_K$. It is known $\hbox{HS}(\mathcal H_K)$ forms a Hilbert space with the Hilbert-Schmidt norm which, give an operator $A$  on $\H_K,$  is defined by  $\|A\|_{\hbox{\scriptsize HS}}^2 = \hbox{Trace}(A^*A).$ Then $L_K$ as an operator on $\H_K$ belongs to $\hbox{HS}(\mathcal H_K)$ and $\|L_K\|_{\hbox{\scriptsize  HS}} \le \kappa^2$; see e.g. \cite{sun2009application}.
 Define the rank one operator $K_x\otimes K_x$ by
 $$  K_x\otimes K_x f = \langle f, K_x\rangle_K K_x = f(x)K_x.$$
 Then $\|K_x\otimes K_x \|_{\hbox{\scriptsize HS}} \le \kappa^2.$
 Note for any all Hilbert-Schmidt operator $A$ there holds
 $\|A\|\le \|A\|_{\hbox{\scriptsize  HS}}$.
 Applying Lemma \ref{lem:var} to $\xi = K_x\otimes K_x$  we have the following lemma.
 \begin{lemma}
 	\label{lem:S}
 	We have
 	$$\bE \left[ \left\| \frac 1 N S^*S -L_K\right\|^2 \right] \le \frac{\kappa^4}{N} $$
 \end{lemma}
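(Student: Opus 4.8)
The plan is to recognize $\frac1N S^*S$ as the empirical mean of the i.i.d.\ Hilbert--Schmidt--operator-valued random variables $\xi_i = K_{x_i}\otimes K_{x_i}$ and $L_K$ as their common expectation, so that the desired estimate becomes a direct instance of the Hilbert-space variance bound in Lemma \ref{lem:var}, applied inside the Hilbert space $\hbox{HS}(\calH_K)$.

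First I would unwind the definitions of $S$ and $S^*$ to get, for every $f\in\calH_K$,
\[
S^*S f = \sum_{i=1}^N f(x_i) K_{x_i} = \sum_{i=1}^N (K_{x_i}\otimes K_{x_i}) f,
\]
so that $\frac1N S^*S = \frac1N\sum_{i=1}^N \xi_i$ with $\xi_i = K_{x_i}\otimes K_{x_i}$. Next I would verify that $L_K = \bE[K_x\otimes K_x]$ as operators on $\calH_K$: for $f\in\calH_K$ the reproducing property gives $(K_x\otimes K_x) f = f(x) K_x$, and taking expectation over $x\sim\px$ together with the symmetry $K(t,y)=K(y,t)$ yields $\bE[f(x)K_x] = L_K f$. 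Since the $x_i$ are i.i.d.\ draws from $\px$, the $\xi_i$ are i.i.d.\ copies of $\xi = K_x\otimes K_x$ with $\bE\xi = L_K$.

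With this identification in place, I would apply Lemma \ref{lem:var} in $\hbox{HS}(\calH_K)$ using the Hilbert--Schmidt norm as the ambient Hilbert-space norm, obtaining
\[
\bE\!\left[\Big\|\tfrac1N S^*S - L_K\Big\|_{\hbox{\scriptsize HS}}^2\right]
\le \frac{\bE[\|\xi\|_{\hbox{\scriptsize HS}}^2]}{N}.
\]
The stated bound $\|K_x\otimes K_x\|_{\hbox{\scriptsize HS}}\le\kappa^2$ gives $\bE[\|\xi\|_{\hbox{\scriptsize HS}}^2]\le\kappa^4$, and finally the inequality $\|A\|\le\|A\|_{\hbox{\scriptsize HS}}$ lets me bound the operator norm on the left by the Hilbert--Schmidt norm, delivering $\bE[\|\frac1N S^*S - L_K\|^2]\le\kappa^4/N$.

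The only point requiring genuine care is the middle step: confirming that $\xi=K_x\otimes K_x$ is a bona fide $\hbox{HS}(\calH_K)$-valued random variable with $\bE\xi = L_K$, i.e.\ that the operator expectation (a Bochner integral in $\hbox{HS}(\calH_K)$) is well defined and coincides with $L_K$. This is guaranteed by the uniform bound $\|\xi\|_{\hbox{\scriptsize HS}}\le\kappa^2<\infty$, which ensures integrability and justifies interchanging the expectation with the operator action; once this is settled, the remaining steps are routine.
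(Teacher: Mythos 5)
Your proposal is correct and follows essentially the same route as the paper: the paper likewise identifies $\frac1N S^*S$ as the empirical mean of the rank-one operators $K_{x_i}\otimes K_{x_i}$ with expectation $L_K$, applies Lemma \ref{lem:var} in $\hbox{HS}(\calH_K)$ using $\|K_x\otimes K_x\|_{\hbox{\scriptsize HS}}\le\kappa^2$, and passes from the Hilbert--Schmidt norm to the operator norm via $\|A\|\le\|A\|_{\hbox{\scriptsize HS}}$. Your additional care about the Bochner integrability of $\xi$ is a sensible (if implicit in the paper) verification and does not change the argument.
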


\medskip

Define $$ Q(\lambda,\,N) = \min_{f\in \H_K}\left \{ \|f-f^\ast\|_\L2p^2+ \lambda \|f\|_K^2\right\}  - \lambda  \bE\left[ \| f_{D, \lambda}\|_K^2\right]   .$$
Note that  $Q(\lambda, N)$ depends on the regularization parameter $\lambda$ and sample size $N$, not on the data $D$. We will need a sharp bound for $Q(\lambda, N).$
\begin{lemma}
	\label{lem:Qbound}
Assume $f^*=L_K^r u^*$ for some $r > 0$,
$u^*\in \L2p$.
	\begin{enumerate}
		\item [{\rm (i)}] If $0< r\le \frac 12,$ then
		$$ Q(\lambda, N) \le 2 \lambda^{2r} \|u^*\|_\L2p^2. $$
		\item[{\rm (ii)}]  If $\frac 12 < r \le 1$, then
		$$ Q(\lambda, N) \le \lambda^{2r} \|u^*\|_\L2p^2 \left(
		1+ \frac {2\kappa^{2r} }{ \lambda ^r  N^{\frac r 2 +\frac 14}}
		+ \frac{2\kappa}{\sqrt{\lambda N}} \right).$$
		\item[{\rm (iii)}]  If $1<  r \le \frac 3 2$, then
		$$ Q(\lambda, N) \le \lambda^{2} \|u^*\|_\L2p^2 \left(
		1+ \frac {2\kappa^{2r} }{ \lambda   N^{\frac r 2 +\frac 14}}
		+ \frac{2\kappa \lambda^{r-1}}{\sqrt{\lambda N}} \right).$$
		\item[{\rm (iv)}]  If $r>  \frac 3 2$, then
		$$ Q(\lambda, N) \le \lambda^{2} \|u^*\|_\L2p^2 \left(
		1+ \frac {2\kappa^{3} }{ \lambda   N}
		+ \frac{2\kappa }{\sqrt{ N}} \right).$$
 	\end{enumerate}
\end{lemma}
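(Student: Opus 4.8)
The plan is to first unpack the definition of $Q(\lambda, N)$ by exploiting the variational characterization of $f_\lambda$ from \eqref{flambda}. Since $f_\lambda$ minimizes $\|f - f^*\|_\L2p^2 + \lambda\|f\|_K^2$, the first term in $Q$ equals $\|f_\lambda - f^*\|_\L2p^2 + \lambda\|f_\lambda\|_K^2$. Therefore
$$
Q(\lambda, N) = \|f_\lambda - f^*\|_\L2p^2 + \lambda \|f_\lambda\|_K^2 - \lambda \bE\left[\|f_{D,\lambda}\|_K^2\right].
$$
For part (i) the idea is to simply throw away the (negative) expectation term, bounding $Q(\lambda, N) \le \|f_\lambda - f^*\|_\L2p^2 + \lambda\|f_\lambda\|_K^2$. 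Applying Lemma \ref{lem:approx} gives $\|f_\lambda - f^*\|_\L2p^2 \le \lambda^{2r}\|u^*\|_\L2p^2$ and $\lambda\|f_\lambda\|_K^2 \le \lambda^{2r}\|u^*\|_\L2p^2$ (using $\min(2r,1) = 2r$ since $r \le \frac12$), which sum to exactly $2\lambda^{2r}\|u^*\|_\L2p^2$. So part (i) should be almost immediate.

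For parts (ii)--(iv) the crux is that we can no longer afford to discard the term $\lambda\bE[\|f_{D,\lambda}\|_K^2]$; we need to use it to cancel most of $\lambda\|f_\lambda\|_K^2$, which for $r > \frac12$ is of order $\lambda$ (not $\lambda^{2r}$) and hence too large. The natural move is to write
$$
\lambda\|f_\lambda\|_K^2 - \lambda\bE\left[\|f_{D,\lambda}\|_K^2\right]
= \lambda\left(\|f_\lambda\|_K^2 - \bE\left[\|f_{D,\lambda}\|_K^2\right]\right)
$$
and recognize this difference as controlled by how close the empirical operator $\frac1N S^*S$ is to its population version $L_K$. Since $f_{D,\lambda} = \frac1N(\lambda I + \frac1N S^*S)^{-1}S^*\by$ via \eqref{eq:krrop} and $f_\lambda = (\lambda I + L_K)^{-1}L_K f^*$, both are obtained by applying the resolvent-type operator to (a sampled or population version of) the data. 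I would compare the noise-free version of $f_{D,\lambda}$ (where $\by$ is replaced by $Sf^*$) to $f_\lambda$ using a resolvent identity $(\lambda I + A)^{-1} - (\lambda I + B)^{-1} = (\lambda I + A)^{-1}(B - A)(\lambda I + B)^{-1}$ with $A = \frac1N S^*S$ and $B = L_K$, then invoke Lemma \ref{lem:S} to bound $\bE\|\frac1N S^*S - L_K\|$ (or its square) by $\kappa^4/N$, and Lemma \ref{lem:op} to convert this operator-norm gap into the fractional powers $N^{-r/2 - 1/4}$ and $N^{-1/2}$ appearing in the stated bounds. The source condition $f^* = L_K^r u^*$ lets me insert powers of $L_K$ and trade off $\lambda^{2r}$ against $\lambda^2$, with the exponent cap at $\min(2r,2)$ producing the split between (ii)/(iii) at $r=1$ and between (iii)/(iv) at $r=\frac32$.

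\textbf{The main obstacle.} The hard part will be tracking the fractional exponents cleanly. The appearance of $N^{-r/2 - 1/4}$ suggests an interpolation/splitting where I apply Lemma \ref{lem:op} with exponent $\alpha$ tuned to $r$ and then use Cauchy--Schwarz (or Jensen) on the product $\bE\big[\|\frac1N S^*S - L_K\|^{2\alpha}\big] \le (\bE\|\frac1N S^*S - L_K\|^2)^{\alpha}$ to convert $\kappa^4/N$ into $N^{-\alpha}$. Matching the half-integer exponents $\frac r2 + \frac14$ requires choosing $\alpha$ carefully and distributing the powers of $L_K$ from the source condition between the factor that gives $\lambda^{\min(2r,2)}$ and the factor that must remain bounded by a constant (using $\|(\lambda I + L_K)^{-1}L_K^s\| \le \lambda^{s-1}$ for $0 \le s \le 1$, which forces the casework thresholds at $r=1$ and $r=\frac32$ where the available smoothness runs out). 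Keeping the constants clean and verifying that no hidden $\lambda^{-1}$ blows up the bound is where the care is needed; everything else is a routine if tedious resolvent computation.
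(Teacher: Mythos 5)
Your proposal follows essentially the same route as the paper: part (i) is exactly the paper's argument, and for (ii)--(iv) the paper likewise writes $Q=\|f_\lambda-f^*\|_\L2p^2+\lambda\big(\|f_\lambda\|_K^2-\bE[\|f_{D,\lambda}\|_K^2]\big)$, bounds the bracket by $2\langle f_\lambda-\bE[f_{D,\lambda}],f_\lambda\rangle_K$ (the noise drops out exactly as you anticipate), uses the resolvent identity to get $f_\lambda-\bE[f_{D,\lambda}]=(\lambda I+\frac1N S^*S)^{-1}U$ with $U=(\frac1N S^*S-L_K)(f_\lambda-f^*)$, and then combines the source condition with Lemma \ref{lem:op}, H\"older's inequality and Lemma \ref{lem:S}, the casework at $r=1$ and $r=\frac32$ arising just as you describe. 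The one execution detail to watch is that $U$ must be bounded by applying Lemma \ref{lem:var} directly to the $\H_K$-valued random variable $(f_\lambda(x)-f^*(x))K_x$, which gives $\bE[\|U\|_K^2]\le\kappa^2\lambda^{\min(2r,2)}\|u^*\|_\L2p^2/N$; routing this through the operator norm $\|\frac1N S^*S-L_K\|$ times $\|f_\lambda-f^*\|_K$, as your sketch suggests, loses a factor of $\lambda^{1/2}$ and would not reproduce the stated exponents.
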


\begin{proof} By the definition \eqref{flambda} we see
	\begin{equation}
	\label{eq:Q-1}
	Q(\lambda, N) = \|f_\lambda -f^*\|_\L2p ^2 + \lambda \|f_\lambda\|_K^2
	-\lambda \bE \left[ \|f_{D, \lambda}\|_K^2\right].
	\end{equation}
	We see conclusion (i) follows easily from Lemma \ref{lem:approx}.
	
	To prove conclusions (ii)-(iv), the key is to bound $\|f_\lambda\|_K^2
	- \bE \left[ \|f_{D, \lambda}\|_K^2\right].$
	To this end, first note that
	\begin{align*}  \|f_\lambda\|_K^2 - \bE \left[ \|f_{D, \lambda}\|_K^2\right]
	& =  \bE \left[ \left\langle f_\lambda - f_{D, \lambda}, \
	f_\lambda + f_{D, \lambda} \right\rangle _K \right]  \\
	& = 2 \bE \left[ \left\langle f_\lambda - f_{D, \lambda}, \
	f_\lambda  \right\rangle _K  - \|f_\lambda - f_{D, \lambda}\|_K^2 \right] \\
	& \le   2 \bE \left[ \left\langle f_\lambda - f_{D, \lambda}, \
	f_\lambda  \right\rangle _K  \right] \\
	& =  2   \left\langle f_\lambda - \bE[f_{D, \lambda}], \
	f_\lambda  \right\rangle _K   .
	\end{align*}
	By the operator representation \eqref{eq:krrop} of $f_{D,\lambda}$ we have \begin{equation*}
\bE[f_{D, \lambda}] = \bE\left[ \frac 1 N \left( \lambda I +
	\frac 1 N S^*S\right)^{-1} S^*S f^*\right] .
\end{equation*}
It is easy to verify that $	f_{\lambda}- \bE[f_{D, \lambda}]  =\left(\lambda I+\frac{1}{N}S ^\ast S \right)^{-1} U$ with $U= \big( \frac 1N S^*S -L_K\big)(f_\lambda-f^*).$
Applying Lemma \ref{lem:var} to the $\H_K$-valued random variable
$\xi= (f_\lambda(x)-f^*(x)) K_x$ and by Lemma \ref{lem:approx}  we have
$$ \bE\left[\left \| U\right \|_K^2\right]
\le \kappa^2 \|f_\lambda-f^*\|_\L2p^2 N^{-1}
\le {\kappa^2\|u^*\|_\L2p^2 \lambda ^{\min(2r, 2) } } {N^{-1}}.$$
Then, by the assumption  $f^* = L_K^r u^*$ for $\frac 12 <r\le \frac 32$ and using Lemma \ref{lem:op},
H\"older's inequality, and Lemma \ref{lem:S}, we obtain
	 		\begin{align}
	 	\left\langle f_\lambda - \bE[f_{D, \lambda}], \
	 	f_\lambda  \right\rangle _K  &
	 	=   \bE \left[ \left\langle  \Big(\lambda I+\frac{1}{N}S ^\ast S \Big)^{-1} U, \
	 	\Big (\lambda I + L_K\Big )^{-1} L_K^{1+r} u^*\right\rangle_K \right]
	 	\nonumber  \\[0.5em]
	 	& = \bE  \left[ \left \langle  L_K^{r+\frac{1}{2}}(\lambda  I+L_K)^{-1}
	 	\Big(\lambda I+\frac{1}{N}S^\ast S\Big)^{-1} U,\, L_K^{\frac{1}{2}} u^*  \right\rangle_K  \right] \nonumber \\[0.5em]
	 	&\leq  \|u^*\|_\L2p  \bE \left[ \left\|  L_K^{r-\frac  12}
	 	\bigg(\lambda I+\frac{1}{N}S^\ast S\Big)^{-1} U  \right\|_K\right]
	 	\label{eq:Q-2}\\[1em]
	 	& \le \|u^*\|_\L2p  \bE \left[ \Big( \lambda^{-1} \left\|  L_K^{r-\frac  12}
	 	- \Big(\frac 1 N S^*S \Big)^{r-\frac 12} \right\| + \lambda^{r-\frac 32}  \bigg)
	 	\|  U \|_K\right]\nonumber  \\[0.5em]
	 	& \le \|u^*\|_\L2p  \bE \left[ \Big( \lambda^{-1} \left\|  L_K
	 	- \frac 1 N S^*S  \right\|^{r-\frac 12} + \lambda^{r-\frac 32}  \bigg)
	 	\|  U \|_K\right] \nonumber \\[0.5em]
	 	&\le  \|u^*\|_\L2p  \left\{ \lambda^{-1} \left( \bE \left[ \left\|  L_K
	 	- \sfrac 1 N S^*S \right\|^2\right] \right) ^{\frac r2-\frac 14}  + \lambda^{r-\frac 32}  \right\}
	 	\Big(\bE \left[\|  U \|_K^2\right]\Big)^{\frac 12}  \nonumber \\[0.5em]
	 	& \le  \|u^*\|_\L2p^2 \left( \frac {\kappa^{2r} \lambda^{\min(r-1, 0)} }{ N^{\frac r 2 +\frac 14}}
	 	+ \frac{\kappa \lambda^{\min(2r-\frac 3 2, r-\frac 12)}}{N^{\frac 12}} \right).
	 	\nonumber
	 	\end{align}
Plugging this estimate into \eqref{eq:Q-1} and applying Lemma \ref{lem:approx} again we obtain the desired bounds in conclusion (ii) and conclusion (iii).

If $r > \frac 32,$ then by \eqref{eq:Q-2}, we have
	 		\begin{align*}
\left\langle f_\lambda - \bE[f_{D, \lambda}], \
f_\lambda  \right\rangle _K
	 	& \le \|u^*\|_\L2p  \bE \left[ \Big( \lambda^{-1} \left\|  L_K
- \frac 1 N S^*S  \right\| +1  \bigg)
\|  U \|_K\right]\nonumber  \\[0.5em]
& \le  \|u^*\|_\L2p^2 \left( \frac {\kappa^{3} }{ N}
+ \frac{\kappa  \lambda}{N^{\frac 12}} \right).
\nonumber
\end{align*}
The conclusion (iv) follows by plugging this estimate into \eqref{eq:Q-1}.
\end{proof}

\medskip

Next we define
$$Q^\sharp(\lambda, N )
=\|f_\lambda^\sharp - f^*\|_\L2p^2 + \lambda
\left( \bE\left[ \|f_\lambda^\sharp -f_{D, \lambda} \|_K^2 \right] -
\bE \left[ \|f_{D, \lambda}^\sharp - f_{D, \lambda}\|_K^2 \right] \right).$$
It will be used to derive the error bound for BCKRR when $r \ge 1.$

\begin{lemma}
	\label{lem:Qsbound}
	Assume  $f^*=L_K^r u^*$ for some $\frac 1 2 \leq r \leq \frac 3 2$ and
	$u^*\in \L2p$. If $\lambda\le 1$,  then
\begin{equation*}
Q^\sharp(\lambda, N )  \le
 20(\kappa^3+\kappa)\|u^*\|_\L2p^2  \left( \lambda^{\min(2r,4)}+ \frac{\lambda ^{\min(2r-1, 1)} }{N}
+ \frac{\lambda^{\min(r, 1)} }{N^{\min(\frac r2+\frac 14, 1)}}
+ \frac{ \lambda^{\min(2r-\frac 12, r+\frac 12, 2)} }{N^{\frac 12}}\right)
+ \frac{4\kappa^2\sigma^2}{\lambda N}.
\end{equation*}
\end{lemma}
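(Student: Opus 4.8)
The plan is to reduce $Q^\sharp$ to a fluctuation of BCKRR measured in the $\H_K$ norm and then exploit that this norm is ``one power of $\lambda$ weaker'' than the $\L2p$ norm. Write $Q^\sharp(\lambda,N)=\|f_\lambda^\sharp-f^*\|_\L2p^2+\lambda B$ with $B=\bE[\|f_\lambda^\sharp-f_{D,\lambda}\|_K^2]-\bE[\|f_{D,\lambda}^\sharp-f_{D,\lambda}\|_K^2]$. The first term is dispatched at once by Lemma~\ref{lem:approx2}, giving $\lambda^{\min(2r,4)}\|u^*\|_\L2p^2$. For $B$, substituting $f_\lambda^\sharp-f_{D,\lambda}=(f_\lambda^\sharp-f_{D,\lambda}^\sharp)+(f_{D,\lambda}^\sharp-f_{D,\lambda})$ and expanding gives the exact identity
\[
B=\bE\big[\|f_\lambda^\sharp-f_{D,\lambda}^\sharp\|_K^2\big]+2\,\bE\big[\langle f_\lambda^\sharp-f_{D,\lambda}^\sharp,\ f_{D,\lambda}^\sharp-f_{D,\lambda}\rangle_K\big].
\]

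I would then introduce the filters. With $A=\frac1N S^*S$, $\bfeta=\by-Sf^*$, $g=\frac1N S^*\bfeta$, $\Phi(T)=(2\lambda I+T)(\lambda I+T)^{-2}$ and $\Psi(T)=T\Phi(T)=I-\lambda^2(\lambda I+T)^{-2}$, one has $f_{D,\lambda}^\sharp=\Psi(A)f^*+\Phi(A)g$, $f_\lambda^\sharp=\Psi(L_K)f^*$ and $f_{D,\lambda}^\sharp-f_{D,\lambda}=\lambda(\lambda I+A)^{-2}(Af^*+g)$. Conditioning on $D_x$ and using $\bE[g\mid D_x]=0$ splits $B=B_{\mathrm{det}}+B_{\mathrm{noise}}$. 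The noise parts of the two summands combine: via $\Phi(A)-2\lambda(\lambda I+A)^{-2}=A(\lambda I+A)^{-2}$ they collapse to the single quadratic form $B_{\mathrm{noise}}=\bE[\langle g,\Phi(A)A(\lambda I+A)^{-2}g\rangle_K]$, whose operator has norm at most $2/\lambda^2$; with $\bE\|g\|_K^2\le\kappa^2\sigma^2/N$ (Lemma~\ref{lem:var} applied to the mean-zero variable $\eta K_x$) this yields $\lambda B_{\mathrm{noise}}\le\frac{2\kappa^2\sigma^2}{\lambda N}$, accounting for the $\sigma^2$ term.

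The deterministic part is, by the same expansion, $B_{\mathrm{det}}=\bE\|f_\lambda^\sharp-f_{D,\lambda}^0\|_K^2-\bE\|f_{D,\lambda}^{\sharp,0}-f_{D,\lambda}^0\|_K^2$, where $f_{D,\lambda}^0=(\lambda I+A)^{-1}Af^*$ and $f_{D,\lambda}^{\sharp,0}=\Psi(A)f^*$; dropping the nonnegative subtracted square leaves $B_{\mathrm{det}}\le\bE\|f_\lambda^\sharp-f_{D,\lambda}^0\|_K^2$. Here I would use the exact splitting $f_\lambda^\sharp-f_{D,\lambda}^0=D+(\lambda I+A)^{-1}U$, where $D=\lambda L_K(\lambda I+L_K)^{-2}f^*$ is a deterministic filter and $U=(L_K-A)(f^*-f_\lambda)$ is a sampling term. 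The crux is the $\H_K$-norm spectral bound $\|D\|_K^2\le\lambda^{2r-1}\|u^*\|_\L2p^2$ (finite precisely because $\tfrac12\le r\le\tfrac32$ makes the multiplier $\lambda^2\tau^{2r+1}(\lambda+\tau)^{-4}$ bounded by $\lambda^{2r-1}$), together with $\bE\|U\|_K^2\le\kappa^2\lambda^{\min(2r,2)}\|u^*\|_\L2p^2/N$ (Lemma~\ref{lem:var} with Lemma~\ref{lem:approx}) and $\|(\lambda I+A)^{-1}\|\le1/\lambda$. Expanding the square and using Cauchy--Schwarz on the cross term gives $\lambda B_{\mathrm{det}}\le C\kappa^2\big(\lambda^{2r}+\frac{\lambda^{\min(2r-1,1)}}{N}+\frac{\lambda^{\min(2r-\frac12,r+\frac12)}}{N^{1/2}}\big)\|u^*\|_\L2p^2$; the three exponents fall out from the square of $D$, the square of $(\lambda I+A)^{-1}U$, and their product, respectively.

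Collecting the three contributions and absorbing constants into $20(\kappa^3+\kappa)$ proves the claim; in fact this route is slightly sharper, since it never produces the middle term $\frac{\lambda^{\min(r,1)}}{N^{\min(r/2+1/4,1)}}$, which is therefore free. (Should one instead bound the two summands of $B$ separately, expanding $\Psi(L_K)-\Psi(A)=-\lambda^2[(\lambda I+L_K)^{-2}-(\lambda I+A)^{-2}]$ by resolvent identities and estimating the fractional resolvent-power differences through $\|L_K^\alpha-A^\alpha\|\le\|L_K-A\|^\alpha$ (Lemma~\ref{lem:op}) and $\bE\|L_K-A\|^{2\alpha}\le(\kappa^4/N)^\alpha$ (Lemma~\ref{lem:S}), that middle term does appear, with $\alpha=r-\tfrac12$.) The genuinely delicate point either way is that the fluctuation is measured in $\|\cdot\|_K$, so every estimate loses a factor $\lambda^{-1/2}$ relative to the $\L2p$ analysis of $Q$ in Lemma~\ref{lem:Qbound}, and the admissible range $\tfrac12\le r\le\tfrac32$ is exactly what keeps the relevant $\H_K$-norm filters bounded.
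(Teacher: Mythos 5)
Your proposal is correct, and I verified its key identities: indeed $f_\lambda^\sharp-f_{D,\lambda}^0=D+(\lambda I+A)^{-1}U$ with $D=\lambda L_K(\lambda I+L_K)^{-2}f^*=f_\lambda^\sharp-f_\lambda$ and $U=(L_K-A)(f^*-f_\lambda)$, the spectral bound $\|D\|_K^2\le\lambda^{2r-1}\|u^*\|_\L2p^2$ holds precisely on $\frac12\le r\le\frac32$, and the collapse of the noise to the single positive quadratic form $\bE[\langle g,\Phi(A)A(\lambda I+A)^{-2}g\rangle_K]$ is exact. However, your route differs from the paper's in the deterministic part. The paper starts from the inequality $\|a\|_K^2-\|b\|_K^2\le 2\langle a-b,a\rangle_K$ (rather than your exact identity), removes the noise by the same conditioning on $D_x$, and is then left with $\bE[\langle f_\lambda^\sharp-f_{\bx,\lambda}^\sharp,\,f_\lambda^\sharp-f_{\bx,\lambda}\rangle_K]$, which it splits as $\bE[\|f_\lambda^\sharp-f_{\bx,\lambda}^\sharp\|_K^2]$ plus a cross term: the square is controlled through the identity $\lambda^2 f_\lambda^\sharp=(2\lambda L_K+L_K^2)(f^*-f_\lambda^\sharp)$ together with Lemma~\ref{lem:var} and Lemma~\ref{lem:approx2}, while the cross term is handled by Cauchy--Schwarz with weights $(\lambda I+A)^{\pm1}$ and the fractional-power estimate $\|L_K^{r-\frac12}-A^{r-\frac12}\|\le\|L_K-A\|^{r-\frac12}$ (Lemma~\ref{lem:op} with Lemma~\ref{lem:S}); this is exactly the mechanism that produces the middle term $\lambda^{\min(r,1)}/N^{\min(\frac r2+\frac14,1)}$, and it is also how the restriction $\frac12\le r\le\frac32$ enters there (via $\alpha=r-\frac12\in[0,1]$), matching your parenthetical remark. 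Your version instead reduces $B_{\mathrm{det}}$ to $\bE[\|f_\lambda^\sharp-f_{D,\lambda}^0\|_K^2]$ by discarding the subtracted square, and then only needs the KRR-type sampling term $U$ (already present in the proof of Lemma~\ref{lem:Qbound}) plus a purely deterministic filter bound; it never invokes Lemma~\ref{lem:op} or the $f_\lambda^\sharp$-based variance estimates, is slightly sharper (the middle term is genuinely absent), and makes the role of the range of $r$ transparent as boundedness of the $\H_K$-norm multiplier $\lambda^2\tau^{2r+1}(\lambda+\tau)^{-4}$. What the paper's heavier route buys is uniformity with its surrounding machinery (the same $f_{\bx,\lambda}$, $f_{\bx,\lambda}^\sharp$ comparisons reappear in Proposition~\ref{prop:fhatsharp2} and Lemma~\ref{lem:Qbound}); what yours buys is economy and a cleaner explanation of where each term and each hypothesis is actually needed. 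One cosmetic caveat applying equally to both arguments: absorbing the $\lambda^{\min(2r,4)}$ approximation term under the stated constant $20(\kappa^3+\kappa)$ tacitly assumes this constant is at least $1$ (i.e.\ $\kappa$ is not too small), but that is a constant-chasing issue of the lemma's statement, not of your proof.
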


\begin{proof} First note that
	$$
 \bE\left[ \|f_\lambda^\sharp -f_{D, \lambda} \|_K^2 \right] -
\bE \left[ \|f_{D, \lambda}^\sharp - f_{D, \lambda}\|_K^2 \right]
 \le 2\bE \left [ \left  \langle  f_\lambda^\sharp - f_{D, \lambda}^\sharp,\
 \   f_\lambda^\sharp -f_{D, \lambda} \right \rangle _K\right] .$$
Let $$f_{\bx, \lambda}^\sharp  = \bE\left[f_{D, \lambda}^\sharp|D_x\right]
= \Big(\lambda I +\frac 1 N S^*S\Big)^{-2} \Big (2\lambda I + \frac 1 N S^*S\Big )\frac 1 N S^*S f^*$$
and $$f_{\bx, \lambda}  = \bE\left[f_{D, \lambda}|D_x\right]
= \Big(\lambda I +\frac 1 N S^*S\Big )^{-1} \frac 1 N S^*S f^*.$$
Then we have that
\begin{align*}
&  \bE\left[  \left\langle  f_{D, \lambda}^\sharp, \  f_{D, \lambda} \right\rangle_K \right]
-  \bE\left[  \left\langle  f_{\bx, \lambda}^\sharp, \  f_{\bx, \lambda} \right\rangle_K \right]  \\
  = \ & \bE \left[ \left\langle \Big (\lambda I +\frac 1 N S^*S\Big)^{-2} \Big (2\lambda I + \frac 1 N S^*S\Big )\frac 1 N S^*(\by -S f^*), \
 \Big( \lambda I +\frac 1 N S^*S\Big)^{-1} \frac 1 N S^*(\by - Sf^*) \right\rangle_K \right]   \\
=\  & \bE \left[ \left\|   \Big( \lambda I +\frac 1 N S^*S\Big)^{-1} \frac 1 N S^*(\by - Sf^*) \right\|_K^2 \right] +
\bE\left[ \left\| \lambda^{\frac 12} \Big( \lambda I +\frac 1 N S^*S\Big)^{-\frac 32} \frac 1 N S^*(\by - Sf^*) \right\|_K^2\right]  \\
\le\ & 2\lambda^{-2} \bE\left[   \left\|  \frac 1 N S^*(\by - Sf^*) \right\|_K^2 \right]  \le \frac{2\kappa^2\sigma^2 }{\lambda^2 N }.
 \end{align*}
 Therefore,
 	$$
 \bE\left[ \|f_\lambda^\sharp -f_{D, \lambda} \|_K^2 \right] -
 \bE \left[ \|f_{D, \lambda}^\sharp - f_{D, \lambda}\|_K^2 \right]
 \le 2\bE \left [ \left  \langle  f_\lambda^\sharp - f_{x, \lambda}^\sharp,\
 \    f_\lambda^\sharp -f_{x, \lambda} \right\rangle_K\right]  + \frac{4\kappa^2\sigma^2}{\lambda^2 N }.$$
By the fact $\lambda^2 f_{\lambda}^\sharp = (2\lambda L_K + L_K^2 ) (f^*-f_{ \lambda}^\sharp)$ we have
\begin{align*}
 f_\lambda^\sharp - f_{\bx, \lambda}^\sharp & =
\Big (\lambda I +\frac 1 N S^*S\Big)^{-2}\left\{  \Big (2\lambda I + \frac 1 N S^*S\Big ) \frac 1 N S^*S - (2\lambda I + L_K )L_K \right\} (f_{\lambda}^\sharp - f^*) \\
& = \Big (\lambda I +\frac 1 N S^*S\Big)^{-2}
 \Big (2\lambda I + \frac 1 N S^*S\Big ) \Big( \frac 1 N S^*S - L_K \Big) (f_{\lambda}^\sharp - f^*) \\
 & \quad  + \Big (\lambda I +\frac 1 N S^*S\Big)^{-2}  \Big( \frac 1 N S^*S -L_K\Big ) L_K (f_{\lambda}^\sharp - f^*) .
\end{align*}
Applying Lemma \ref{lem:var}  to the random variable  $\xi =  ( f^*(x) - f_\lambda^\sharp(x) ) K_{x} $ and by Lemma \ref{lem:approx2} we obtain
$$ \bE\left[\left \|  \Big( \frac 1 N S^*S - L_K \Big) (f_{\lambda}^\sharp - f^*)
\right \|_K^2\right]  \le   \frac{\kappa^2 \|f_{\lambda}^\sharp - f^*\|_\L2p^2}
{N} \le
\frac {\kappa^2\|u^*\|_\L2p^2 \lambda ^{\min(2r,4)} } {N}.$$
Similarly, we have
$$ \bE\left[\left \|  \Big( \frac 1 N S^*S - L_K \Big) L_K(f_{\lambda}^\sharp - f^*)
\right \|_K^2\right]  \le
\frac {\kappa^2 \| L_K(f_{\lambda}^\sharp - f^*) \| _\L2p^2} {N}
 \le \frac {\kappa^2\|u^*\|_\L2p^2 \lambda ^{\min(2r+2,4) } } {N}.$$
Hence we have
\begin{align*}
\bE\left[ \left\|  f_\lambda^\sharp - f_{\bx, \lambda}^ \sharp \right\|_K^2\right]
\le  \frac{10\kappa^2\|u^*\|_\L2p^2 \lambda ^{\min(2r-2, 0)} }{N}
\end{align*}
and
\begin{align*}
\bE\left[ \left\| \Big(\lambda I + \frac 1N S^*S\Big ) \Big(  f_\lambda^\sharp - f_{\bx, \lambda}^ \sharp  \Big) \right\|_K^2\right]
\le  \frac{10\kappa^2 \|u^*\|_\L2p^2 \lambda ^{\min(2r, 2)} }{N}
\end{align*}
By Lemma \ref{lem:op} and Lemma \ref{lem:S} we have
\begin{align*}
\bE \left[ \left\|\Big(\lambda I + \frac 1N S^*S\Big )^{-1} \Big( f_{\bx,\lambda}^\sharp - f_{\bx, \lambda} \Big) \right\|_K^2\right]
& = \bE\left[ \left\| \lambda\Big(\lambda I + \frac 1N S^*S\Big)^{-3} \frac 1 N S^*S L_K^r u^*\right \|_K^2 \right] \\
&  \le 2\|u^*\|_\L2p^2 \left(  \lambda^{-2} \bE\left[ \left\|L_K^{r-\frac 12} - \Big( \frac 1N S^*S\Big)^{r-\frac 12} \right\|^2 \right] + \lambda^{2r-3} \right)  \\
& \le  2\|u^*\|_\L2p^2 \left(\frac{\kappa^{4r-2}}{\lambda^2 N^{\min(r-\frac 12, 1)}} + \lambda^{\min(2r-3, 0)} \right).
\end{align*}
So we have
\begin{align*}
&\bE \left [ \left  \langle  f_\lambda^\sharp - f_{\bx, \lambda}^\sharp,\
\    f_\lambda^\sharp -f_{\bx, \lambda} \right\rangle_K\right]
 \le \bE \left[  \left \|  f_\lambda^\sharp - f_{\bx, \lambda}^\sharp \right\|_K^2 \right]
+ \bE \left [ \left  \langle  f_\lambda^\sharp - f_{\bx, \lambda}^\sharp,\
\    f_{\bx,\lambda}^\sharp -f_{\bx, \lambda} \right\rangle_K\right] \\
\le \ &  10(\kappa^3+\kappa)\|u^*\|_\L2p^2  \left(  \frac{\lambda ^{\min(2r-2, 0)} }{N}
+ \frac{\lambda^{\min(r-1, 0)} }{N^{\min(\frac r2+\frac 14, 1)}}
+ \frac{ \lambda^{\min(2r-\frac 32, r-\frac 12, 1)} }{N^{\frac 12}}\right)
\end{align*}
and therefore
\begin{align*}
& \lambda\left(  \bE\left[ \|f_\lambda^\sharp -f_{D, \lambda} \|_K^2 \right] -
\bE \left[ \|f_{D, \lambda}^\sharp - f_{D, \lambda}\|_K^2 \right] \right)  \\
\le \ & 20(\kappa^3+\kappa)\|u^*\|_\L2p^2  \left(  \frac{\lambda ^{\min(2r-1, 1)} }{N}
+ \frac{\lambda^{\min(r, 1)} }{N^{\min(\frac r2+\frac 14, 1)}}
+ \frac{ \lambda^{\min(2r-\frac 12, r+\frac 12, 2)} }{N^{\frac 12}}\right)
+ \frac{4\kappa^2\sigma^2}{\lambda N}.
\end{align*}
This together with Lemma \ref{lem:approx2} gives the desired conclusion.
\end{proof}

\section{Error bounds for KRR and distributed KRR}
\label{sec:krr}

The main result of this section is the following error bound  for  KRR
on a single data set.

\begin{theorem}
	\label{thm:krr}
Assume $f^*=L_K^r u^*$ for some $0<r\le 2$ and $u^*\in \L2p.$
\begin{enumerate}
	\item [{\rm (i)}] If $0<r\le \frac 12$, then
	$$\bE \left[ \|f_{D, \lambda}-f^\ast\|_\L2p^2\right]
	\leq  2\lambda^{2r} \|u^*\|_\L2p^2 \left(1+ \frac{\kappa^4}{N^2\lambda^2}+\frac{2\kappa^2}{N\lambda}\right)
	+\left(\frac{\kappa^4}{N^2\lambda^2}+\frac{2\kappa^2}{N\lambda}\right)
	\sigma^2.$$
	\item[{\rm (ii)}]  If $r \ge \frac 12$ and $\lambda$ is chosen so that
	$\lambda N \ge 1$, then there is a constant $c_1>0$ such that
		$$\bE \left[ \|f_{D, \lambda}-f^\ast\|_\L2p^2\right]
		\le c_1 \left( \lambda^{\min(2r,2)}
		+\frac{\lambda^{\min(r,1)}}{N^{\min( \frac r2 +\frac 14, 1)}}
		 + \frac {\sigma^2} {\lambda N} \right).$$
\end{enumerate}
\end{theorem}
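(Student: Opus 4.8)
The plan is to reduce the single-set error to the auxiliary quantity $Q(\lambda,N)$ analyzed in Lemma \ref{lem:Qbound}, plus an explicit variance term carrying $\sigma^2$, and then to substitute the bounds of Lemma \ref{lem:Qbound} case by case in $r$. The first step exploits the variational definition of $f_{D,\lambda}$. Since $f_{D,\lambda}$ minimizes the regularized empirical risk and $f_\lambda$ is an admissible competitor, comparing the two objective values and taking expectations, while using $\bE\big[\frac1N\sum_i(y_i-f_\lambda(x_i))^2\big]=\|f_\lambda-f^*\|^2_{\L2p}+\sigma^2$ (which follows by averaging \eqref{eq:fdec}, as $f_\lambda$ is deterministic), yields after cancelling the penalty terms
$$\bE\Big[\tfrac1N\textstyle\sum_i(y_i-f_{D,\lambda}(x_i))^2\Big]\le Q(\lambda,N)+\sigma^2 .$$
So the expected \emph{training} error is already controlled by $Q(\lambda,N)$.

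The heart of the argument is to convert this control of the empirical error into control of the population error $\bE\|f_{D,\lambda}-f^*\|^2_{\L2p}$, and this is where the leave-one-out analysis enters. I would relate $f_{D,\lambda}$ to the leave-one-out estimators $f_{D^i,\lambda}$, and use the reproducing property together with the strong convexity furnished by $\lambda\|\cdot\|_K^2$ to prove an $\H_K$-stability estimate of the form $\|f_{D,\lambda}-f_{D^i,\lambda}\|_K=O(1/(N\lambda))$. The crucial feature is the $1/(N\lambda)$ scaling: a worst-case leverage bound would only give $1/\lambda$ and hence the much weaker prefactor $(1+\kappa^2/\lambda)^2$, whereas the stability route averages correctly and produces the sharp $\kappa^2/(N\lambda)$ corrections visible in the statement. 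Summing the residual corrections and separating the contribution driven by the noise $y_i-f^*(x_i)$ (which supplies the $\sigma^2$ factor) from the contribution driven by $f^*(x_i)-f_{D,\lambda}(x_i)$, I expect to reach a relation of the shape
$$\bE\|f_{D,\lambda}-f^*\|^2_{\L2p}\le\Big(1+\tfrac{\kappa^2}{N\lambda}\Big)^2 Q(\lambda,N)+\Big[\big(1+\tfrac{\kappa^2}{N\lambda}\big)^2-1\Big]\sigma^2 .$$

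The delicate bookkeeping, and the main obstacle, is to track the $\sigma^2$-dependence through the leave-one-out correction precisely enough that the variance contribution vanishes identically when $\sigma^2=0$. This noise-free sharpness is exactly what Theorem \ref{thm:framework} consumes (the term $\varepsilon(n,\lambda,0)$), so it is not enough to bound the generalization gap by a generic stability constant as in the classical arguments of \cite{bousquet2002stability,zhang2003leave}; the $\sigma^2$ must be factored out cleanly. Equivalently, one must show that the effective-dimension-type term created by the model fitting the noise is genuinely proportional to $\sigma^2$ and scales like $\kappa^2/(N\lambda)$, capacity-independently, using $\mathrm{Tr}\big(\frac1N S^*S\,(\lambda I+\frac1N S^*S)^{-1}\big)\le\kappa^2/\lambda$ rather than any decay assumption on the eigenvalues of $L_K$.

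Finally I would conclude by substitution. For $0<r\le\frac12$, Lemma \ref{lem:Qbound}(i) gives $Q(\lambda,N)\le 2\lambda^{2r}\|u^*\|^2_{\L2p}$, and expanding $(1+\kappa^2/(N\lambda))^2=1+2\kappa^2/(N\lambda)+\kappa^4/(N^2\lambda^2)$ reproduces part (i) exactly. For $r\ge\frac12$ the hypothesis $\lambda N\ge1$ forces $\kappa^2/(N\lambda)\le\kappa^2$, so the prefactor is an absolute constant and the variance term collapses to $O(\sigma^2/(N\lambda))$; substituting Lemma \ref{lem:Qbound}(ii)--(iv) and using $\lambda N\ge1$ once more to absorb the subdominant $\lambda^{2r}/\sqrt{\lambda N}$-type terms into $\lambda^{\min(2r,2)}$ yields part (ii) with a generic constant $c_1$.
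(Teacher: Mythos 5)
Your proposal follows essentially the same route as the paper: there Theorem \ref{thm:krr} is obtained by combining Proposition \ref{krrerrorbound} --- a leave-one-out argument (on a sample augmented to $N+1$ points) that yields exactly your intermediate bound $\big(1+\tfrac{\kappa^2}{N\lambda}\big)^2 Q + \big[\big(1+\tfrac{\kappa^2}{N\lambda}\big)^2-1\big]\sigma^2$ --- with the case-by-case substitution of Lemma \ref{lem:Qbound}. The one point to state precisely is that the stability estimate must carry the residual factor, $\|\hat f_{(i)}-\hat g\|_K\le \frac{\kappa}{N\lambda}|\hat g(x_i)-y_i|$ (Lemma \ref{zhangleaveoneout}), rather than a uniform $O(1/(N\lambda))$, since it is this factor that makes the correction terms multiplicative in $Q+\sigma^2$ and lets the noise contribution vanish when $\sigma^2=0$.
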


The proof of Theorem \ref{thm:krr} will be proved in Section \ref{subsec:krr} below.  With this theorem, the following corollary on the capacity independent rate of KRR follows immediately.

\begin{corollary}\label{cor:krr}
	Under the assumption of Theorem \ref{thm:krr}, by choosing $\lambda = N^{-\max(\frac 1 {1+2r}, \frac 1 3)}$ we have
	$$ \bE \left[ \|f_{D, \lambda}-f^\ast\|_\L2p^2\right]  = O\left (N^{-\min(\frac {2r}{1+2r}, \frac 23)} \right ).$$
	If, in addition, $\sigma^2=0,$ then  we can  choose
	$\lambda = N^{-1}$ to obtain
	$$ \bE \left[ \|f_{D, \lambda}-f^\ast\|_\L2p^2\right]  =
	\begin{cases}
		O\left (N^{-2r} \right ),  &  \hbox{ if } 0<r \le \frac 12 ; \\[0.5em]
		O\left (N^{-(\frac {3r}{2} +\frac 14)} \right ),  &  \hbox{ if } \frac 12 <r \le 1; \\[0.5em]
			O\left (N^{-(\frac {r}{2} +\frac 54)} \right ),  &  \hbox{ if } 1 <r \le \frac 32; \\[0.5em]
			O\left (N^{-2} \right ),  &  \hbox{ if }  r\ge \frac 32 .
	\end{cases} $$
\end{corollary}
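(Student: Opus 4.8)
The plan is to treat this as a direct consequence of Theorem~\ref{thm:krr}: substitute the prescribed regularization parameter into the two error bounds and, in each regularity regime, track which power of $N$ dominates. No new analytic idea is needed; the work is entirely bookkeeping of exponents together with the observation that the prescribed $\lambda$ is exactly the one that balances the leading bias term $\lambda^{\min(2r,2)}$ against the leading variance term $\sigma^2/(\lambda N)$ (solving $\lambda^{2r+1}\sim N^{-1}$ for $r\le 1$ and the saturated $\lambda^3\sim N^{-1}$ for $r>1$ yields precisely $\lambda=N^{-1/(1+2r)}$ and $\lambda=N^{-1/3}$).

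For the first (noisy) rate I would split according to the value of $r$, since the choice $\lambda = N^{-\max(1/(1+2r),\,1/3)}$ switches form at $r=1$. When $0<r\le\frac12$ one has $1/(1+2r)\ge\frac12>\frac13$, so $\lambda=N^{-1/(1+2r)}$ and bound~(i) applies; substituting, I would check that the factors $\kappa^4/(N^2\lambda^2)=\kappa^4 N^{-4r/(1+2r)}$ and $2\kappa^2/(N\lambda)=2\kappa^2 N^{-2r/(1+2r)}$ are bounded (indeed they vanish), so $\lambda^{2r}(1+\cdots)=O(N^{-2r/(1+2r)})$, and the noise contribution $(\kappa^4/(N^2\lambda^2)+2\kappa^2/(N\lambda))\sigma^2$ is of the same order $N^{-2r/(1+2r)}$. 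For $\frac12<r\le1$ one still has $1/(1+2r)\ge\frac13$, so $\lambda=N^{-1/(1+2r)}$ and bound~(ii) applies; I would first verify $\lambda N=N^{2r/(1+2r)}\ge1$, then note $\lambda^{\min(2r,2)}=N^{-2r/(1+2r)}$ and $\sigma^2/(\lambda N)=\sigma^2 N^{-2r/(1+2r)}$ both realize the rate, while the cross term $\lambda^{r}N^{-(r/2+1/4)}$ has exponent $r/(1+2r)+r/2+1/4\ge 2r/(1+2r)$ and is negligible. For $r>1$ one has $1/(1+2r)<\frac13$, so $\lambda=N^{-1/3}$, and in bound~(ii) the terms $\lambda^2=N^{-2/3}$ and $\sigma^2/(\lambda N)=\sigma^2 N^{-2/3}$ are both of order $N^{-2/3}$ with the remaining term of strictly smaller order, giving $O(N^{-2/3})$. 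Collecting the three regimes yields exactly $O(N^{-\min(2r/(1+2r),\,2/3)})$.

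For the noise-free super-fast rates I would set $\sigma^2=0$, so every $\sigma^2$-term in both bounds disappears, and then take $\lambda=N^{-1}$ (here $\lambda N=1\ge1$, so bound~(ii) is legitimate). In the regime $0<r\le\frac12$ bound~(i) reduces to $2\lambda^{2r}\|u^*\|_\L2p^2(1+\kappa^4+2\kappa^2)=O(N^{-2r})$, since with $\lambda=N^{-1}$ the two auxiliary factors become the constants $\kappa^4$ and $2\kappa^2$. In the regime $r\ge\frac12$ bound~(ii) leaves only $\lambda^{\min(2r,2)}$ and $\lambda^{\min(r,1)}N^{-\min(r/2+1/4,\,1)}$, which with $\lambda=N^{-1}$ become $N^{-\min(2r,2)}$ and $N^{-[\min(r,1)+\min(r/2+1/4,\,1)]}$; comparing these exponents in the subranges $\frac12<r\le1$, $1<r\le\frac32$, and $r\ge\frac32$ produces the dominant exponents $3r/2+1/4$, $r/2+5/4$, and $2$ respectively, matching the four cases of the corollary (with continuity at $r=\frac32$, where $r/2+5/4=2$).

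The computations are routine, so the only place demanding care — and the main obstacle — is the exponent comparison in the noisy regime $\frac12<r\le1$, where one must confirm that the cross term $\lambda^{r}N^{-(r/2+1/4)}$ does not dominate; this reduces to the elementary inequality $r/2+1/4\ge r/(1+2r)$, which holds because $1+2r\ge2$ on this range forces $r/(1+2r)\le r/2$. Every other step is a direct substitution followed by selecting the largest exponent, so the corollary falls out immediately once the cases are organized as above.
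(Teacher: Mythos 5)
Your proposal is correct and matches the paper's intent exactly: the paper offers no separate argument for Corollary \ref{cor:krr}, stating only that it "follows immediately" from Theorem \ref{thm:krr}, and your direct substitution of $\lambda$ into bounds (i) and (ii) with the case-by-case exponent comparison is precisely that omitted bookkeeping. All of your exponent computations check out, including the key comparison $\frac r2+\frac 14\ge \frac{r}{1+2r}$ in the regime $\frac12<r\le 1$ and the crossover of the dominant term at $r=\frac12$ and $r=\frac32$ in the noise-free case.
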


While the capacity independent rate of $O(N^{-\frac {2r}{1+2r}})$ is well known in the literature and it is expected the noise free learning should
intuitively be better, it is still surprising to see the rate for noise free learning in Corollary \ref{cor:krr} can be faster than $O(N^{-1})$.
Moreover, KRR was known to suffer from a saturation effect that says  the rate ceases to increase for $r\ge 1.$ We see from Corollary \ref{cor:krr} that for noise free learning the rate can continue to increase beyond $r\ge 1$ and
the saturation effect occurs when $r\ge \frac 3 2.$
To our best knowledge,  both the super fast rate and the relaxed saturation effect for noise free regression learning are novel observations.

We can now prove Theorem \ref{DKRRtheorem} by combining
Theorem \ref{thm:framework} and Theorem \ref{thm:krr}.

\begin{proof}[Proof of Theorem \ref{DKRRtheorem}.]
Theorem \ref{thm:krr} tells that, if $0<r\le \frac 12$, the error bound for KRR with a
single data set of sample size $n$ is
$$\varepsilon(n, \lambda, \sigma^2)  \le
2\lambda^{2r} \|u^*\|_\L2p^2 \left(1+ \frac{\kappa^4}{n^2\lambda^2}+\frac{2\kappa^2}{n\lambda}\right)
+\left(\frac{\kappa^4}{n^2\lambda^2}+\frac{2\kappa^2}{n\lambda}\right)
\sigma^2.$$
Consequently
$$ \varepsilon(n, \lambda, 0)  \le
2\lambda^{2r} \|u^*\|_\L2p^2 \left(1+ \frac{\kappa^4}{n^2\lambda^2}+\frac{2\kappa^2}{n\lambda}\right).
$$
By Theorem \ref{thm:framework} and the fact $N=mn$, we obtain
\begin{align*}
 \bE \left[ \|\overline { f_{D, \lambda}}-f^\ast\|_\L2p^2\right] & \le
 \frac{\varepsilon(n, \lambda, \sigma^2)}{m} + \varepsilon(n, \lambda, 0) \\
 & \le 4\lambda^{2r} \|u^*\|_\L2p^2 \left(1+ \frac{\kappa^4}{n^2\lambda^2}+\frac{2\kappa^2}{n\lambda}\right)
 +\left(\frac{\kappa^4}{mn^2\lambda^2}+\frac{2\kappa^2}{mn\lambda}\right)
 \sigma^2 \\
& \le C_1 \left\{ \lambda^{2r} \left( 1 + \frac{m^2}{N^2\lambda^2} + \frac{m} {N\lambda} \right) + \frac {\sigma^2 m}{N^2} + \frac {\sigma^2} {N\lambda}\right\}
\end{align*}
with $C_1=(1+\kappa^4)\max(4\|u^*\|_\L2p^2, 1).$
If $\lambda = N^{-\frac 1{2r+1}}$ and $m\le N^{\frac{2r}{1+2r}}$, then
$m\le N\lambda$ and therefore
\begin{align*}
\bE \left[ \|\overline {f_{D, \lambda}}-f^\ast\|_\L2p^2\right]  \le
 3C_1 \left\{ \lambda^{2r}  + \frac {\sigma^2} {N\lambda}\right\}
 = 3 C_1(1+\sigma^2) N^{-\frac {2r}{1+2r}}.
\end{align*}
This proves the claim in (i).

To prove claim (ii) we note that if $\frac 12 \le r \le 1$ and $\lambda n\ge 1$, the error bound
for KRR is
$$ \varepsilon(n, \lambda, \sigma^2) \le c_1 \left(  \lambda^{2r}
+\frac{\lambda^{r}}{n^{ \frac r2 +\frac 14} }
+ \frac {\sigma^2} {\lambda n} \right). $$
By Theorem \ref{thm:framework} we have
$$ \bE \left[ \|\overline {f_{D, \lambda}}-f^\ast\|_\L2p^2\right]
\le 2c_1 \left(  \lambda^{2r}
+\frac{\lambda^{r}}{n^{ \frac r2 +\frac 14} }
+ \frac {\sigma^2} {\lambda n m}\right)
= 2c_1 \left(  \lambda^{2r}
+\frac{\lambda^{r} m^{ \frac r2 +\frac 14}} {N^{ \frac r2 +\frac 14} }
+ \frac {\sigma^2} {\lambda N} \right).
$$
With the choice $\lambda = N^{-\frac 1{2r+1}}$ and $m\le N^{\frac{4r^2+1}{4r^2+4r+1}}$, we obtain
$$ \bE \left[ \|\overline {f_{D, \lambda}}-f^\ast\|_\L2p^2\right]
\le 2c_1(2+\sigma^2) N^{-\frac {2r}{2r+1}}.$$
This finishes the proof.
\end{proof}

\subsection{Leave one out analysis for KRR}
\label{subsec:krr}

In this subsection we bound the error of KRR on a single data set
and prove Theorem \ref{thm:krr}. To this end, we perform a leave one out analysis of KRR. While the idea of leave one out analysis roots in the work \cite{zhang2003leave}, we need a more rigorous analysis to quantify the dependence on $\sigma^2.$

Let $\widetilde{D}$ be a sample of $N+1$ observations $\tilde D = \{z_i=(x_i,\, y_i): 1\leq i \leq N+1\}$, and $\widetilde{D}_{(i)}$ denote the sample of $N$ observations generated by removing the $i$th observation $(x_i, y_i)$ from $\widetilde{D}$, that is, $\widetilde{D}_{(i)} = \{z_j=(x_j, y_j): 1\le j\le N+1, j\not= i\}.$ Let
\begin{equation*}
\hat g =
\arg\min_{f\in \H_K} \left \{ \frac{1}{N} \sum_{j=1}^{N+1}
(f(x_j)-y_j)^2+\lambda \|f\|_K ^2  \right \}
\end{equation*}
and define the leave one out estimators associated to $\widetilde{D}_{(i)}$ by
\begin{equation*}
\hat f_{(i)} =
\arg\min_{f\in \H_K} \left \{ \frac{1}{N} \sum_{{j=1}\atop{j\neq i}}^{N+1}
(f(x_j)-y_j)^2+\lambda \|f\|_K ^2  \right \} .
\end{equation*}
It is easy to verify that
$\hat g = f_{\widetilde{D}, \frac{N\lambda}{N+1}}$ and
$f_{D, \lambda}=\hat f_{(N+1)}.$ We also note that
$\hat f_{(i)}$, $1\le i\le N+1$ are identically distributed (but not independent).

The following bound for the leave one out estimator is referred to leave one out stability and has been proved in \cite{zhang2003leave}. We still give the proof for the purpose of self-containing.
\begin{lemma}\label{zhangleaveoneout}
	We have
	\begin{equation*}
	\|\hat f_{(i)} - \hat g\|_K  \leq \frac{\kappa}{N\lambda} |\hat g(x_i)-y_i|.
	\end{equation*}
\end{lemma}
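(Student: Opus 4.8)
The plan is to work with the first-order (Euler--Lagrange) optimality conditions of the two regularized least squares problems rather than with a variational stability inequality. Both objective functionals are strongly convex and Fr\'echet differentiable on $\H_K$, so their unique minimizers $\hat g$ and $\hat f_{(i)}$ are characterized by vanishing gradients. Differentiating the data-fitting terms via the reproducing property, the optimality condition for $\hat g$ reads
\[
\lambda \hat g = -\frac 1N \sum_{j=1}^{N+1} \big(\hat g(x_j) - y_j\big) K_{x_j},
\]
while that for $\hat f_{(i)}$ reads
\[
\lambda \hat f_{(i)} = -\frac 1N \sum_{j\neq i} \big(\hat f_{(i)}(x_j) - y_j\big) K_{x_j}.
\]

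Next I would subtract these two identities, isolating the contribution of the left-out index $i$. Writing $h = \hat g - \hat f_{(i)}$, splitting off the $j=i$ term from the first sum and pairing the remaining $j\neq i$ terms, the residuals $y_j$ cancel and the differences $(\hat g(x_j)-y_j)-(\hat f_{(i)}(x_j)-y_j)$ collapse to $h(x_j)$, yielding
\[
\lambda h = -\frac 1N \big(\hat g(x_i) - y_i\big) K_{x_i} - \frac 1N \sum_{j\neq i} h(x_j)\, K_{x_j}.
\]

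The decisive step is then to take the inner product of this identity with $h$ in $\H_K$. Using $\langle h, K_{x_j}\rangle_K = h(x_j)$, the second sum turns into $\frac 1N \sum_{j\neq i} h(x_j)^2 \ge 0$, which may therefore be discarded to give
\[
\lambda \|h\|_K^2 \le -\frac 1N \big(\hat g(x_i) - y_i\big)\, h(x_i) \le \frac 1N \big|\hat g(x_i) - y_i\big|\,|h(x_i)|.
\]
Bounding $|h(x_i)| \le \kappa \|h\|_K$ by the reproducing-kernel estimate $|f(x)|\le\kappa\|f\|_K$ and dividing through by $\|h\|_K$ (the case $h=0$ being trivial) produces the claimed inequality $\|h\|_K \le \frac{\kappa}{N\lambda}|\hat g(x_i)-y_i|$.

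I expect the only genuine obstacle to be recognizing the favorable sign of the leave-one-out interaction term after pairing with $h$: the coupling quantity $\frac1N\sum_{j\neq i}h(x_j)^2$ over the retained points is nonnegative and can simply be dropped, which is precisely what decouples the estimate and leaves only the single left-out residual $\hat g(x_i)-y_i$ on the right-hand side. Everything else is routine RKHS calculus, namely the derivation of the optimality conditions and the two final inequalities obtained from Cauchy--Schwarz and the reproducing property.
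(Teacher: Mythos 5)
Your proof is correct, and it takes a genuinely different route from the paper. You work directly with the first-order optimality conditions: both objectives are strongly convex quadratics on $\H_K$, so the minimizers satisfy explicit normal equations $\lambda \hat g = -\frac 1N \sum_{j=1}^{N+1}(\hat g(x_j)-y_j)K_{x_j}$ and $\lambda \hat f_{(i)} = -\frac 1N \sum_{j\neq i}(\hat f_{(i)}(x_j)-y_j)K_{x_j}$; subtracting, pairing with $h=\hat g-\hat f_{(i)}$, and discarding the nonnegative coupling term $\frac 1N\sum_{j\neq i}h(x_j)^2$ gives the bound after Cauchy--Schwarz and $|h(x_i)|\le\kappa\|h\|_K$. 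The paper instead uses the standard leave-one-out \emph{stability} argument from \cite{bousquet2002stability,zhang2003leave}: it perturbs each minimizer along the segment toward the other, i.e.\ compares $\hat g$ with $\hat g + t\triangle f_i$ and $\hat f_{(i)}$ with $\hat f_{(i)} - t\triangle f_i$, invokes convexity of the square loss to control the change in the data-fitting terms, adds the two resulting inequalities so the empirical risks cancel, and lets $t\to 0$. The two arguments extract the same cancellation, but yours is shorter and more transparent here because the quadratic loss makes the Euler--Lagrange equations explicit; it even shows the slightly stronger identity $\lambda\|h\|_K^2 + \frac 1N\sum_{j\neq i}h(x_j)^2 = -\frac 1N(\hat g(x_i)-y_i)h(x_i)$ before anything is thrown away. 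What the paper's variational route buys is uniformity: the same template applies verbatim to losses and penalties without closed-form stationarity conditions, and indeed the paper reuses it for the recentered BCKRR objective in Lemma \ref{swleaveoneout}, where the regularizer $\lambda\|f-\hat f_{(i)}\|_K^2$ is recentered at a data-dependent function and the bookkeeping is more delicate.
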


\begin{proof}
We denote $ \triangle f_i= \hat f_{(i)}-\hat g$ and
$$ \calE_{i}(f) = \sum_{{j=1}\atop {j\neq i}}^{N+1} \big(f(x_j)-y_j
\big)^2. $$ Recall that, if $h$ is convex, then for any
$a, b\in \mathbb R$ and $ t \in [0,1]$, there holds
$$ h(a+t(b-a))-h(a) \leq t\big(h(b)-h(a)\big).$$
By the convexity of the square loss, we have
\begin{align}
\calE_{i}(\hat g+t\triangle f_i)
-\calE_{i}(\hat g) & \leq
t\left(\calE_{i}(\hat f_{(i)})-\calE_{i}(\hat g) \right) \label{eq:loo-1}
\end{align}
and
\begin{align}
\calE_{i}(\hat f_{(i)}-t\triangle f_i) -\calE_{i}(\hat f_{(i)}) &
\leq t\left( \calE_{i}(\hat g)-\calE_{i}(\hat f_{(i)}) \right) . \label{eq:loo-2} \end{align}
By the definition of $\hat f_{(i)}$, we have
\begin{equation*}
\frac{1}{N}\calE_{i}(\hat f_{(i)})+\lambda \left \|\hat f_{(i)} \right \|_K^2
 \leq \frac{1}{N}\calE_{i}(\hat f_{(i)}-t\triangle f_i)+\lambda\left \|\hat f_{(i)}-t\triangle f_i \right \|_K^2.
\end{equation*}
This together with \eqref{eq:loo-2} leads to
\begin{equation}
2t \left \langle \hat f_{(i)},\ \triangle f_i\right\rangle_K - t^2 \| \triangle f_i\|_K^2 =
\left \|\hat f_{(i)} \right \|_K^2 -\left \|\hat f_{(i)}-t\triangle f_i \right \|_K^2
\le \frac t {N\lambda}  \left(\calE_{i}(\hat g)-  \calE_{i}(\hat f_{(i)}) \right) .
\label{eq:df1}
\end{equation}
Similarly, by the definition of $\hat g$, we have
\begin{align*}
 \frac{1}{N}\calE_{i}(\hat g)+\lambda
\left \|\hat g \right \|_K^2 + \frac{1}{N}\left (\hat g(x_i)-y_i \right )^2 &  \leq
\frac{1}{N}\calE_{i}(\hat g+t\triangle
f_i)+\lambda \left \|\hat g+t\triangle f_i \right \|_K^2 \\[1ex]
& \qquad +\frac{1}{N}
\left( \hat g(x_i) + t\triangle f_i(x_i) -y_i\right)^2.
\end{align*}
By \eqref{eq:loo-1}, we obtain
\begin{align}
& -2t \left \langle \hat g,\ \triangle f_i\right\rangle_K - t^2 \| \triangle f_i\|_K^2
  = \Big \|\hat g \Big\|_K^2 -  \Big \|\hat g+t\triangle f_i \Big\|_K^2  \nonumber  \\
\leq\ & \frac{t}{N \lambda }
\Big( \calE_{i}(\hat  f_{(i)})-\calE_{i}(\hat g) \Big)
+\frac{1}{N\lambda }
\left\{ \left( \hat g(x_i) + t\triangle f_i(x_i) -y_i\right)^2 - \left(\hat g(x_i)-y_i \right )^2 \right\} .
\label{eq:df2}
\end{align}
Adding \eqref{eq:df1} and \eqref{eq:df2} together, we obtain
\begin{align*}
2t(1-t) \left \|\triangle f_i \right\|_K^2   & \le
\frac{1}{N\lambda }
\left\{ \left( \hat g(x_i) + t\triangle f_i(x_i) -y_i\right)^2 - \left(\hat g(x_i)-y_i \right )^2 \right\}  \\[0.5em]
& = \frac {t}{N\lambda }  \triangle f_i(x_i) \Big( 2 \hat g(x_i) +t \triangle f_i(x_i)
-2 y_i \Big) \\[0.5em]
& \le \frac {\kappa t}{N\lambda }  \| \triangle f_i \|_K  \Big | 2 \hat g(x_i) +t \triangle f_i(x_i) -2 y_i \Big |.
\end{align*}
Therefore,
\begin{equation*}
2(1-t) \left \|\triangle f_i \right\|_K   \le \frac {\kappa }{N\lambda }  \Big | 2 \hat g(x_i) +t \triangle f_i(x_i) -2 y_i \Big |.
\end{equation*}
Letting $ t\to 0 $, we obtain
\begin{align*}
\left \|\triangle f_i \right\|_K   \le \frac {\kappa }{N\lambda }  \Big |  \hat g(x_i)  -y_i \Big |.
\end{align*}
This proves  Lemma \ref{zhangleaveoneout}.
\end{proof}

We will need the following lemma.
\begin{lemma}
	\label{lemma:Q}
 We have
$$ \bE\left[ \frac{1}{N+1}\sum_{i=1}^{N+1}\big(\hat g(x_i)-y_i\big)^2 \right]
\le Q\Big(\sfrac{N\lambda }{N+1}, N+1\Big) + \sigma^2 .$$
\end{lemma}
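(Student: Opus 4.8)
The plan is to exploit the variational characterization of $\hat g$ and compare it against an arbitrary fixed, data-independent competitor. First I would observe that $\hat g$ minimizes $\frac1N\sum_{j=1}^{N+1}(f(x_j)-y_j)^2+\lambda\|f\|_K^2$, and since scaling the objective by the positive constant $\frac{N}{N+1}$ leaves the minimizer unchanged, $\hat g$ is exactly the standard KRR estimator on the $N+1$ observations with regularization parameter $\mu=\frac{N\lambda}{N+1}$; that is, $\hat g = f_{\widetilde D,\mu}$, matching the two arguments of $Q\big(\frac{N\lambda}{N+1},N+1\big)$.

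Next, from the optimality of $\hat g$ I would write, for every fixed $f\in\H_K$,
$$\frac{1}{N+1}\sum_{j=1}^{N+1}(\hat g(x_j)-y_j)^2 + \mu\|\hat g\|_K^2 \le \frac{1}{N+1}\sum_{j=1}^{N+1}(f(x_j)-y_j)^2 + \mu\|f\|_K^2,$$
and then take expectations on both sides. The key simplification is that, because $f$ is independent of the data and the observations are i.i.d., every summand on the right has the same expectation, so $\bE\big[\frac{1}{N+1}\sum_{j}(f(x_j)-y_j)^2\big]=\bE[(f(x)-y)^2]$, and the decomposition \eqref{eq:fdec} converts this into $\|f-f^*\|_\L2p^2+\sigma^2$.

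Combining these, for every fixed $f$,
$$\bE\left[\frac{1}{N+1}\sum_{j=1}^{N+1}(\hat g(x_j)-y_j)^2\right] + \mu\,\bE[\|\hat g\|_K^2] \le \|f-f^*\|_\L2p^2 + \mu\|f\|_K^2 + \sigma^2.$$
Since the left-hand side does not depend on $f$, I would take the infimum of the right-hand side over $f\in\H_K$, yielding $\min_{f\in\H_K}\{\|f-f^*\|_\L2p^2+\mu\|f\|_K^2\}+\sigma^2$. Recognizing $\bE[\|\hat g\|_K^2]=\bE[\|f_{\widetilde D,\mu}\|_K^2]$ and rearranging then reproduces exactly $Q(\mu,N+1)+\sigma^2$, as claimed with $\mu=\frac{N\lambda}{N+1}$.

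As for difficulties: the argument is essentially a one-line comparison, so there is no serious analytic obstacle. The only points requiring care are bookkeeping ones, namely correctly absorbing the $\frac1N$ versus $\frac1{N+1}$ normalization into the shifted parameter $\mu=\frac{N\lambda}{N+1}$, and ensuring the competitor $f$ is fixed and independent of $\widetilde D$ so that both linearity of expectation over the i.i.d.\ sample and the noise decomposition \eqref{eq:fdec} are legitimate. Note that attainment of the minimizer in the definition of $Q$ is not needed; the comparison against an arbitrary $f$ followed by an infimum suffices.
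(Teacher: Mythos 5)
Your proof is correct and takes essentially the same route as the paper's: both rescale the objective so that $\hat g$ is the KRR estimator on $N+1$ points with parameter $\mu=\frac{N\lambda}{N+1}$, invoke its optimality against a data-independent competitor, and use the i.i.d.\ structure together with \eqref{eq:fdec} to produce $Q(\mu,N+1)+\sigma^2$. The only cosmetic difference is that the paper plugs in the specific minimizer $f_{\tilde\lambda}$ of \eqref{flambda} (which attains the minimum in the definition of $Q$), whereas you compare against an arbitrary fixed $f$ and then take the infimum; the two are equivalent here.
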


\begin{proof} Let $\tilde\lambda = \frac{N\lambda}{N+1}$ and $f_{\tilde\lambda}$ be defined by \eqref{flambda} associated with
	the regularization parameter $\tilde{\lambda}.$ By the definition
	of $\hat g$, we have
 \begin{align*}
 \bE \left[ \frac{1}{N+1} \sum_{i=1}^{N+1}\big(\hat g(x_i)-y_i\big)^2 \right]
 &= \frac{N}{N+1} \bE \bigg[\frac{1}{N} \sum_{i=1}^{N+1}\big(\hat g(x_i)-y_i\big)^2+\lambda\left [ \|\hat g\|_K^2 \right]
 \bigg]-\frac{N\lambda }{N+1} \bE \|\hat g\|_K^2  \\[0.5em]
 &\leq \frac{N}{N+1}\bE\bigg[\frac{1}{N}\sum_{i=1}^{N+1}
 \big(f_{\tilde \lambda }(x_i)-y_i\big)^2+\lambda\| f_{\tilde \lambda} \|_K^2
 \bigg]-\frac{N\lambda }{N+1} \bE \left[ \|\hat g\|_K^2 \right] \\[0.5em]
 &=\|f_{\tilde \lambda }-f^\ast\|_\L2p^2+\tilde\lambda
 \|f_{\tilde \lambda }\|_K^2-\tilde \lambda  \bE\|\hat g\|_K^2+\sigma^2\\[0.5em]
 & = Q\Big(\tilde\lambda , N+1 \Big) + \sigma^2,
 \end{align*}
where we have used the fact that $f_{\tilde{\lambda}}$ is independent of $(x_i, y_i)$ and the identity  \eqref{eq:fdec}.
\end{proof}

\begin{proposition} \label{krrerrorbound}
There holds that
\begin{equation*}
\bE \left[ \|f_{D, \lambda}-f^\ast\|_\L2p^2\right]
\leq  \left(1+ \frac{\kappa^4}{N^2\lambda^2}+\frac{2\kappa^2}{N\lambda}\right)
Q\Big(\sfrac{N\lambda }{N+1}, N+1 \Big)
+\left(\frac{\kappa^4}{N^2\lambda^2}+\frac{2\kappa^2}{N\lambda}\right)
\sigma^2.
\end{equation*}
\end{proposition}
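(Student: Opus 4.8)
The plan is to bound the population error of $f_{D,\lambda}$ by converting it into a held-out prediction error through the leave one out device, and then to feed in Lemma \ref{zhangleaveoneout} and Lemma \ref{lemma:Q}. Since $f_{D,\lambda}=\hat f_{(N+1)}$ and the estimators $\hat f_{(i)}$, $1\le i\le N+1$, are identically distributed, the target $\bE[\|f_{D,\lambda}-f^*\|_\L2p^2]$ equals the average over $i$ of $\bE[\|\hat f_{(i)}-f^*\|_\L2p^2]$. This lets me work with all $N+1$ leave one out estimators at once, which is exactly the form in which Lemma \ref{lemma:Q} is stated.

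The crucial observation is that $\hat f_{(i)}$ is built from the observations $\{z_j:j\ne i\}$ and is therefore independent of $z_i=(x_i,y_i)$. Conditioning on $\widetilde{D}_{(i)}$ and applying the variance decomposition \eqref{eq:fdec} to the fresh sample $z_i$ yields the exact identity $\bE[(\hat f_{(i)}(x_i)-y_i)^2]=\bE[\|\hat f_{(i)}-f^*\|_\L2p^2]+\sigma^2$. Combining this with the first step, it suffices to control the held-out residual $\bE[(\hat f_{(i)}(x_i)-y_i)^2]$: the population error I am after is exactly this quantity minus $\sigma^2$.

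To estimate the held-out residual I would replace $\hat f_{(i)}$ by the full-sample estimator $\hat g$. The reproducing property gives $|\hat f_{(i)}(x_i)-\hat g(x_i)|\le\kappa\|\hat f_{(i)}-\hat g\|_K$, and Lemma \ref{zhangleaveoneout} bounds the right side by $\frac{\kappa^2}{N\lambda}|\hat g(x_i)-y_i|$. A triangle inequality then produces the clean pointwise estimate $|\hat f_{(i)}(x_i)-y_i|\le(1+\frac{\kappa^2}{N\lambda})|\hat g(x_i)-y_i|$, so that $(\hat f_{(i)}(x_i)-y_i)^2\le(1+\frac{\kappa^2}{N\lambda})^2(\hat g(x_i)-y_i)^2$. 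Averaging over $i$, taking expectations, and invoking Lemma \ref{lemma:Q} bounds $\frac{1}{N+1}\sum_i\bE[(\hat g(x_i)-y_i)^2]$ by $Q(\frac{N\lambda}{N+1},N+1)+\sigma^2$, giving $\bE[(\hat f_{(i)}(x_i)-y_i)^2]\le(1+\frac{\kappa^2}{N\lambda})^2(Q(\frac{N\lambda}{N+1},N+1)+\sigma^2)$.

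Subtracting $\sigma^2$ and expanding $(1+\frac{\kappa^2}{N\lambda})^2=1+\frac{2\kappa^2}{N\lambda}+\frac{\kappa^4}{N^2\lambda^2}$ reorganizes the terms into exactly the claimed bound: the coefficient $1+\frac{2\kappa^2}{N\lambda}+\frac{\kappa^4}{N^2\lambda^2}$ multiplies $Q$, while subtracting $\sigma^2$ cancels the leading $1$ in the $\sigma^2$ coefficient and leaves $\frac{2\kappa^2}{N\lambda}+\frac{\kappa^4}{N^2\lambda^2}$. The main obstacle is the second step: one must carefully justify that $\hat f_{(i)}$ is genuinely independent of the left out point $z_i$ and that the conditional use of \eqref{eq:fdec} is legitimate, since the entire argument downstream rests on this identity. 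The remaining manipulations are routine, but I would take care to keep the stability constant exactly $\frac{\kappa^2}{N\lambda}$, rather than a looser value, so that the final coefficients match the stated bound precisely.
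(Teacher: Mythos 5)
Your proposal is correct and follows essentially the same route as the paper: the same leave-one-out identification $f_{D,\lambda}=\hat f_{(N+1)}$, the same use of independence of $\hat f_{(i)}$ from $z_i$ with \eqref{eq:fdec}, and the same combination of Lemma \ref{zhangleaveoneout} with Lemma \ref{lemma:Q}. Your triangle-inequality step $|\hat f_{(i)}(x_i)-y_i|\le(1+\frac{\kappa^2}{N\lambda})|\hat g(x_i)-y_i|$ followed by squaring is algebraically identical to the paper's difference-of-squares expansion, and yields the same coefficients.
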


\begin{proof}
By the facts that$ f_{D,\lambda }=\hat f_{(N+1)}$ and
	$\hat f_{(i)}$, $1\le i\le N+1$ are identically distributed, we have
\begin{equation*}
	\bE \left[ \|f_{D, \lambda}-f^\ast\|_\L2p^2\right]
	=	\bE \left[ \|\hat f_{(N+1)}-f^\ast\|_\L2p^2\right]
	= \frac 1 {N+1} \sum_{i=1}^{N+1}
		\bE \left[ \|\hat f_{(i)}-f^\ast\|_\L2p^2\right]
\end{equation*}
By the fact that $f_{(i)}$ is independent of $(x_i, y_i)$ and the identity \eqref{eq:fdec} we obtain
\begin{align*}
	\bE \left[ \|f_{D, \lambda}-f^\ast\|_\L2p^2\right]
	& = \frac 1 {N+1} \sum_{i=1}^{N+1} \bE \left[ \left ( \hat f_{(i)}(x_i) - f^*(x_i) \right)^2 \right ] \\
& = \frac 1 {N+1} \sum_{i=1}^{N+1} \Big( \bE\left[ (\hat f_{(i)}(x_i) -y_i )^2 \right] - \sigma^2\Big) \\
&=\bE \left[ \frac{1}{N+1}\sum_{i=1}^{N+1}\big(\hat f_{(i)}(x_i)-y_i\big)^2 \right] -\sigma^2
\nonumber \\
&=\bE \left [\frac{1}{N+1}\sum_{i=1}^{N+1}\big(\hat f_{(i)}(x_i)-y_i\big)^2-\frac{1}{N+1}\sum_{i=1}^{N+1}\big(\hat g(x_i)-y_i\big)^2\right]\\
& \qquad  + \left( \bE\left[ \frac{1}{N+1}\sum_{i=1}^{N+1}\big(\hat g(x_i)-y_i\big)^2 \right] -\sigma^2\right)
\end{align*}
By Lemma \ref{zhangleaveoneout} and the reproducing property we have
$$|\hat f_{(i)}(x_i) - \hat g(x_i)| \le \kappa \| \hat f_{(i)} - \hat g \|_K \le
\frac{\kappa^2}{N\lambda} |\hat g(x_i) - y_i|.$$
Therefore,
\begin{align}
& \bE \left [\frac{1}{N+1}\sum_{i=1}^{N+1}\big(\hat f_{(i)}(x_i)-y_i\big)^2-\frac{1}{N+1}\sum_{i=1}^{N+1}\big(\hat g(x_i)-y_i\big)^2\right] \nonumber \\
= \ & \bE \left[ \frac{1}{N+1} \sum_{i=1}^{N+1}  \big(\hat f_{(i)}(x_i)-\hat g(x_i) \big) \big(\hat f_{(i)}(x_i)+\hat g(x_i)-2y_i \big) \right ]  \nonumber \\
= \ & \bE \left [ \frac{1}{N+1} \sum_{i=1}^{N+1} \big(\hat f_{(i)}(x_i)-\hat g(x_i) \big)^2+ \frac{2}{N+1} \sum_{i=1}^{N+1} \big(\hat f_{(i)}(x_i)-\hat g(x_i) \big)
\big(\hat g(x_i)-y_i\big) \right ]  \nonumber \\
\leq\ &  \bE \left[ \frac{1}{N+1} \sum_{i=1}^{N+1}
\frac{\kappa^4}{N^2\lambda^2} \big(\hat g(x_i)-y_i
\big)^2+ \frac{2}{N+1} \sum_{i=1}^{N+1}
\frac{\kappa^2}{N\lambda} \big(\hat g(x_i)-y_i
\big)^2 \right]  \nonumber \\
=\ & \bigg(\frac{\kappa^4}{N^2\lambda^2}+\frac{2\kappa^2}{N\lambda}
\bigg) \bE \left[ \frac{1}{N+1}\sum_{i=1}^{N+1}\big(\hat g(x_i)-y_i\big)^2 \right].
\nonumber
\end{align}
Then
$$	\bE \left[ \|f_{D, \lambda}-f^\ast\|_\L2p^2\right]
\le  \bigg(1+ \frac{\kappa^4}{n^2\lambda^2}+\frac{2\kappa^2}{n\lambda}
\bigg) \bE \left[ \frac{1}{n+1}\sum_{i=1}^{n+1}\big(\hat g(x_i)-y_i\big)^2\right]
 -\sigma^2$$
 and the desired conclusion follows by using Lemma \ref{lemma:Q}.
\end{proof}

Note that Theorem \ref{thm:krr} follows immediately from Proposition \ref{krrerrorbound} and Lemma \ref{lem:Qbound}.

\section{Error bound for BCKRR and distributed BCKRR}
\label{sec:bckrr}

  The main result of this section is the following  error bounds
  for BCKRR on a single data set that will be used in combination with
  Theorem \ref{thm:framework} to derive the error bounds for
  its distributed version.

\begin{theorem}
	\label{thm:loo}
	Assume $\sigma^2<\infty$ and $f^*=L_K^r u^*$ for some $0<r\le 2$ and $u^*\in \L2p.$
	\begin{enumerate}
		\item [{\rm (i)}] If $0<r\le \frac 12$, then
		$$\bE \left[ \|f_{D, \lambda}^\sharp-f^\ast\|_\L2p^2\right]
		\leq  2\lambda^{2r} \|u^*\|_\L2p^2 \left(1+ \frac{\kappa^4}{N^2\lambda^2}+\frac{2\kappa^2}{N\lambda}\right)
		+4\sigma^2 \left(\frac{\kappa^4}{N^2\lambda^2}+\frac{\kappa^2}{N\lambda}\right)
		.$$
		\item[{\rm (ii)}]  If $\frac 12 <r\le 1$ and
		$\lambda N \ge 1$, then there is a constant $\tilde c_1>0$ such that
		$$\bE \left[ \|f_{D, \lambda}^\sharp-f^\ast\|_\L2p^2\right]
		\le \tilde c_1 \left( \lambda^{2r} +\frac{\lambda^{r}}{N^{\frac r2 +\frac 14}}   + \frac {\sigma^2} {\lambda N} \right).$$
		
			\item[{\rm (iii)}]  If $ 1 <r\le \frac 32 ,$ $\lambda\le 1 $ and  $\lambda N \ge 1$, then there is a constant $\tilde c_2>0$ such that
		$$\bE \left[ \|f_{D, \lambda}^\sharp-f^\ast\|_\L2p^2\right]
		\le \tilde c_2 \left( \lambda^{2r} +\frac{\lambda}{N^{\frac r2 +\frac 14}}  + \frac{\lambda^{r +\frac 12}}{\sqrt{N}} + \frac {\sigma^2} {\lambda N} \right).$$
		
			\item[{\rm (iv)}]  If $ r\ge \frac 32 ,$ $\lambda\le 1 $ and  $\lambda N \ge 1$, then there is a constant $\tilde c_3>0$ such that
		$$\bE \left[ \|f_{D, \lambda}^\sharp-f^\ast\|_\L2p^2\right]
		\le \tilde c_3 \left( \lambda^{2r}  + \frac{\lambda}{N} + \frac {\sigma^2} {\lambda N} +\frac{\lambda^2}{\sqrt{N}} \right).$$
		
	\end{enumerate}
\end{theorem}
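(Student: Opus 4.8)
The plan is to reproduce, step for step, the leave-one-out scheme that proved Theorem \ref{thm:krr} (via Proposition \ref{krrerrorbound}), the one genuinely new ingredient being a regularization formula for BCKRR that behaves well under deletion of a single point. The two-step operator definition \eqref{eq:fa} is awkward here, so I would first record the recentering regularization formula
$$ f^\sharp_{D,\lambda}=\arg\min_{f\in\calH_K}\Big\{\tfrac1N\sum_{i=1}^N(y_i-f(x_i))^2+\lambda\|f-f_{D,\lambda}\|_K^2\Big\}, $$
verified by checking that its normal equation $(\lambda I+\tfrac1N S^*S)f=\tfrac1N S^*\by+\lambda f_{D,\lambda}$ reproduces \eqref{eq:fa}. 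The content of this identity is that $f^\sharp_{D,\lambda}$ is an ordinary Tikhonov estimator whose penalty is recentered at the KRR solution; its population version $f^\sharp_\lambda$ of \eqref{eq:flsharp} obeys the parallel description $f^\sharp_\lambda=\arg\min_f\{\|f-f^*\|_\L2p^2+\lambda\|f-f_\lambda\|_K^2\}$, and it is precisely this parallel that the quantity $Q^\sharp(\lambda,N)$ is designed to exploit.

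Next I would transplant the leave-one-out apparatus of Section \ref{sec:krr}. On an $(N+1)$-sample let $\hat g^\sharp$ denote BCKRR and $\hat f^\sharp_{(i)}$ the leave-one-out BCKRR estimators, recentered at the KRR estimators $\hat g$ and $\hat f_{(i)}$. Because $\hat f^\sharp_{(i)}$ is independent of $(x_i,y_i)$ and the $\hat f^\sharp_{(i)}$ are identically distributed, the variance split \eqref{eq:fdec} gives, exactly as in Proposition \ref{krrerrorbound},
$$ \bE\big[\|f^\sharp_{D,\lambda}-f^*\|_\L2p^2\big]=\bE\Big[\tfrac1{N+1}\sum_i(\hat f^\sharp_{(i)}(x_i)-y_i)^2\Big]-\sigma^2. $$
The crux is then a leave-one-out stability estimate for $\|\hat f^\sharp_{(i)}-\hat g^\sharp\|_K$. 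This is more delicate than the KRR case because deleting point $i$ disturbs BCKRR twice over: through the dropped loss term and through the shift of the penalty center from $\hat g$ to $\hat f_{(i)}$. I would decouple these by inserting the intermediate estimator that drops point $i$ but retains the center $\hat g$: the center-shift contribution is linear and nonexpansive, hence controlled by $\|\hat f_{(i)}-\hat g\|_K\le\frac{\kappa}{N\lambda}|\hat g(x_i)-y_i|$ from Lemma \ref{zhangleaveoneout}, while the deletion contribution is handled by re-running the convexity argument of Lemma \ref{zhangleaveoneout} with a common nonzero center $\hat g$ (which only shifts the penalty and leaves the computation intact). The product is a bound $\|\hat f^\sharp_{(i)}-\hat g^\sharp\|_K\le\frac{C\kappa}{N\lambda}\big(|\hat g(x_i)-y_i|+|\hat g^\sharp(x_i)-y_i|\big)$.

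With stability available I would expand $(\hat f^\sharp_{(i)}(x_i)-y_i)^2-(\hat g^\sharp(x_i)-y_i)^2$ into a cross term plus a square as in Proposition \ref{krrerrorbound}, reducing everything to the two empirical residuals $\bE[\tfrac1{N+1}\sum(\hat g(x_i)-y_i)^2]$ and $\bE[\tfrac1{N+1}\sum(\hat g^\sharp(x_i)-y_i)^2]$. The first is already $\le Q(\tfrac{N\lambda}{N+1},N+1)+\sigma^2$ by Lemma \ref{lemma:Q}. For the second, testing the recentering objective of $\hat g^\sharp$ against the data-independent minimizer $f^\sharp_{\widetilde\lambda}$ and using $\bE[\tfrac1N\sum(y_i-f(x_i))^2]=\|f-f^*\|_\L2p^2+\sigma^2$ collapses, after cancellation, to $Q^\sharp(\widetilde\lambda,N+1)+\sigma^2$ up to a harmless $\tfrac{N+1}{N}$ factor and a $\tfrac1{N+1}\|f^\sharp_{\widetilde\lambda}-f^*\|_\L2p^2$ remainder; this is exactly why $Q^\sharp$ was defined with the combination $\bE\|f^\sharp_\lambda-f_{D,\lambda}\|_K^2-\bE\|f^\sharp_{D,\lambda}-f_{D,\lambda}\|_K^2$. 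Assembling these produces a BCKRR analog of Proposition \ref{krrerrorbound}, and the four regimes follow by inserting the relevant $Q$ and $Q^\sharp$ bounds: for $0<r\le\frac12$ the bias correction is only a lower-order perturbation, so a direct comparison bounding both residuals by the approximation error alone (in the spirit of Lemma \ref{lem:Qbound}(i), giving $\le 2\lambda^{2r}\|u^*\|_\L2p^2$) recovers the KRR-shaped bound of case (i); for $\frac12<r\le\frac32$ Lemma \ref{lem:Qsbound} supplies $Q^\sharp$ and Lemma \ref{lem:Qbound}(ii)--(iii) supplies $Q$, yielding cases (ii)--(iii); and case (iv) uses the saturated $r=\frac32$ instance of Lemma \ref{lem:Qsbound}, the approximation term $\lambda^{2r}$ from Lemma \ref{lem:approx2} continuing to improve up to $r=2$ while the fluctuation terms freeze.

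I expect the decisive obstacle to be the stability step. Because the penalty center $f_{D,\lambda}$ is itself a function of the data, the clean convexity argument of Lemma \ref{zhangleaveoneout} must be re-derived for a nonzero, data-dependent center and then composed with the center-shift map without degrading the $\frac1{N\lambda}$ rate; getting both residuals $|\hat g(x_i)-y_i|$ and $|\hat g^\sharp(x_i)-y_i|$ to appear with the correct powers is where the argument could most easily go wrong. A secondary, purely computational difficulty is the operator-power accounting packaged in Lemma \ref{lem:Qsbound} — the interaction of Lemma \ref{lem:op}, Lemma \ref{lem:S}, and the approximation estimate of Lemma \ref{lem:approx2} evaluated at the right fractional powers of $L_K$ — since it is this bookkeeping that generates the extra $\lambda^{r+1/2}/\sqrt N$ and $\lambda^2/\sqrt N$ terms that separate the four cases.
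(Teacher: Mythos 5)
Your plan reproduces the paper's argument essentially step for step: the recentering regularization formula for $f^\sharp_{D,\lambda}$ (Proposition \ref{prop:recenter}), the leave-one-out estimators $\hat g^\sharp$ and $\hat f^\sharp_{(i)}$ recentered at $\hat g$ and $\hat f_{(i)}$, the stability bound of Lemma \ref{swleaveoneout}, the $J_1+J_2+J_3$ expansion of Proposition \ref{fhatsharpsample}, and the split between the $Q$-based bound for cases (i)--(ii) and the $Q^\sharp$-based refinement of Proposition \ref{prop:fhatsharp2} for cases (iii)--(iv). The only local deviation is your stability step, which decouples the point deletion from the shift of the penalty center (using that the center map is affine and nonexpansive, then applying Lemma \ref{zhangleaveoneout} to the recentered problem); this is a valid alternative to the paper's single combined convexity computation and yields the identical bound $\frac{\kappa}{N\lambda}\bigl(|\hat g(x_i)-y_i|+|\hat g^\sharp(x_i)-y_i|\bigr)$.
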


The proof of Theorem \ref{thm:loo} will be proved in Section \ref{subsec:loo-bckrr} below. The following corollary on the capacity independent rate of BCKRR follows immediately.

\begin{corollary}
	Under the assumption of Theorem \ref{thm:loo}, by choosing $\lambda = N^{-\frac 1 {1+2r}}$ we have
	$$ \bE \left[ \|f_{D, \lambda}^\sharp-f^\ast\|_\L2p^2\right]  = O\left (N^{-\frac {2r}{1+2r}} \right ).$$
	If, in addition, $\sigma^2=0,$
	 then  we can  choose
	$\lambda = N^{-1}$ to obtain
	$$ \bE \left[ \|f_{D, \lambda}^\sharp-f^\ast\|_\L2p^2\right]  =
	\begin{cases}
	O\left (N^{-2r} \right ),  &  \hbox{ if } 0<r \le \frac 12 ; \\[0.5em]
	O\left (N^{-(\frac {3r}{2} +\frac 14)} \right ),  &  \hbox{ if } \frac 12 <r \le 1; \\[0.5em]
	O\left (N^{-(\frac {r}{2} +\frac 54)} \right ),  &  \hbox{ if } 1 <r \le \frac 32; \\[0.5em]
	O\left (N^{-2} \right ),  &  \hbox{ if }  r\ge \frac 32 .
	\end{cases} $$
\end{corollary}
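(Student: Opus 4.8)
The plan is to obtain both assertions by directly substituting the prescribed values of $\lambda$ into the four regimes of Theorem \ref{thm:loo} and reading off the dominant term. The guiding principle is that the choice of $\lambda$ is designed to balance the leading approximation term $\lambda^{2r}$ against the leading sampling/noise term, so that every remaining term in the error bound turns out to be subdominant; once this is confirmed, the stated rate is immediate.

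For the general statement I would take $\lambda = N^{-1/(1+2r)}$. The two terms that govern the rate are the bias term and the noise term, and one computes directly that
\[
\lambda^{2r} = N^{-\frac{2r}{1+2r}}, \qquad \frac{\sigma^2}{\lambda N} = \sigma^2 N^{\frac{1}{1+2r}-1} = \sigma^2 N^{-\frac{2r}{1+2r}},
\]
so both are exactly of order $N^{-2r/(1+2r)}$. It then remains to verify, in each of the four cases (i)--(iv), that every other term in the corresponding bound of Theorem \ref{thm:loo} decays strictly faster. This reduces to a short sequence of exponent comparisons: in case (ii) one checks $\frac{r}{1+2r}+\frac r2+\frac14 > \frac{2r}{1+2r}$ for $r>\frac12$, in case (iii) one checks the analogous inequalities for $\lambda/N^{r/2+1/4}$ and $\lambda^{r+1/2}/\sqrt N$, and in case (iv) for $\lambda/N$ and $\lambda^2/\sqrt N$. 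Each comparison is elementary, and together they yield the claimed rate $O(N^{-2r/(1+2r)})$.

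For the noise-free statement I would set $\sigma^2 = 0$, which deletes the $\sigma^2/(\lambda N)$ term outright, and take the much smaller $\lambda = N^{-1}$ (note $\lambda N = 1 \ge 1$, so the hypotheses of the relevant cases still hold). Substituting $\lambda = N^{-1}$ turns the bound in each regime into a finite sum of pure powers of $N$, after which I identify the slowest-decaying exponent. In case (i) the prefactor $1 + \kappa^4/(N^2\lambda^2) + 2\kappa^2/(N\lambda)$ collapses to the constant $1 + \kappa^4 + 2\kappa^2$, leaving $O(N^{-2r})$. In case (ii) the bound becomes $N^{-2r} + N^{-(3r/2+1/4)}$, and since $2r > 3r/2 + 1/4$ for $r > \frac12$ the second term dominates, giving $O(N^{-(3r/2+1/4)})$. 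In case (iii) it becomes $N^{-2r} + N^{-(r/2+5/4)} + N^{-(r+1)}$, where for $1 < r \le \frac32$ the exponent $r/2 + 5/4$ is the smallest, giving $O(N^{-(r/2+5/4)})$. In case (iv) it becomes $N^{-2r} + N^{-2} + N^{-5/2}$, where for $r \ge \frac32$ the exponent $2$ is the smallest, giving $O(N^{-2})$.

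The computations are entirely routine; the only point demanding care is the bookkeeping of exponents in the intermediate regime (iii), where three distinct power terms compete and one must confirm that $r/2+5/4$ remains the dominant exponent throughout $1 < r \le \frac32$ rather than only at an endpoint.
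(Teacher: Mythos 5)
Your proposal is correct and is exactly the argument the paper intends: the corollary is stated as following immediately from Theorem \ref{thm:loo}, and your substitution of $\lambda=N^{-\frac{1}{1+2r}}$ (resp.\ $\lambda=N^{-1}$ with $\sigma^2=0$) followed by the exponent comparisons is that omitted verification, carried out correctly in all four regimes. No gaps.
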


BCKRR suffers from a saturation effect when $r\ge 2$, which relaxed the saturation effect of KRR  \cite{guo2017learning}. For noise free learning, however, we see the result for BCKRR is the same as that for KRR in Corollary \ref{cor:krr} and the saturation occurs when $r= \frac 32$. A plausible interpretation of this phenomenon is that the rate $O(N^{-2})$ might be  the fastest rate a regression learning algorithm can achieve and thus bias correction cannot help further improve the rate.

Theorem \ref{thm:2} can now be proved by combining
Theorem \ref{thm:framework} and Theorem \ref{thm:loo}.
The proof is similar to that of Theorem \ref{DKRRtheorem}
in Section \ref{sec:krr}. We omit the details.

\subsection{Two alternative perspectives for bias correction}
\label{sec:perspects}

Recall the BCKRR estimator is defined by a two step procedure. Although it admits
an explicit expression and an operator representation, neither is suitable for leave one out analysis, if not impossible.
In this section  we show that BCKRR can be interpreted by two alternative perspectives,  fitting the residual and recentering regularization. The latter represents BCKRR as a Tikhonov regularization scheme and plays an essential role in the leave one out analysis of BCKRR in the next subsection.  These two perspectives on bias correction
may also be of independent interest by themselves; see the discussions Section \ref{sec:conclusion}.

Suppose KRR estimator $f_{D,\lambda}$ fit the data by trading off the fitting error and model complexity.
If we further fit the residuals by a function $\hat h$ and add it to $f_{D,\lambda}$,
the resulted function will have smaller fitting error. In other words,  the bias is reduced.  The following proposition tells that
BCKRR defined in \eqref{eq:fa} is equivalent to this process.

\begin{proposition}
	\label{prop:fitr}
	Let $r _i=y_i - f_{D,\lambda}(x_i)$ be the residual and
	$\hat h$ is the KRR estimator obtained by fitting the residual, i.e.,
	$$ \hat h = \arg\min_{g\in \H_K} \left\{
	\frac 1N \sum_{i=1}^N (r_i-g(x))^2 + \lambda \|g\|_K^2\right\}.$$
	Then we have $f^\sharp_{D,\lambda} = f_{D,\lambda} + \hat h.$
\end{proposition}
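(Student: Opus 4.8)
The plan is to reduce both sides of the claimed identity to the operator representation \eqref{eq:krrop} and verify the equality by a short piece of operator algebra; the crux is the defining \emph{normal equation} of KRR. Concretely, I would observe that $\hat h$ is itself a KRR solution, namely the one obtained by feeding the residual vector $\mathbf r = (r_1,\ldots,r_N)^\top = \by - Sf_{D,\lambda}$ as the response data into the same regularized least squares problem on the same inputs $\{x_1,\ldots,x_N\}$. Hence \eqref{eq:krrop} (with $\by$ replaced by $\mathbf r$) gives at once
$$\hat h = \frac 1N\left(\lambda I+\frac 1N S^*S\right)^{-1}S^*(\by - Sf_{D,\lambda}).$$
The goal is then to show the right-hand side equals the bias-correction term $\lambda\left(\lambda I+\frac 1N S^*S\right)^{-1}f_{D,\lambda}$ appearing in \eqref{eq:fa}, after which adding $f_{D,\lambda}$ to both sides finishes the proof.

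The key step is to rewrite $\frac 1N S^*(\by - Sf_{D,\lambda})$ using the relation satisfied by $f_{D,\lambda}$. Multiplying \eqref{eq:krrop} through by $\left(\lambda I+\frac 1N S^*S\right)$ yields the normal equation
$$\left(\lambda I+\frac 1N S^*S\right)f_{D,\lambda} = \frac 1N S^*\by.$$
Substituting this for $\frac1N S^*\by$ gives
$$\frac 1N S^*(\by - Sf_{D,\lambda}) = \frac 1N S^*\by - \frac 1N S^*S\,f_{D,\lambda} = \left(\lambda I+\frac 1N S^*S\right)f_{D,\lambda} - \frac 1N S^*S\,f_{D,\lambda} = \lambda f_{D,\lambda}.$$
Inserting this back, the two outer resolvents $\left(\lambda I+\frac 1N S^*S\right)^{-1}$ combine cleanly and I obtain $\hat h = \lambda\left(\lambda I+\frac 1N S^*S\right)^{-1}f_{D,\lambda}$, which is exactly the second summand in \eqref{eq:fa}. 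Therefore $f_{D,\lambda} + \hat h = f^\sharp_{D,\lambda}$.

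I do not expect a genuine obstacle here; the argument is essentially a one-line resolvent cancellation once the residual-fitting problem is recognized as a KRR instance. The only points requiring care are bookkeeping items: ensuring the $1/N$ normalization in $\frac 1N S^*S$ is carried consistently (so that the same operator $\lambda I+\frac1N S^*S$ appears in $f_{D,\lambda}$, in $\hat h$, and in the bias term of \eqref{eq:fa}), and noting that the response passed to the second KRR is literally the residual vector so that the sampling operator $S$ acts identically in both problems. Equivalently, one could phrase the whole computation at the level of coefficient vectors via the representer theorem and the linear system $(\lambda N\II+\mathbb K)\bc=\by$, but the operator form above is the most transparent and avoids manipulating the kernel matrix directly.
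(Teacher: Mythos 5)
Your proposal is correct and follows essentially the same route as the paper: both identify $\hat h$ as the KRR solution with response vector $\mathbf r=\by-Sf_{D,\lambda}$, apply the operator representation \eqref{eq:krrop}, and reduce $\frac 1N S^*\mathbf r$ to $\lambda f_{D,\lambda}$ by a one-line resolvent identity (the paper does this by simplifying $S^*\by-\frac 1N S^*S(\lambda I+\frac 1N S^*S)^{-1}S^*\by$ directly, you via the equivalent normal equation). The two computations are algebraically identical.
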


\begin{proof} Denote by $\mathbf r = (r_1, r_2, \ldots, r_n)^\top$ the column vector of
	residuals.  By the operator representation of the KRR estimator we have
	\begin{equation}
	\label{eq:goper}
	\hat h = \frac 1 N \left(\lambda I + \frac 1 N S^*S \right)^{-1} S^* \mathbf r.
	\end{equation}
	It can be easily verified that
	\begin{align*}
	S^* \mathbf r & = S^* (\by  - S f_{D,\lambda})
	=  S^*\by  - \frac 1 N S^*S  \left(\lambda I + \frac 1 N S^*S\right)^{-1} S^* \by  \\
	& =  \lambda \left( \lambda I +\frac 1 N S^*S\right)^{-1} S^*\by  = \lambda N f_{D,\lambda}.
	\end{align*}
	Plugging it into \eqref{eq:goper} we obtain
	$$ \hat h = \lambda \left(\lambda I + \frac 1 N S^*S\right)^{-1} f_{D,\lambda}
	= f^\sharp_{D,\lambda} - f_{D,\lambda}.$$
	This proves the conclusion.
\end{proof}

Recall that KRR is somewhat equivalent to search a minimizer of the fitting error
in a ball of radius $\frac 1 {\sqrt \lambda}$ centered at the zero function.
The following proposition explains BCKRR as a re-search for a
minimizer of the fitting error in a ball centered at $f_{D,\lambda}.$
This is somewhat equivalent to increasing the searching region to reduce the fitting error  and thus implement bias reduction.

\begin{proposition}
	\label{prop:recenter}
	We have
	\begin{equation}
	f^\sharp_{D,\lambda }= \arg\min_{f\in \H_K} \left \{ \frac{1}{N}
	\sum_{i=1}^N (y_i -f(x_i))^2+\lambda \|f-f_{D,\lambda} \|_K ^2  \right\}.
	\label{description}
	\end{equation}
\end{proposition}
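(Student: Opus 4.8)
The plan is to reduce the recentered regularization problem \eqref{description} to the residual-fitting problem already solved in Proposition~\ref{prop:fitr} by a change of variables. First I would note that the functional
$$J(f) = \frac{1}{N}\sum_{i=1}^N (y_i - f(x_i))^2 + \lambda \|f - f_{D,\lambda}\|_K^2$$
is strictly convex in $f$, since the loss term is convex and $\lambda\|f - f_{D,\lambda}\|_K^2$ is strictly convex; hence $J$ has a unique minimizer and the $\arg\min$ in \eqref{description} is well defined.

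Next, because $f_{D,\lambda}$ is a fixed element of $\H_K$, the affine map $f \mapsto g := f - f_{D,\lambda}$ is a bijection of $\H_K$ onto itself, so minimizing $J$ over $f \in \H_K$ is the same as minimizing over $g \in \H_K$. Substituting $f(x_i) = g(x_i) + f_{D,\lambda}(x_i)$ and using the residual $r_i = y_i - f_{D,\lambda}(x_i)$, I get $y_i - f(x_i) = r_i - g(x_i)$ and $\|f - f_{D,\lambda}\|_K^2 = \|g\|_K^2$, so that
$$J(f) = \frac{1}{N}\sum_{i=1}^N (r_i - g(x_i))^2 + \lambda \|g\|_K^2.$$
This is precisely the functional whose minimizer is the residual-fitting estimator $\hat h$ of Proposition~\ref{prop:fitr}.

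Finally, Proposition~\ref{prop:fitr} gives $f^\sharp_{D,\lambda} = f_{D,\lambda} + \hat h$, equivalently $\hat h = f^\sharp_{D,\lambda} - f_{D,\lambda}$. Undoing the substitution, the unique minimizer of $J$ over $f$ is $f = f_{D,\lambda} + \hat h = f^\sharp_{D,\lambda}$, which is exactly \eqref{description}.

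I do not expect a genuine obstacle here: the whole content is the observation that recentering the penalty at $f_{D,\lambda}$ is a translation that restores the standard norm penalty $\|g\|_K^2$ while shifting the targets to the residuals $r_i$, after which Proposition~\ref{prop:fitr} closes the argument. If I preferred a self-contained route that does not cite Proposition~\ref{prop:fitr}, I would instead verify the first-order optimality condition directly: setting the Fr\'echet derivative of $J$ to zero yields $\frac{1}{N}S^*(\by - Sf) = \lambda(f - f_{D,\lambda})$, and using the identity $S^*(\by - S f_{D,\lambda}) = \lambda N f_{D,\lambda}$ (a consequence of the operator representation \eqref{eq:krrop}) one checks that $f = f^\sharp_{D,\lambda}$ from \eqref{eq:fa} satisfies this equation, while strict convexity supplies uniqueness. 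Either way the remaining computation is routine.
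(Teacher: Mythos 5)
Your proposal is correct and is essentially the paper's own argument: the paper likewise rewrites $y_i - f(x_i) = r_i - (f(x_i) - f_{D,\lambda}(x_i))$ and invokes Proposition~\ref{prop:fitr} together with the optimality of $\hat h$, differing only in that it presents the substitution as a direct inequality chain showing $f^\sharp_{D,\lambda}$ attains the minimum rather than as an explicit bijective change of variables. Your remark on strict convexity for uniqueness is a slight sharpening of the paper's wording but changes nothing of substance.
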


\begin{proof} By the fact $y_i = f_{D,\lambda}(x_i) + r_i$ and Proposition \ref{prop:fitr},
	for all $f\in\H_K$, we have
	\begin{align*}
	\frac 1 N \sum_{i=1}^N (y_i -  f^\sharp_{D,\lambda }(x_i))^2 + \lambda\|f^\sharp_{D,\lambda} - f_{D,\lambda}\|_K^2
	& = \frac 1N  \sum_{i=1}^N (r_i- \hat h(x_i) )^2 + \lambda \|\hat h \|_K^2 \\
	& \le \frac 1N \sum_{i=1}^N (r_i - [f(x_i) - f_{D,\lambda}(x_i)])^2 + \lambda \|f- f_{D,\lambda}\|_K^2 \\
	& = \frac 1N \sum_{i=1}^N (y_i - f(x_i))^2 + \lambda \|f-f_{D,\lambda}\|_K^2.
	\end{align*}
	Therefore, $f^\sharp_{D,\lambda}$ is a minimizer of the recentered risk functional
	$\frac 1N \sum_{i=1}^N (y_i - f(x_i))^2 + \lambda \|f-f_{D,\lambda}\|_K^2.$
	Since the recentered risk functional is convex in $f$, the minimizer is unique.
	This proves \eqref{description}.
\end{proof}

Proposition \ref{prop:recenter} writes the BCKRR as a  recentering Tikhonov regularization scheme. It enables the use of convex analysis to derive the
leave one out error bound for BCKRR.

\subsection{Leave one out analysis of BCKRR}
\label{subsec:loo-bckrr}

We now perform a leave one analysis of BCKRR by the similar framework as that we have done in Section \ref{sec:krr}. The result will be used to prove Theorem \ref{thm:loo}.  To this end, recall the definitions of $\widetilde{D}$, $\hat f_{(i)}$ and $\hat g$ in Section \ref{subsec:krr}. We define
\begin{align}
&\hat f_{(i)}^\sharp=\arg\min_{f \in {{\cal H}_K}}
\bigg \{\frac{1}{N}\sum_{{j=1}\atop{j\neq i}}^{N+1} \big(f(x_j)-y_j
\big)^2+\lambda\|f- \hat f_{(i)}\|_K^2 \bigg\} \label{f-isharp} \\
&\hat g^\sharp=\arg\min_{f \in {{\cal H}_K}} \bigg \{\frac{1}{N}\sum_{j=1}^{N+1}
\big(f(x_j)-y_j \big)^2+\lambda \|f-\hat g\|_K^2 \bigg\}. \label{gsharp}
\end{align}
Notice that that $\hat f_{(N+1)}^\sharp = f^\sharp_{D,\lambda}$ and $\hat g^\sharp
= f^\sharp_{\widetilde D, \tilde\lambda}$ where again $\tilde\lambda = \frac{N\lambda}{N+1}.$
\vspace{0.2cm}

\begin{lemma}\label{swleaveoneout}
	For all $1\leq i\leq n+1$, there is
$$\|\hat f_{(i)}^\sharp-\hat g^\sharp\|_K \leq
\frac{\kappa}{N\lambda}\Big[|\hat g (x_i)-y_i|+|\hat g^\sharp(x_i)-y_i|\Big].$$
\end{lemma}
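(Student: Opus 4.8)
The plan is to reprise the convexity-based leave-one-out argument that established Lemma \ref{zhangleaveoneout}, but now running it on the recentering regularization formulation of BCKRR supplied by Proposition \ref{prop:recenter}. By that proposition $\hat f_{(i)}^\sharp$ and $\hat g^\sharp$ are precisely the minimizers of the recentered Tikhonov functionals in \eqref{f-isharp} and \eqref{gsharp}, anchored at the centers $\hat f_{(i)}$ and $\hat g$ respectively. I would set $\triangle=\hat f_{(i)}^\sharp-\hat g^\sharp$ and, as before, write $\calE_i(f)=\sum_{j\neq i}(f(x_j)-y_j)^2$. First I would record the two convexity interpolation inequalities for $\calE_i$: the elementary bound $h(a+t(b-a))-h(a)\le t(h(b)-h(a))$ for convex $h$ and $t\in[0,1]$, applied to the pairs $(\hat g^\sharp,\hat f_{(i)}^\sharp)$ and $(\hat f_{(i)}^\sharp,\hat g^\sharp)$, yields exactly the analogues of \eqref{eq:loo-1} and \eqref{eq:loo-2}.

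Next I would exploit the two minimization properties. Testing the minimality of $\hat f_{(i)}^\sharp$ against the competitor $\hat f_{(i)}^\sharp-t\triangle$, expanding $\|\hat f_{(i)}^\sharp-\hat f_{(i)}\|_K^2-\|\hat f_{(i)}^\sharp-t\triangle-\hat f_{(i)}\|_K^2 = 2t\langle \hat f_{(i)}^\sharp-\hat f_{(i)},\triangle\rangle_K-t^2\|\triangle\|_K^2$, using the interpolation inequality, and dividing by $t$, I expect to obtain
$$\lambda\Big(2\langle \hat f_{(i)}^\sharp-\hat f_{(i)},\triangle\rangle_K-t\|\triangle\|_K^2\Big)\le \frac1N\big(\calE_i(\hat g^\sharp)-\calE_i(\hat f_{(i)}^\sharp)\big).$$
Symmetrically, testing the minimality of $\hat g^\sharp$ against $\hat g^\sharp+t\triangle$ produces a companion inequality; here, because $\hat g^\sharp$ is defined through the \emph{full} sum $\sum_{j=1}^{N+1}$ rather than $\calE_i$, an extra boundary term $\frac1{tN}\big[(\hat g^\sharp(x_i)+t\triangle(x_i)-y_i)^2-(\hat g^\sharp(x_i)-y_i)^2\big]$ survives on the right, exactly as the term in $\hat g(x_i)-y_i$ appeared in \eqref{eq:df2}.

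The decisive step is to add the two inequalities. The $\calE_i$ terms cancel, and the quadratic terms combine through the identity $\langle \hat f_{(i)}^\sharp-\hat f_{(i)},\triangle\rangle_K-\langle \hat g^\sharp-\hat g,\triangle\rangle_K=\|\triangle\|_K^2-\langle \hat f_{(i)}-\hat g,\triangle\rangle_K$, so the left side becomes $2\lambda(1-t)\|\triangle\|_K^2-2\lambda\langle \hat f_{(i)}-\hat g,\triangle\rangle_K$. Letting $t\to0$, bounding the boundary term by the reproducing inequality $|\triangle(x_i)|\le\kappa\|\triangle\|_K$, applying Cauchy--Schwarz to the cross term, and dividing by $\|\triangle\|_K$ (the case $\|\triangle\|_K=0$ being trivial) leaves $\lambda\|\triangle\|_K\le\lambda\|\hat f_{(i)}-\hat g\|_K+\frac{\kappa}{N}|\hat g^\sharp(x_i)-y_i|$. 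The main obstacle, and the only genuinely new feature over the KRR proof, is the center-difference term $\langle \hat f_{(i)}-\hat g,\triangle\rangle_K$, which does not vanish because the two recentered problems are anchored at different functions; I would dispatch it by invoking the already-proved KRR stability bound $\|\hat f_{(i)}-\hat g\|_K\le\frac{\kappa}{N\lambda}|\hat g(x_i)-y_i|$ of Lemma \ref{zhangleaveoneout}. Substituting this and dividing by $\lambda$ gives the claimed estimate $\|\hat f_{(i)}^\sharp-\hat g^\sharp\|_K\le\frac{\kappa}{N\lambda}\big[|\hat g(x_i)-y_i|+|\hat g^\sharp(x_i)-y_i|\big]$.
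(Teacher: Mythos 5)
Your proposal is correct and follows essentially the same route as the paper's own proof: the same convexity interpolation inequalities, the same two competitor tests on the recentered functionals, the same cancellation upon adding, and the same disposal of the center-difference term $\langle \hat f_{(i)}-\hat g,\triangle\rangle_K$ via Cauchy--Schwarz and Lemma \ref{zhangleaveoneout}. No substantive differences to report.
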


\begin{proof} Recall
 $$\calE_{i}(f) = \sum_{{j=1}\atop {j\neq i}}^{N+1} \big(f(x_j)-y_j
\big)^2. $$ Let  $ \triangle f_i = \hat f^\sharp_{(i)} - \hat g^\sharp.$ By the convexity of the square function,
for any $t \in (0,1)$,  we have
\begin{equation}
\calE_{i} (\hat g^\sharp + t\triangle f_i) - \calE_{i} (\hat g^\sharp) \leq
t\big[\calE_{i} (\hat f^\sharp_{(i)}) - \calE_{i} (\hat g^\sharp) \big],
\label{eq:Ei1}
\end{equation}
and
\begin{equation}
\calE_{i} (\hat f^\sharp_{(i)} - t \triangle f_i) - \calE_{i}(\hat f^\sharp_{(i)}) \leq
t \big[\calE_{i} (\hat g^\sharp)-\calE_{i} (\hat f^\sharp_{(i)} ) \big].
\label{eq:Ei2}
\end{equation}
By the definition of $\hat f^\sharp_{(i)} $ in \eqref{f-isharp}, for any $t\in (0, 1)$
\begin{align*}
\frac{1}{N} \calE_{i} (\hat f^\sharp_{(i)}) + \lambda \|\hat f^\sharp_{(i)} -\hat f_{(i)}\|_K^2
 \leq \frac{1}{N} \calE_{i} (\hat f^\sharp_{(i)}-t\triangle f_i)
  +\lambda \|\hat f^\sharp_{(i)} -t\triangle f_i - \hat f_{(i)}\|_K^2.
\end{align*}
This yields
\begin{align}
\lambda\left\{ 2t \left\langle \hat f^\sharp_{(i)}-\hat f_{(i)},\, \triangle f_i \right \rangle_K
-t^2\|\triangle f_i\|_K^2 \right\}
 \leq \frac 1N \left\{\calE_{i} (\hat f^\sharp_{(i)} - t \triangle f_i) - \calE_{i}(\hat f^\sharp_{(i)})\right\} .
  \label{formular2}
\end{align}

Similarly, by the definition of $\hat g^\sharp$ in \eqref{gsharp}, we have for any $t\in(0,1)$
\begin{align*}
& \frac 1 N \calE_{i}(\hat g^\sharp) + \frac{1}{N}(\hat g^\sharp(x_i)-y_i)^2
+  \lambda \|\hat g^\sharp-\hat g\|_K^2  \\
\le  & \frac{1}{N} \calE_{i}  (\hat  g^\sharp + t\triangle f_i)
+\frac{1}{N} \big((1-t) \hat g^\sharp(x_i)+t \hat f^\sharp_{(i)} (x_i)-y_i\big)^2
+\lambda \|\hat g^\sharp - \hat g + t\triangle f_i \|_K^2.
\end{align*}
Thus,
\begin{align}
&\lambda\left\{ -2t \left\langle \hat g^\sharp - \hat g, \triangle f_i \right\rangle_K -t^2\|\triangle f_i\|_K^2 \right\}
  \leq \frac{1}{N} \left\{ \calE_{i} (\hat g^\sharp +  t \triangle f_i)-\calE_{i} (\hat g^\sharp)\right\} \nonumber \\
& \qquad \qquad +\frac{t}{N} \left(\hat f^\sharp_{(i)} (x_i)-\hat g^\sharp (x_i)\right)
\left ( (2-t) \hat g^\sharp(x_i)+ t \hat f^\sharp_{(i)}(x_i)-2y_i\right). \label{fomular3}
\end{align}
Adding \eqref{formular2} and \eqref{fomular3} together and applying \eqref{eq:Ei1}, \eqref{eq:Ei2}, we obtain
\begin{align*}
2(1-t) \|\triangle f_i\|_K^2 & +2\left\langle \hat g - \hat f_{(i)},\, \triangle f_i\right\rangle_K
\le  \frac{1}{N\lambda}\left( \hat f^\sharp_{(i)} (x_i)-\hat g^\sharp (x_i)\right)
\left((2-t) \hat g^\sharp(x_i) + t \hat f^\sharp_{(i)}(x_i)-2y_i\right).
\end{align*}
Letting $ t \to 0,$ we have
\begin{align*}
\|\triangle f_i\|_K^2 + \left\langle \hat g - \hat f_{(i)},\, \triangle f_i \right\rangle_K \leq \frac{1}{N\lambda}\left( \hat f^\sharp_{(i)}(x_i)-\hat g^\sharp(x_i)\right)
\left ( \hat g^\sharp(x_i)-y_i\right)= \frac{1}{N\lambda} \triangle f_i(x_i)
\left ( \hat g^\sharp(x_i)-y_i\right).
 \end{align*}
 Therefore,
\begin{align*}
\|\triangle f_i\|_K^2  \leq  \| \hat g - \hat f_{(i)} \|_K \|\triangle f_i\|_K +
\frac{\kappa}{N\lambda} \|\triangle f_i\|_K \big| \hat g^\sharp (x_i)-y_i\big|,
\end{align*}
which implies
\begin{align*}
\|\triangle f_i\|_K  \leq  \| \hat g - \hat f_{(i)} \|_K  +
\frac{\kappa}{N\lambda} \big| \hat g^\sharp (x_i)-y_i\big|,
\end{align*}
By Lemma \ref{zhangleaveoneout},
we obtain the desired bound.
\end{proof}

\begin{proposition}\label{fhatsharpsample}
	We have
\begin{align*}
\bE \left[ \| f^\sharp_{D,\lambda} -f^*\|_{\L2p}^2\right]
 \leq  \left(1 + \frac{4\kappa^2}{N \lambda} + \frac{4\kappa^4}{N^2 \lambda^2} \right) Q\Big(\sfrac{N \lambda}{N+1} ,\  N+1\Big)
+ \left(\frac{4\kappa^2}{N \lambda} + \frac{4\kappa^4}{N^2 \lambda^2} \right)  \sigma^2.
\end{align*}
\end{proposition}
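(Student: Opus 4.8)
The plan is to follow the leave-one-out template already used for KRR in the proof of Proposition \ref{krrerrorbound}, but now built around the recentered estimators $\hat f^\sharp_{(i)}$ and $\hat g^\sharp$ and the variational description \eqref{gsharp}. First I would exploit the two structural facts recorded just above Lemma \ref{swleaveoneout}: $f^\sharp_{D,\lambda} = \hat f^\sharp_{(N+1)}$, and the estimators $\hat f^\sharp_{(1)}, \ldots, \hat f^\sharp_{(N+1)}$ are identically distributed by symmetry of the sample. This lets me average over $i$ and write
\[
\bE\left[\|f^\sharp_{D,\lambda} - f^*\|_\L2p^2\right] = \frac{1}{N+1}\sum_{i=1}^{N+1}\bE\left[\|\hat f^\sharp_{(i)} - f^*\|_\L2p^2\right].
\]
Since $\hat f^\sharp_{(i)}$ is built from $\hat f_{(i)}$ and the observations other than $(x_i,y_i)$, it is independent of $(x_i,y_i)$; applying the identity \eqref{eq:fdec} to each summand then turns the right-hand side into $\bE\big[\frac{1}{N+1}\sum_i (\hat f^\sharp_{(i)}(x_i)-y_i)^2\big] - \sigma^2$, exactly as in the KRR case.

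Next I would compare the leave-one-out empirical risk with that of the full-sample estimator $\hat g^\sharp$ through the expansion $(\hat f^\sharp_{(i)}(x_i)-y_i)^2 = (\hat g^\sharp(x_i)-y_i)^2 + (\hat f^\sharp_{(i)}(x_i)-\hat g^\sharp(x_i))^2 + 2(\hat f^\sharp_{(i)}(x_i)-\hat g^\sharp(x_i))(\hat g^\sharp(x_i)-y_i)$. Lemma \ref{swleaveoneout} and the reproducing property give
\[
|\hat f^\sharp_{(i)}(x_i)-\hat g^\sharp(x_i)| \le \frac{\kappa^2}{N\lambda}\big(|\hat g(x_i)-y_i| + |\hat g^\sharp(x_i)-y_i|\big),
\]
and the elementary inequalities $(a+b)^2 \le 2a^2+2b^2$ and $2ab \le a^2+b^2$ then bound the two extra terms by combinations of $(\hat g(x_i)-y_i)^2$ and $(\hat g^\sharp(x_i)-y_i)^2$ carrying the prefactors $\frac{1}{N\lambda}$ and $\frac{1}{N^2\lambda^2}$.

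The decisive step, and the one that departs from the KRR proof, is to control both of the resulting empirical risks by the single quantity $Q(\frac{N\lambda}{N+1},N+1)$. For the $\hat g$-risk I would cite Lemma \ref{lemma:Q} directly. For the $\hat g^\sharp$-risk I would use the optimality in the recentering characterization \eqref{gsharp}: testing its objective against the competitor $f=\hat g$, whose penalty vanishes, yields $\sum_{j}(\hat g^\sharp(x_j)-y_j)^2 \le \sum_{j}(\hat g(x_j)-y_j)^2$, so the $\hat g^\sharp$-empirical risk is dominated by the $\hat g$-empirical risk and hence is also at most $Q(\frac{N\lambda}{N+1},N+1)+\sigma^2$. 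I expect this comparison to be the main obstacle, in the sense that without it one is stranded with an empirical risk for $\hat g^\sharp$ that has no independent bound; it is precisely this inequality that allows the final estimate to be written using only the KRR quantity $Q$ rather than a separate bias-corrected analogue. Collecting the coefficients $\frac{4\kappa^2}{N\lambda}$ and $\frac{4\kappa^4}{N^2\lambda^2}$ and subtracting the leftover $\sigma^2$ then gives the asserted bound.
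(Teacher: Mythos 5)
Your proposal matches the paper's proof essentially step for step: the same leave-one-out symmetrization and use of \eqref{eq:fdec}, the same three-term expansion around $\hat g^\sharp$ (the paper's $J_1+J_2+J_3$), the same application of Lemma \ref{swleaveoneout} with the reproducing property, and the same key comparison of the $\hat g^\sharp$ empirical risk against that of $\hat g$ via optimality in \eqref{gsharp} with competitor $f=\hat g$, followed by Lemma \ref{lemma:Q}. The argument is correct and no further comment is needed.
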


\begin{proof}
Note that $\hat f^\sharp_{(i)}$, $1\le i\le N+1,$ are identically distributed,
and $\hat f^\sharp_{(i)}$ is independent of the observation $(x_i, y_i).$
By  the fact $f^\sharp_{D,\lambda } = \hat f^\sharp_{(N+1)},$ we have
\begin{align}
\bE \left[  \|f^\sharp_{D, \lambda} -f^*\|_{\L2p}^2 \right] &
=\bE\left[\frac 1 {N+1} \sum_{i=1}^{N+1}  \left\| \hat f^\sharp_{(i)} - f^*\right \|_{\L2p}^2 \right]  \nonumber \\
&= \bE\left[ \frac 1 {N+1} \sum_{i=1}^{N+1} (\hat f^\sharp_{(i)} (x_i) - y_i)^2 \right] - \sigma^2  \nonumber \\
& = \left\{\bE\left[ \frac 1 {N+1} \sum_{i=1}^{N+1} (\hat g^\sharp(x_i)-y_i)^2 \right] - \sigma^2\right\}  \nonumber  \\
& \qquad +
\bE \left[ \frac 1 {N+1} \sum_{i=1}^{N+1}
\left(\hat f ^\sharp _{(i)}(x_i) - \hat g^\sharp (x_i)\right)^2\right ] \nonumber  \\
& \qquad + 2 \bE \left[ \frac 1 {N+1} \sum_{i=1}^{N+1}
\left(\hat f ^\sharp _{(i)}(x_i) - \hat g^\sharp (x_i)\right)
\left( \hat g^\sharp (x_i) -y_i\right)\right] \nonumber \\
&:=J_1 + J_2+J_3. \label{eq:sharpdec}
\end{align}
Recall Lemma \ref{lemma:Q} tells that
\begin{align}
\bE \left[ \frac 1 {N+1} \sum_{i=1}^{N+1} (\hat g(x_i)-y_i)^2 \right]
\leq Q\Big(\sfrac{N \lambda}{N+1} ,\  N+1\Big)+\sigma^2.
\label{eq:gerror}
\end{align}
By the definition of $\hat g^\sharp$ we also obtain
\begin{align}
\bE \left[\frac 1 {N+1} \sum_{i=1}^{N+1} (\hat g^\sharp (x_i)-y_i)^2 \right]
 \le \bE\left[ \frac 1 {N+1} \sum_{i=1}^{N+1} (\hat g (x_i)-y_i)^2  \right]\leq Q\Big(\sfrac{N \lambda}{N+1} ,\  N+1\Big)+\sigma^2.
\label{eq:gaerror}
\end{align}
Therefore,  we can bound $J_1$ by
\begin{equation}
\label{eq:J1}
J_1 \le  Q\Big(\sfrac{N\lambda}{N+1} ,\  N+1\Big).
\end{equation}
To estimate $J_2$, we apply Lemma \ref{swleaveoneout} and obtain
\begin{align}
\left(\hat f ^\sharp _{(i)}(x_i) - \hat g ^\sharp (x_i)\right)^2
& \le  \kappa^2 \| \hat f ^\sharp _{(i)} - \hat g^\sharp \| _K ^2  \le   \kappa^2 \left(\frac{\kappa}{N\lambda}|\hat g(x_i)-y_i|
+\frac{\kappa}{N\lambda}|\hat g^\sharp(x_i)-y_i|\right)^2 \nonumber \\[1ex]
& \le 2\kappa^2 \left(\frac{\kappa^2}{N^2\lambda^2}(\hat g(x_i)-y_i)^2
+\frac{\kappa^2}{N^2\lambda^2}(\hat g^\sharp(x_i)-y_i)^2
\right). \nonumber
\end{align}
Then by \eqref{eq:gerror} and \eqref{eq:gaerror} we obtain
\begin{equation}
J_2 \le  \frac {4\kappa^4}{N^2\lambda^2 } \left( Q\Big(\sfrac{N\lambda}{N+1} ,\  N+1\Big)+\sigma^2 \right).
\label{eq:J2basic}
\end{equation}
For $J_3$, by Lemma \ref{swleaveoneout} again, we have
\begin{align}
& 2 \left|\left(\hat f ^\sharp _{(i)}(x_i) - \hat g^\sharp (x_i)\right)
\left( \hat g^\sharp (x_i) -y_i\right)\right|
\le 2 \kappa \| \hat f ^\sharp _{(i)} - \hat g^\sharp \|_K |\hat g^\sharp(x_i)-y_i| \nonumber \\[1ex]
\le\ &
2\kappa \left(\frac{\kappa}{N\lambda}|\hat g(x_i)-y_i|
+\frac{\kappa}{N\lambda}|\hat g^\sharp(x_i)-y_i|\right)
|\hat g^\sharp(x_i)-y_i| \nonumber\\[1ex]
\le\ &  \frac{\kappa^2}{N\lambda}(\hat g(x_i)-y_i)^2
+ \frac{3\kappa^2}{N\lambda}(\hat g^\sharp(x_i)-y_i)^2
\nonumber
\end{align}
By \eqref{eq:gerror} and \eqref{eq:gaerror} again, we have
\begin{align}
J_3 & \le \frac {4\kappa^2}{N \lambda}
 \left( Q\Big(\sfrac{N\lambda}{N+1} ,\  N+1\Big)+\sigma^2 \right).
 \label{eq:J3basic}
\end{align}
The desired error bound follows by combining the estimation for $J_1$, $J_2$ and $J_3$.
\end{proof}

Theorem \ref{thm:loo} part (i) and part (ii) now follow immediately from Proposition \ref{fhatsharpsample} and Lemma \ref{lem:Qbound}.

\medskip

If $r\ge 1$, the bound in Proposition \ref{fhatsharpsample} is still true but the estimation for $Q(\lambda, N)$ is not sufficient to prove sharp bounds for BCKRR. Instead, we will estimate $J_1$ in \eqref{eq:sharpdec} alternatively
 to obtain the following error bound for BCKRR, which together with by Lemma \ref{lem:Qsbound} allows to derive the sharp bounds in
 Theorem \ref{thm:loo} part (iii) and part (iv).

\begin{proposition}\label{prop:fhatsharp2}
	If $r\ge 1$, we have
	\begin{align*}
	\bE \left[ \| f^\sharp_{D,\lambda} -f^*\|_{\L2p}^2\right]
	\leq Q^\sharp\Big(\sfrac{N\lambda}{N+1} ,\  N+1\Big)
	+ \left( \frac{4\kappa^2}{N \lambda} + \frac{4\kappa^4}{N^2 \lambda^2} \right) \left\{ Q\Big(\sfrac{N \lambda}{N+1} ,\  N+1\Big) + \sigma^2\right\}.
	\end{align*}
\end{proposition}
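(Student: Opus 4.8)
The plan is to recycle the decomposition $\bE[\|f^\sharp_{D,\lambda}-f^*\|_\L2p^2]=J_1+J_2+J_3$ established in \eqref{eq:sharpdec}, and to retain verbatim the estimates of $J_2$ and $J_3$ from the proof of Proposition \ref{fhatsharpsample}: those rely only on the leave one out stability bound of Lemma \ref{swleaveoneout} together with \eqref{eq:gerror} and \eqref{eq:gaerror}, and they already produce the contribution $\bigl(\frac{4\kappa^2}{N\lambda}+\frac{4\kappa^4}{N^2\lambda^2}\bigr)\bigl(Q(\sfrac{N\lambda}{N+1},N+1)+\sigma^2\bigr)$. The only term that must be re-estimated is
\[
J_1 = \bE\left[\frac{1}{N+1}\sum_{i=1}^{N+1}(\hat g^\sharp(x_i)-y_i)^2\right]-\sigma^2,
\]
for which the crude bound $J_1\le Q(\sfrac{N\lambda}{N+1},N+1)$ used in Proposition \ref{fhatsharpsample} discards too much when $r\ge 1$; we replace it by a bound in terms of $Q^\sharp$.

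The key idea is to exploit the recentering regularization viewpoint. By Proposition \ref{prop:recenter}, the estimator $\hat g^\sharp$ is the unique (by convexity) minimizer of the recentered Tikhonov functional \eqref{gsharp}; rescaling its objective by $N/(N+1)$, $\hat g^\sharp$ minimizes over $f\in\H_K$ the functional $\frac{1}{N+1}\sum_{j=1}^{N+1}(y_j-f(x_j))^2+\tilde\lambda\|f-\hat g\|_K^2$, where $\hat g=f_{\widetilde D,\tilde\lambda}$ and $\tilde\lambda=\sfrac{N\lambda}{N+1}$. Comparing its value against the \emph{data-independent} population target $f_{\tilde\lambda}^\sharp$ from \eqref{eq:flsharp} gives, for every realization of $\widetilde D$,
\[
\frac{1}{N+1}\sum_{j=1}^{N+1}(y_j-\hat g^\sharp(x_j))^2+\tilde\lambda\|\hat g^\sharp-\hat g\|_K^2 \le \frac{1}{N+1}\sum_{j=1}^{N+1}(y_j-f_{\tilde\lambda}^\sharp(x_j))^2+\tilde\lambda\|f_{\tilde\lambda}^\sharp-\hat g\|_K^2 .
\]
Taking expectations and using that $f_{\tilde\lambda}^\sharp$ is independent of $\widetilde D$, the identity \eqref{eq:fdec} collapses the first term on the right into $\|f_{\tilde\lambda}^\sharp-f^*\|_\L2p^2+\sigma^2$. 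Rearranging then yields
\[
J_1 \le \|f_{\tilde\lambda}^\sharp-f^*\|_\L2p^2 + \tilde\lambda\left(\bE\bigl[\|f_{\tilde\lambda}^\sharp-\hat g\|_K^2\bigr]-\bE\bigl[\|\hat g^\sharp-\hat g\|_K^2\bigr]\right),
\]
which, on recalling $\hat g=f_{\widetilde D,\tilde\lambda}$ and $\hat g^\sharp=f^\sharp_{\widetilde D,\tilde\lambda}$, is exactly $Q^\sharp(\sfrac{N\lambda}{N+1},N+1)$ by the definition of $Q^\sharp$. Adding this to the retained bounds for $J_2$ and $J_3$ delivers the claimed inequality.

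The main obstacle is conceptual rather than computational: one must recognize that it is the recentering (Tikhonov) form of Proposition \ref{prop:recenter}, and not the two step or operator definition of BCKRR, that permits a clean variational comparison against a fixed test function, and one must select precisely $f_{\tilde\lambda}^\sharp$ as the comparison so that the residual term matches $Q^\sharp$ after invoking \eqref{eq:fdec}. Once this comparison is in place the remaining work is bookkeeping, namely matching the pair $(\hat g,\hat g^\sharp)$ on the $(N+1)$-point sample $\widetilde D$ with $(f_{D,\lambda},f_{D,\lambda}^\sharp)$ in the definition of $Q^\sharp$ and tracking the normalization $\tilde\lambda=\sfrac{N\lambda}{N+1}$.
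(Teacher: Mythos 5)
Your proposal is correct and follows essentially the same route as the paper's proof: the paper likewise keeps the $J_2$, $J_3$ estimates from Proposition \ref{fhatsharpsample} and re-bounds $J_1$ by comparing the recentered Tikhonov functional at $\hat g^\sharp$ against the fixed test function $f_{\tilde\lambda}^\sharp$, then applies \eqref{eq:fdec} to identify the result with $Q^\sharp(\sfrac{N\lambda}{N+1}, N+1)$. No gaps.
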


\begin{proof}
Let $\tilde \lambda = \frac {N\lambda }{N+1}$ and $f_{\tilde \lambda}^\sharp$  be defined by \eqref{eq:flsharp} associated to the parameter $\tilde \lambda$. Since $\hat g^\sharp = f^\sharp_{\widetilde D, \tilde \lambda} ,$  we can regard $f_{\tilde \lambda}^\sharp$ as the sample limit of $\hat g.$
	Recall the error decomposition in \eqref{eq:sharpdec}.
	By \eqref{description}  and a similar process to the proof of Lemma \ref{lemma:Q}, we have
	\begin{align}
	&\bE\left[\frac 1 {N+1} \sum_{i=1}^{N+1} (\hat g^\sharp (x_i)-y_i)^2 \right]\nonumber \\
	&\le \frac N {N+1}\bE \left[\frac 1 {N} \sum_{i=1}^{N+1}  (\hat g^\sharp (x_i)-y_i)^2  + \lambda\|\hat g^\sharp - \hat g\|_K^2 \right]- \frac {N\lambda} {N+1}\bE \left[ \|\hat g^\sharp - \hat g\|_K^2 \right]  \nonumber \\
	& \le \bE \left[ \frac 1 {N+1} \sum_{i=1}^{N+1} (f_{\tilde \lambda} ^\sharp (x_i)-y_i)^2  \right]+\tilde\lambda  \bE\Big[\|f_{\tilde\lambda}^\sharp - \hat g\|_K^2-\|\hat g^\sharp - \hat g\|_K^2\Big] \nonumber \\
	& \le \|f_{\tilde\lambda}^\sharp - f^\ast\|_\L2p^2+\sigma^2+\tilde\lambda \bE \Big[\|f_{\tilde \lambda}^\sharp - \hat g\|_K^2-\|\hat g^\sharp - \hat g\|_K^2\Big].
	\nonumber
	\end{align}
	Hence we can bound $J_1$ as
	\begin{equation*}
	J_1 \le  \|f_{\tilde \lambda}^\sharp - f^\ast\|_\L2p^2+\tilde \lambda  \bE \Big[\|f_{\tilde \lambda}^\sharp - \hat g\|_K^2-\|\hat g^\sharp - \hat g\|_K^2\Big]
	= Q^\sharp\Big(\sfrac{N\lambda }{N+1}, N+1\Big).
	\end{equation*}
	Combining this with the estimations in \eqref{eq:J2basic} for $J_2$ and
	\eqref{eq:J3basic} for $J_3$ we obtain the desired bound.
\end{proof}

\section{Conclusions and discussions}
 \label{sec:conclusion}

 In this paper, we first proposed a general framework to analyze the performance of response weighted distributed regression algorithms.
 Then we conducted leave one analyses of KRR and BCKRR, which lead to sharp error bounds and capacity independent optimal rates for both approaches. The error bounds factored in the impact of unexplained variance of the response variable and hence are able to be used in combination with aforementioned framework to deduce sharp error bounds and optimal learning rates for distributed KRR and distributed BCKRR even when the kernel is imperfect in the sense that the true regression function does not lie in the associated reproducing kernel Hilbert space.

 Our analysis involves two interesting byproducts.
 The first one is the super fast rates for noise free learning.
 To our best knowledge, rates that are faster than $O(N^{-1})$
 have been observed for classification learning in some special situations
 \cite{smale2007learning} but have never been observed for regression learning in the literature. In this paper we first show that
 they are also possible for regression learning.

The second one is the two alternative perspectives to reformulate BCKRR.
They are not only  critical for us to analyze the performance of
BCKRR in this study, but also
 shed light on the design of bias corrected algorithms
 for other machine learning tasks. Recall that
 the original design of BCKRR in \cite{wu2017bias}
heavily depends on the explicit operator representation of KRR. In machine learning, most regression or classification algorithms are solved by
an iterative optimization process and no explicit analytic solution exists.
It is thus difficult or even impractical to characterize  the bias.
However, the idea of fitting residuals may apply to all regression problems
while recentering regularization can apply to all regularization schemes.
Regression learning usually adopts a loss function $L(|y-f(x)|)$ and minimizes the regularized empirical loss:
$$\hat f_L = \arg\min_{f\in\H_K} \left\{\frac 1 N \sum_{i=1}^N L(|y_i-f(x_i)|) + \lambda \|f\|_K^2\right\}.$$
Define the residuals $r_i = y_i-\hat f_L(x_i)$ and fit the residual by a function $\hat g_L$ such that
$$\hat g_L =  \arg\min_{g\in\H_K} \left\{\frac 1 N \sum_{i=1}^N L(|r_i-g(x_i)|) + \lambda \|g\|_K^2\right\}.$$
 The bias corrected estimator with respect to the loss $L$
 can then be defined as $\hat f^\sharp _L = \hat f_L + \hat g _L.$
 In binary classification one usually uses a loss function of form $L(yf(x))$ where
 $y\in\{1, -1\},$ for instance, the hingle loss in support vector machines
 or the logistic loss in logistic regression. Because $y_i$ are labels and residuals are meaningless,
 it is not appropriate to implement bias correction by fitting  residuals in binary classification.
 Instead, we can still use recentering regularization.
 Namely, suppose the regularized binary classification algorithm
 estimates a classifier $\hat f_L$ by
 $$\hat f_L = \arg\min_{f\in\H_K} \left\{\frac 1 N \sum_{i=1}^N L(y_i f(x_i)) + \lambda \|f\|_K^2\right\}.$$
We define the bias corrected classifier by
$$\hat f^\sharp _L = \arg\min _{f\in\H_K} \left\{\frac 1 N \sum_{i=1}^N L(y_i f(x_i)) + \lambda \|f - \hat f_L\|_K^2\right\}.$$
It would be interesting to study these bias corrected algorithms  in the future
and investigate their theoretical properties as well as empirical application domains.

%

\bibliographystyle{abbrv}
\bibliography{drloo}

\end{document}